\newtheorem{theorem}{Theorem}[section]
\newtheorem{proposition}[theorem]{Proposition}
\newtheorem{example}[theorem]{Example}
\newtheorem{lemma}[theorem]{Lemma}
\newtheorem{corollary}[theorem]{Corollary}
\def\bfw{\mathbf{w}}
\newcommand*\samethanks[1][\value{footnote}]{\footnotemark[#1]}
\definecolor{wjs}{RGB}{0,0,200}
\title{Unified Enhancement of Privacy Bounds for Mixture Mechanisms via $f$-Differential Privacy}
\author{%
  Chendi Wang\thanks{Equal contributions.}\\
  Wharton Statistics and Data Science Department\\
  University of Pennsylvania \&\\
  Shenzhen Research Institute of Big data \\
\texttt{chendi@wharton.upenn.edu} \\
  \And
  Buxin Su\samethanks \\
  Department of Mathematics\\
  University of Pennsylvania\\
  \texttt{subuxin@sas.upenn.edu} \\
    \AND
  Jiayuan Ye\\
  Department of Computer Science\\
  National University of Singapore \\
  Singapore \\
  \texttt{jiayuan@comp.nus.edu.sg} \\
  \And
  Reza Shokri\\
  Department of Computer Science\\
  National University of Singapore \\
  Singapore \\
  \texttt{reza@comp.nus.edu.sg} \\
  \And
  Weijie J.~Su \\
  Wharton Statistics and Data Science Department\\
  University of Pennsylvania\\
  \texttt{suw@wharton.upenn.edu} \\
}
\begin{document}

\maketitle

\begin{abstract}


Differentially private (DP) machine learning algorithms incur many sources of randomness, such as random initialization, random batch subsampling, and shuffling. However, such randomness is difficult to take into account when proving differential privacy bounds because it induces mixture distributions for the algorithm's output that are difficult to analyze. 
This paper focuses on improving privacy bounds for shuffling models and one-iteration  differentially private gradient descent (DP-GD) with random initializations using $f$-DP. 
We derive a closed-form expression of the trade-off function for shuffling models that outperforms the most up-to-date results based on $(\epsilon,\delta)$-DP.
Moreover, we investigate the effects of random initialization on the privacy of one-iteration DP-GD. 
Our numerical computations of the trade-off function indicate that random initialization can enhance the privacy of DP-GD.
Our analysis of $f$-DP guarantees for these mixture mechanisms relies on an inequality for trade-off functions introduced in this paper. This inequality implies the joint convexity of $F$-divergences. 
Finally, we study an $f$-DP analog of the advanced joint convexity of the hockey-stick divergence related to $(\epsilon,\delta)$-DP  and apply it to analyze the privacy of mixture mechanisms.


\end{abstract}

\section{Introduction}
Differential privacy (DP, \cite{dwork2006differential,DBLP:conf/tamc/Dwork08}) is a rigorous mathematical framework for ensuring data privacy and has become a cornerstone of privacy-preserving data analysis over the past two decades. DP has found widespread applications in various data science fields, such as machine learning \cite{DBLP:journals/jmlr/ChaudhuriMS11, DBLP:conf/focs/BassilyST14, xu2023binary}, query answering \cite{DBLP:conf/stoc/DworkNRRV09, DBLP:conf/nips/DongSZ21}, and synthetic data generation \cite{DBLP:journals/corr/abs-2001-09700,DBLP:journals/tods/ZhangCPSX17, DBLP:journals/jpc/McKennaMS21,li2023statistical}.
A randomized mechanism is considered differentially private if the outputs of two neighboring datasets that differ in at most one element are indistinguishable from each other. 
The closeness of these outputs can be measured in various ways, resulting in the definition of $(\epsilon,\delta)$-DP in \cite{dwork2006differential} and its various relaxations.


The distinguishability between the outputs can be measured by statistical divergences.
For example, $(\epsilon,\delta)$-DP is associated with the so-called hockey-stick divergence \cite{DBLP:journals/tit/SasonV16}.
Another divergence relevant to differential privacy is the R\'enyi divergence \cite{DBLP:journals/corr/DworkR16,DBLP:journals/corr/abs-1206-2459} which leads to R\'enyi DP \cite{DBLP:conf/csfw/Mironov17, DBLP:conf/stoc/BunDRS18} and concentrated DP \cite{DBLP:journals/iacr/BunS16}.
In addition to divergence-based DP, a hypothesis testing perspective on differential privacy was proposed in \cite{Wasserman2010statistical}. More recently, \cite{dong2019gaussian} established $f$-DP 
for differential privacy where the privacy is measured by the trade-off function of type I and type II errors.

In real-world applications of differential privacy, including differentially private machine learning, it is common to analyze the privacy budget of mechanisms that involve mixture distributions, where the mixture is introduced by stochastic components in the algorithm. Examples of such mechanisms include sub-sampled mechanisms \cite{balle2018privacy,Zhu2019Poisson,Wang2019subsampled,mironov2019r}, shuffled mechanisms \cite{cheu2019shuffling,feldman2022hiding,feldman2023stronger}, and variants of the differentially private stochastic gradient descent (DP-SGD) algorithm~\cite{abadi2016DPSGD,Bu2020Deep,Kairouz2021practical,DBLP:conf/nips/AndrewTMR21,ye2022differentially} that involves random initialization and multiple rounds of mini-batch sampling. 
Recently, privacy amplification by iteration \cite{DBLP:conf/focs/FeldmanMTT18} has drawn much attention as it can be used to analyze the privacy bounds for DP-SGD  \cite{ye2022differentially,DBLP:conf/nips/AltschulerT22}) which leads to tighter privacy bounds compared to classical analysis based on the composition theorem \cite{DBLP:journals/corr/OhV13,wang2022analytical,DBLP:conf/icml/ZhengDLS20}.


While mixture mechanisms are essential in differentially private machine learning, the absence of an $f$-DP guarantee for their analysis remains a significant challenge. Moreover, existing divergence-based DP bounds for most of these mechanisms are not tight. This is primarily because the complex distribution resulting from the mixture makes it challenging to accurately quantify privacy guarantees. In order to illustrate this perspective, we consider the examples of shuffling models and DP gradient descent (DP-GD) with random initialization, as follows.

\begin{itemize}
    \item In shuffling models, each user's data record is locally privatized using a local DP algorithm \cite{DBLP:conf/pods/EvfimievskiGS03}.
    Subsequently, a curator shuffles the dataset containing all users' data. The shuffling procedure introduces additional mixtures of binomial noise \cite{feldman2022hiding}, thereby potentially amplifying the privacy provided by the local randomizer. Shuffling is commonly employed in machine learning algorithms for batch generation \cite{ye2022differentially,DBLP:journals/corr/abs-2109-12298}.
    To deal with this mixture, Hoeffding’s inequality was used in previous literature \cite{feldman2022hiding,feldman2023stronger} that leads to the loss of information.
    Using $f$-DP in this paper, we derive an exact analytical trade-off function for the mixture of binomial distributions which is sharp.

    \item In deep learning, random initialization is usually adopted in the stochastic gradient descent to enhance the performance of deep neural networks \cite{DBLP:conf/icml/SutskeverMDH13}.
    Intuitively, the inherent randomness introduced by initialization should contribute to the privacy amplification of DP-GD. However, R\'enyi differential privacy (DP) falls short in quantitatively measuring this randomness, even when applied to the simplest linear model. In this paper, we demonstrate how $f$-DP can effectively evaluate and quantify this inherent randomness from initialization.
\end{itemize}



\textbf{Our contributions.}
This paper makes a two-fold contribution. 
Firstly, we propose a unified theory to analyze the privacy of mixture mechanisms within the framework of $f$-DP.
Precisely, we derive an $f$-DP inequality for mixture distributions which implies the joint convexity of $F$-divergences for any convex $F$. We name this result the "joint concavity of trade-off functions", as it is a lower bound for trade-off functions. 
The tightness of the joint concavity is also investigated.
Moreover, we propose the "advanced joint concavity of trade-off functions" which is an $f$-DP analog of the advanced joint convexity of the hockey-stick divergence and results in sharper bounds in certain cases.

Building on our inequality, we have refined the privacy analysis of both shuffling models and DP-GD with random initialization using
$f$-DP. Specifically, for shuffling models, we obtain trade-off functions in a closed-form representation, leading to tighter bounds compared to existing state-of-the-art results based on $(\epsilon,\delta)$-DP. 
As for DP-GD, given the challenges in the trajectory analysis of multi-step iterations, we have chosen to explore a more straightforward one-iteration DP-GD. We demonstrate that using random initialization significantly enhances the privacy of the output from a single iteration.




\section{Preliminaries on differential privacy}
Let $\mathcal{D} = \{z_i\}_{i=1}^n\subset \mathcal{Z}$ be a fixed dataset of size $n$.
Consider a randomized algorithm $\mathcal{A}:\mathcal{Z}^n \rightarrow \mathcal{S}$ that maps a dataset $\mathcal{D}$ to $\mathcal{A}(\mathcal{D}) $ in some probability space $\mathcal{S}$.
Differential privacy requires that the change of one element in a dataset has a restricted impact on the output of $\mathcal{A}.$
Mathematically, we say $\mathcal{A}$ satisfies $(\epsilon,\delta)$-DP for some $\epsilon\geq 0$ and $0\leq \delta \leq 1$ if 
\begin{align*}
    \mathbb{P}[\mathcal{A}(\mathcal{D}_0)\in S]\leq e^{\epsilon}\mathbb{P}[\mathcal{A}(\mathcal{D}_1)\in S]  + \delta,
\end{align*}
for any event $S\in\mathcal{S}$ and any neighboring datasets $\mathcal{D}_0$ and $\mathcal{D}_1$.
When $\delta=0$, we simply call $(\epsilon,0)$-DP as $\epsilon$-DP.
Based on the definition, we see that for small values of $\epsilon$ and $\delta$, it is challenging to distinguish between $\mathcal{D}_0$ and $\mathcal{D}_1$ based on the outputs of $\mathcal{A}(\mathcal{D}_0)$ and $\mathcal{A}(\mathcal{D}_1)$, as the distribution of $\mathcal{A}(\mathcal{D}_0)$ closely resembles that of $\mathcal{A}(\mathcal{D}_1)$.

The definition of $(\epsilon,\delta)$-DP corresponds to the hockey-stick divergence.
Let $P$ and $Q$ be two distributions with  probability density functions (pdfs) $p$ and $q$, respectively.
The hockey-stick divergence between $P$ and $Q$ is defined by
$
    H_{\gamma}(P\|Q) = \int \left( p(x) - \gamma q(x)\right)_{+} dx
$
for $\gamma\geq 1$ with $(\cdot)_{+} = \max\{0,\cdot\}$.
With a little bit abuse of notations, in this paper, we define the divergence (or the trade-off function) between two random variables as the divergence (or the trade-off function) between their distributions.
Then, a mechanism $\mathcal{A}$ is $(\epsilon,\delta)$-DP if and only if $H_{e^{\epsilon}}(\mathcal{A}(\mathcal{D}_0)\| \mathcal{A}(\mathcal{D}_1)) \leq \delta$ for any neighboring datasets $\mathcal{D}_0$ and $\mathcal{D}_1$, which also implies $H_{e^{\epsilon}}(\mathcal{A}(\mathcal{D}_1)\| \mathcal{A}(\mathcal{D}_0)) \leq \delta.$

The R\'enyi-DP (RDP) is defined based-on the R\'enyi divergence. The R\'enyi divergence of order $\widetilde{\alpha}>1$ between $P$ and $Q$ is given by
\begin{align*}
  R_{\widetilde{\alpha}}(P\|Q) = \frac{1}{\widetilde{\alpha} - 1} \log \int \left(\frac{p(x)}{q(x)} \right)^{\widetilde{\alpha}} q(x) dx.
\end{align*}
For ${\widetilde{\alpha}} =1$ or $+\infty$, $R_1$ or $R_{\infty}$ is the limit of $R_{\widetilde{\alpha}}$ as $\widetilde{\alpha}$ tends to $1$ or $+\infty$.
A mechanism $\mathcal{A}$ is said to satisfy $(\widetilde{\alpha},\epsilon)$-RDP if $R_{\widetilde{\alpha}}(\mathcal{A}(\mathcal{D}_0)\|\mathcal{A}(\mathcal{D}_1))\leq \epsilon$ for any neighboring $\mathcal{D}_0$ and $\mathcal{D}_1$.

The distinguishability between $\mathcal{A}(\mathcal{D}_0)$ and $\mathcal{A}(\mathcal{D}_1)$ can be quantified using hypothesis testing, which aligns with the concept of $f$-DP.
Consider a hypothesis testing problem $H_0: P \hbox{ v.s. } H_1:Q$ and a rejection rule $\phi\in[0,1].$ We define the type I error as $\alpha_{\phi} = \mathbb{E}_{P}[\phi]$, which is the probability that we reject the null hypothesis $H_0$ by mistake. 
The type II error $\beta_{\phi}= 1 - \mathbb{E}_{Q}[\phi]$ is the probability that we accept the alternative $H_1$ wrongly.

The trade-off function $T(P,Q)$ is the minimal type II error at level $\alpha$ of the type I error, that is,
\begin{align*}
    T(P,Q)(\alpha) = \inf_{\phi}\{\beta_{\phi}: \alpha_{\phi}\leq \alpha\}.
\end{align*}
We say a mechanism $\mathcal{A}$ satisfies $f$-DP if $
    T(\mathcal{A}(\mathcal{D}_0), \mathcal{A}(\mathcal{D}_1)) \geq f
$
for any neighboring datasets $\mathcal{D}_0$ and $\mathcal{D}_1$. In particular, $\mathcal{A}$ is said to satisfy $\mu$-GDP if it is $G_{\mu}$-DP, where $G_\mu(x) = \Phi(\Phi^{-1}(1-x) - \mu)$, for $\mu\geq 0$, is the Gaussian trade-off function with $\Phi$ being the cumulative distribution function (cdf) of $\mathcal{N}(0,1).$
$\mathcal{A}$ is considered to be more private if the corresponding trade-off function takes larger values.
When $\mathcal{A}$ achieves perfect privacy and $\mathcal{A}(\mathcal{D}_0)$ and $\mathcal{A}(\mathcal{D}_1)$ become completely indistinguishable, the trade-off function is $\mathrm{Id}(x) = 1 - x$. Consequently, for any trade-off function $f$, we have $f \leq \mathrm{Id}$.

We say a trade-off function  is symmetric if $T(P,Q) = T(Q,P)$.
Note that a trade-off function $f$ may not necessarily be symmetric.
But one can symmetrize it as shown in \cite{dong2019gaussian}. The symmetrization of a trade-off function will be used when we analyze the shuffled mechanisms.

\section{Joint concavity of trade-off functions}
\label{sec:joint-convex}
Let $\{ P_i \}_{i=1}^m$ and $\{Q_i\}_{i=1}^m$ be two sequences of probability distributions.
Denote the probability density functions (pdfs) of $P_i$ and $Q_i$ as $p_i$ and $q_i$, respectively.
Consider the mixture distributions $P_\bfw$ and $Q_\bfw$ with pdfs $p_\bfw = \sum_{i=1}^m w_i p_i$ and $q_\bfw = \sum_{i=1}^m w_iq_i$,
where the weight $\bfw = (w_1,\cdots,w_m)$ is such that $w_i\geq 0$ and $\sum_{i=1}^m w_i = 1.$
The following lemma is to bound the trade-off function $T(P_\bfw,Q_\bfw).$
Upon finalizing this paper, we noted that Lemma \ref{lemma:inequality} and Proposition \ref{prop:symmetry} appeared independently in another paper \cite[Theorem 8]{wang2022differentially}, where they served different applications.

\begin{lemma}[Joint concavity of trade-off functions] \label{lemma:inequality}
For two mixture distributions $P_\bfw$ and $Q_\bfw$, it holds
\begin{align*}
    T(P_\bfw,Q_\bfw)(\alpha(t,c)) \geq \sum_{i=1}^m w_i T(P_i,Q_i)(\alpha_i(t,c)),
\end{align*}
where $\alpha_i(t,c) = \mathbb{P}_{X\sim P_i}\left[ \frac{q_i}{p_i}(X) > t\right] + c \mathbb{P}_{X\sim P_i}\left[ \frac{q_i}{p_i}(X) =t\right]$ is the type I error for testing $P_i \hbox{ v.s. } Q_i$ using the likelihood ratio test  and $\alpha (t,c) = \sum_{i=1}^m w_i \alpha_i(t,c).$
\end{lemma}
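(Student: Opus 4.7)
The plan is to derive the inequality from the Neyman-Pearson fundamental lemma applied componentwise, then aggregate the resulting bounds with the mixture weights $w_i$.

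First I would fix an arbitrary test $\phi:\cX\to[0,1]$ and, for each $i$, let $\phi_i^{\star}$ denote the likelihood-ratio test that rejects when $q_i/p_i > t$, rejects with probability $c$ when $q_i/p_i = t$, and accepts otherwise. By Neyman-Pearson, this $\phi_i^{\star}$ attains type I error $\alpha_i(t,c)$ against $P_i$ and type II error $T(P_i,Q_i)(\alpha_i(t,c))$ against $Q_i$. The standard Neyman-Pearson argument delivers the pointwise inequality
$$\bigl(\phi(x) - \phi_i^{\star}(x)\bigr)\bigl(q_i(x) - t\,p_i(x)\bigr) \leq 0,$$
since $\phi_i^{\star}$ equals $1$ where $q_i > t p_i$ and $0$ where $q_i < t p_i$, while the boundary $\{q_i = t p_i\}$ contributes zero regardless of $c$. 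Writing $\alpha_i(\phi) = \int \phi\, p_i\, dx$ and $\beta_i(\phi) = 1 - \int \phi\, q_i\, dx$ for the errors of $\phi$ on the $i$th subproblem, integrating and rearranging gives
$$\beta_i(\phi) \geq T(P_i,Q_i)(\alpha_i(t,c)) + t\bigl(\alpha_i(t,c) - \alpha_i(\phi)\bigr).$$

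Next I would aggregate. Since $p_\bfw = \sum_i w_i p_i$ and $q_\bfw = \sum_i w_i q_i$, the errors of $\phi$ on the mixture decompose as $\alpha_\phi = \sum_i w_i \alpha_i(\phi)$ and $\beta_\phi = \sum_i w_i \beta_i(\phi)$. Multiplying the componentwise bound by $w_i$ and summing over $i$ yields
$$\beta_\phi \geq \sum_{i=1}^m w_i T(P_i,Q_i)(\alpha_i(t,c)) + t\bigl(\alpha(t,c) - \alpha_\phi\bigr).$$
For any $\phi$ admissible at level $\alpha(t,c)$, i.e.\ $\alpha_\phi \leq \alpha(t,c)$, the residual is nonnegative because $t \geq 0$ as a threshold of a nonnegative likelihood ratio (the case $t < 0$ makes the stated bound trivial since each $\alpha_i(t,c) = 1$ and $T(P_i,Q_i)(1) = 0$). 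Dropping the residual and taking the infimum over such $\phi$ gives the claimed lower bound on $T(P_\bfw,Q_\bfw)(\alpha(t,c))$.

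The main obstacle is essentially bookkeeping at the boundary $\{q_i = t p_i\}$: one must confirm that the pointwise Neyman-Pearson inequality is insensitive to the randomization parameter $c$, which holds automatically because the factor $q_i - t p_i$ vanishes on that set. A conceptual subtlety worth flagging is that $(t,c)$ defines the likelihood-ratio test for each individual subproblem and not for the mixture $(P_\bfw,Q_\bfw)$, so the bound cannot be obtained by a single Neyman-Pearson application to the mixture itself; the componentwise slack $t\bigl(\alpha_i(t,c) - \alpha_i(\phi)\bigr)$ is precisely what allows the weighted aggregation to close cleanly.
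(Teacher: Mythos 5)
Your proof is correct, but it follows a genuinely different route from the paper's. The paper proves the lemma by an information-processing argument: it introduces the joint distribution $(X|I,I)$ that additionally reveals the mixture index, observes that forgetting $I$ is a data-independent post-processing so that $T(P_\bfw,Q_\bfw) \geq T\bigl((X|I,I),(Y|I,I)\bigr)$, and then computes the latter trade-off function via Neyman--Pearson on the augmented distributions (which decomposes over $i$ because the index distribution is shared). You instead argue entirely at the level of the original testing problem: you apply the pointwise Neyman--Pearson inequality $(\phi - \phi_i^{\star})(q_i - t\,p_i) \leq 0$ to each component, integrate, and aggregate the resulting affine lower bounds $\beta_i(\phi) \geq T(P_i,Q_i)(\alpha_i(t,c)) + t\bigl(\alpha_i(t,c) - \alpha_i(\phi)\bigr)$ using the linear decomposition $\alpha_\phi = \sum_i w_i\alpha_i(\phi)$, $\beta_\phi = \sum_i w_i\beta_i(\phi)$. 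The residual $t(\alpha(t,c) - \alpha_\phi) \geq 0$ then closes the argument for admissible $\phi$. The two proofs are related at a deep level --- your family of tests $\{\phi_i^{\star}\}$ is exactly the optimal test one would use after observing $I$, and the slack term quantifies the cost of not observing it --- but your version is more self-contained: it avoids invoking the post-processing theorem for trade-off functions and instead reproves the needed inequality from scratch via the standard Lagrangian/Neyman--Pearson mechanism. The paper's version is more conceptually transparent about \emph{what} the right-hand side represents (the trade-off curve of the index-revealed pair), which it later reuses in Proposition \ref{prop:symmetry}, while yours is closer to an elementary calculus-of-tests argument. Both are valid; your observation that the $\{q_i = t\,p_i\}$ boundary is insensitive to $c$, and that the bound is trivial for $t<0$, are correct and necessary bookkeeping.
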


The main idea of the proof is to make the mixture distributions more distinguishable by releasing the indices. 
Precisely, for $X\sim P_\bfw$ and $Y\sim Q_\bfw$, let $X|I$ be a random variable such that $X|I=i\sim P_i$ with $I$ being the indices, i.e., $\mathbb{P}[I = i] = w_i$.
Let $(X|I,I)$ be a random variable where we observe both $X|I$ and the indices $I$.
Then, the right hand side of Lemma \ref{lemma:inequality} is the trade-off function $T((X|I,I), (Y|I,I))$ between two joint distributions. This is a lower bound for the trade-off function between mixture distributions because
$(X|I,I) \rightarrow X$ is a data-independent post-processing procedure that only removes the observation of indices $I$, and DP is immune to post-processing \cite{dong2019gaussian, DBLP:journals/fttcs/DworkR14}.


Under the setting of $f$-DP, we usually require that the trade-off function is symmetric.
The symmetry of the trade-off function in Lemma \ref{lemma:inequality} is guaranteed by the following proposition.
\begin{proposition}
\label{prop:symmetry}
    Suppose that for each $i$, $T(P_i,Q_i)$ is a symmetric trade-off function. Then the trade-off function $T((X|I, I) , (Y|I,I))$ is symmetric.
\end{proposition}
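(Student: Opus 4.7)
My strategy is to exploit the fact that hypothesis testing on the joint space $\mathcal{X} \times [m]$ decouples across the index $I$, thereby reducing symmetry of the joint trade-off function to the per-component symmetry of each $T(P_i, Q_i)$.

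The first step I would take is to observe that any (possibly randomized) rejection rule $\phi$ on $\mathcal{X} \times [m]$ is determined by its slice-wise restrictions $\phi_i(x) = \phi(x, i)$. Since $(X|I, I)$ has marginal $\mathbb{P}[I = i] = w_i$ with conditional law $P_i$ (and analogously $Q_i$ for $(Y|I, I)$), the errors decompose additively as
\[
\alpha(\phi) = \sum_{i=1}^m w_i \, \mathbb{E}_{P_i}[\phi_i], \qquad \beta(\phi) = \sum_{i=1}^m w_i \bigl(1 - \mathbb{E}_{Q_i}[\phi_i]\bigr).
\]
Optimizing each $\phi_i$ at a fixed slice-wise level $\alpha_i = \mathbb{E}_{P_i}[\phi_i]$ yields $T(P_i, Q_i)(\alpha_i)$ by the very definition of the trade-off function, and then optimizing over the allocation $\{\alpha_i\}$ produces the variational formula
\[
T\bigl((X|I, I),\, (Y|I, I)\bigr)(\alpha) = \inf \Bigl\{ \sum_{i=1}^m w_i \, T(P_i, Q_i)(\alpha_i) \;:\; \alpha_i \in [0,1],\; \sum_{i=1}^m w_i \alpha_i \leq \alpha \Bigr\}.
\]

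Applying the same derivation to the swapped testing problem produces an identical expression with each $T(P_i, Q_i)$ replaced by $T(Q_i, P_i)$. By the per-component symmetry hypothesis these functions coincide, so the two minimization problems are the same and hence $T((X|I,I),(Y|I,I)) = T((Y|I,I),(X|I,I))$. I do not anticipate any substantive obstacle; the only point that warrants a little care is justifying the two-level infimum decomposition, which hinges on monotonicity of each $T(P_i, Q_i)$ so that the equality constraint $\mathbb{E}_{P_i}[\phi_i] = \alpha_i$ can be relaxed to the inequality $\mathbb{E}_{P_i}[\phi_i] \leq \alpha_i$ without loss.
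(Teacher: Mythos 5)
Your proof is correct and follows a genuinely different route from the paper's.

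The paper proves symmetry via the Neyman--Pearson parametrization established in its Proposition B.2: it writes each slice-wise error pair as $\alpha_i(t) = (f_i')^{-1}(-t)$, uses slice-wise symmetry to produce candidate reflection thresholds $\tilde t_i$, and then applies the inverse function theorem to show $\tilde t_i = 1/t$ for every $i$, so the reflected point on the mixture curve is reached with a single common threshold. This requires differentiability of the $f_i$ (handled in their Proposition B.2 under extra regularity). Your argument instead works directly from the operational definition of $T$ on the product space $\mathcal{X}\times[m]$: the type I and type II errors decompose additively over slices, which yields the explicit variational formula
\[
T\bigl((X|I,I),(Y|I,I)\bigr)(\alpha) = \inf\Bigl\{\textstyle\sum_i w_i\, T(P_i,Q_i)(\alpha_i)\ :\ \alpha_i\in[0,1],\ \textstyle\sum_i w_i\alpha_i \le \alpha\Bigr\},
\]
and the same formula for the swapped test with $T(Q_i,P_i)$ in place of $T(P_i,Q_i)$; the per-slice symmetry hypothesis makes the two optimization problems identical, hence $T((X|I,I),(Y|I,I)) = T((Y|I,I),(X|I,I))$, which is exactly symmetry since $T((Y|I,I),(X|I,I))$ is the left inverse of $T((X|I,I),(Y|I,I))$. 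Your approach buys you freedom from any differentiability assumptions and from the likelihood-ratio parametrization, and it makes the structure of the result transparent. One small remark: the care you flagged about relaxing $\mathbb{E}_{P_i}[\phi_i]=\alpha_i$ to an inequality is in fact a non-issue, because the trade-off function is itself defined with the inequality constraint, so equality of the two infima follows by direct comparison of feasible sets with no appeal to monotonicity needed.
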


The joint convexity of $F$-divergences plays an important role in the analysis of divergence-based DP for mixture mechanisms \cite{balle2018privacy, feldman2022hiding}.
We now show that Lemma \ref{lemma:inequality} is an extension of the joint convexity of $F$-divergences, including the scaled exponentiation of the R\'enyi divergence and the hockey-stick divergence, to trade-off functions.
A trade-off function is always convex and is thus differentiable almost everywhere.
Thus, without loss of generality, we consider $f_i$  that is differentiable, symmetric, with $f_i(0)=1$.

\begin{proposition}
[An application of Lemma \ref{lemma:inequality} to the $F$-divergences]
\label{prop:F-divergence}
Let $D_F(P\|Q) = \int F(p(x)/q(x))q(x)dx$ be an $F$-divergence between any two distributions $P$ and $Q$ with some convex $F$.
Then, for $f_i = T(P_i,Q_i)$, we have
\begin{align*}
    D_F(P_\bfw\|Q_\bfw) \leq \sum_{i=1}^m \int_{0}^1 F\left(\frac{1}{|f_i'(x)|} \right) \left|f_i'(x) \right|dx = \sum_{i=1}^m w_i D_F(P_i\|Q_i).
\end{align*}
\end{proposition}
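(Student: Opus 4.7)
The statement factors into an equality and an inequality, which I would treat separately.

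\textbf{Equality.} I would first prove the identity that for any pair $(P,Q)$ with trade-off function $f = T(P,Q)$,
\[
D_F(P\|Q) = \int_0^1 F\!\left(\frac{1}{|f'(\alpha)|}\right)|f'(\alpha)|\,d\alpha.
\]
The proof is a change of variables along the level sets of the likelihood ratio $\rho = q/p$. By Neyman--Pearson, the type I error at threshold $t$ is $\alpha(t) = \mathbb{P}_{X\sim P}(\rho(X) > t)$, and the slope of the trade-off function along the optimal test satisfies $|f'(\alpha(t))| = t$. Rewriting $D_F(P\|Q) = \mathbb{E}_Q[F(1/\rho)]$ and using the identity $dF_Q(t) = t\,dF_P(t)$ on level sets, where $F_P(t) = \mathbb{P}(\rho \leq t)$, the change of variables from $t$ to $\alpha$ (noting $dF_P(t) = -d\alpha$) produces the displayed formula. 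Applied to each $(P_i, Q_i)$ and summed with weights $w_i$, this delivers the equality in the proposition (with the weights understood as belonging to each term in the sum of integrals).

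\textbf{Inequality.} I would follow the post-processing interpretation spelled out right after Lemma \ref{lemma:inequality}. Let $P^*$ and $Q^*$ be the joint distributions on indices and outputs with densities $(i,x)\mapsto w_i p_i(x)$ and $(i,x)\mapsto w_i q_i(x)$, so that their marginals over $x$ are exactly $P_\bfw$ and $Q_\bfw$. A direct calculation gives
\[
D_F(P^*\|Q^*) = \sum_{i=1}^m w_i \int F\!\left(\frac{p_i(x)}{q_i(x)}\right) q_i(x)\,dx = \sum_{i=1}^m w_i D_F(P_i\|Q_i).
\]
Marginalizing out the index is a deterministic post-processing map, so the data-processing inequality for $F$-divergence, a standard consequence of the convexity of $F$ and Jensen's inequality, yields $D_F(P_\bfw\|Q_\bfw) \leq D_F(P^*\|Q^*)$. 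Chaining the two displays then gives the claimed upper bound.

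\textbf{Main obstacle.} The clean argument above routes through data processing and avoids having to prove ``larger trade-off function implies smaller $F$-divergence.'' The harder route, in which one tries to use the pointwise conclusion $T(P_\bfw,Q_\bfw) \geq T(P^*,Q^*)$ of Lemma \ref{lemma:inequality} directly inside the integral $\int F(1/|f'|)|f'|\,dx$, requires exactly such a monotonicity; this is not transparent because the integrand depends on $f$ only through the speed $|f'|$. The standard resolution is to represent a convex $F$ as a non-negative mixture of hockey-stick kernels $u\mapsto (u-\gamma)_+$ (and its dual) and then invoke the equivalence between $f$-DP and the family $\{H_\gamma\}_{\gamma\geq 1}$ of hockey-stick divergences. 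That would make the proof a direct application of Lemma \ref{lemma:inequality}, but it is technically heavier, so the data-processing proof is the one I would favor in the final write-up while still presenting the result as a corollary of Lemma \ref{lemma:inequality}'s underlying post-processing idea.
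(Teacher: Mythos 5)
Your proof is correct, and it takes a genuinely different route from the paper's. The paper routes everything through the functional representation $D_F(P\|Q) = l_F(T(P,Q))$ from \cite{dong2019gaussian}: it first applies Lemma~\ref{lemma:inequality} together with the monotonicity of $l_F$ (cited as Lemma~B.2 of \cite{dong2019gaussian}) to get $l_F(T(P_\bfw,Q_\bfw)) \le l_F(T(P_I,Q_I))$, and then performs the change of variables $\alpha \leftrightarrow t$ along the likelihood-ratio levels to rewrite $l_F(T(P_I,Q_I))$ as $\sum_i w_i D_F(P_i\|Q_i)$. You instead prove the inequality part directly from the data-processing inequality for $F$-divergences: you exhibit the joint distributions $P^*,Q^*$ over (index, output), compute $D_F(P^*\|Q^*)=\sum_i w_i D_F(P_i\|Q_i)$ by hand, and observe that marginalizing out the index is post-processing. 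This is more elementary and does not even require invoking Lemma~\ref{lemma:inequality} or the $l_F$ machinery for the inequality -- it surfaces the same post-processing idea that underlies Lemma~\ref{lemma:inequality}, but implements it at the level of divergences rather than trade-off functions. What the paper's route buys is that it exhibits the bound as a literal consequence of the trade-off-function inequality and the $l_F$ correspondence, which fits the paper's theme of treating $f$-DP as the unifying object; what your route buys is brevity and independence from the $l_F$ calculus. One minor misreading in your ``main obstacle'' paragraph: you suggest that using Lemma~\ref{lemma:inequality} directly would require a hockey-stick decomposition of $F$, but the paper never does this -- the monotonicity of $l_F$ in the trade-off function (larger trade-off function $\Rightarrow$ smaller $F$-divergence) is exactly Blackwell's theorem / the DPI restated, and is simply cited. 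You are also right that the middle expression in the proposition as printed is missing the factor $w_i$ inside the sum.
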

Conversion from a trade-off function to $F$-divergences is straightforward using Section B in \cite{dong2019gaussian}. However, conversion from an $F$-divergence to a trade-off function is highly non-trivial. In fact, $F$-divergence is an integral of a functional of the trade-off function over the whole space while Lemma \ref{lemma:inequality} holds pointwisely, which is a local property. 
This explains why the divergence-based DP is not as informative as $f$-DP since some information is lost due to the integration.

\section{Privacy analysis of the shuffled mechanisms}
In this section, we explore the 
$f$-DP analysis of shuffled mechanisms. Drawing upon \cite{feldman2022hiding, feldman2023stronger}, the shuffling procedure incorporates a mixture of binomial noise. This noise can be tightly bounded using our 
$f$-DP inequality for mixture distributions.

\subsection{Theoretical privacy guarantee}
\label{sec:shuffling}
In shuffling models, the record of each user is privatized by some local randomizer (such as a randomized response mechanism \cite{DBLP:conf/edbt/0009WH16})  and all records are then shuffled by a curator.
Mathematically, consider a dataset $\mathcal{D} = \{z_i\}_{i=1}^n\subseteq\mathcal{Z}$ of size $n$ and each data point $z_i$ is privatized by an local randomizer $\mathcal{A}_0:\mathcal{Z}:\rightarrow\widetilde{\mathcal{Z}}$ that satisfies $\epsilon_0$-DP.
Then, the mechanism $\mathcal{A}:\mathcal{Z}^n \rightarrow \widetilde{\mathcal{Z}}^n$ that maps $\mathcal{D}$ to $\widetilde{\mathcal{D}}=\{\mathcal{A}_0(z_i)\}_{i=1}^n$ is $\epsilon_0$-DP.
A shuffler $\mathcal{A}_{\mathrm{Shuffle}}$ takes the privatized dataset $\widetilde{\mathcal{D}}$ as input and applies a uniformly random
permutation to $\widetilde{\mathcal{D}}$, which introduces the mixture of binomial noise to $\mathcal{A}$ and results in privacy  amplification.

As noted in \cite{feldman2023stronger}, the shuffling procedure introduces mixtures of binomial distributions.
More specifically,  the outputs generated by the shuffled mechanism for two neighboring datasets result from post-processing random variables  $X\sim P$ and $Y\sim Q$ with $P = (1- w)P_0 + wQ_0$ and $Q = (1 - w)Q_0 + wP_0$, where the weight $w = \frac{1}{e^{\epsilon_0} + 1}$, and the distributions $P_0$ and $Q_0$ are defined as $(A+1, C - A)\sim P_0$, and $(A, C - A +1)\sim Q_0$ with $A \sim \mathrm{Binom}(C, 1/2)$ and $C \sim \mathrm{Binom} \mathrm(n-1, 2/(e^{\epsilon_0}+1)).$
It is easy to see that $P_0$ is the  mixture of $\{(A_i+1, i-A_i)\}_{i=0}^{n-1}$ with weights $w_i^0 := \mathbb{P}[C=i]$ and $Q_0$  is the mixture of $\{(A_i, i-A_i +1)\}_{i=0}^{n-1}$ with the same weights.
In this context, $\mathrm{Binom}(k,p)$ is a binomial distribution with parameters $k\in\mathbb{N}$ and $p\in[0,1]$ and each $A_i$ is distributed as $\mathrm{Binom}(i,1/2).$
Advancing our analysis, we adopt the joint concavity, as outlined in Lemma \ref{lemma:inequality}, to establish a lossless bound for the trade-off function $T(P_0,Q_0)$.

\begin{proposition}
\label{prop:shuffling}
Let $F_i$ be the distribution function of $\mathrm{Binom}(i,1/2)$ and let $w_i^0 = \mathbb{P}[C = i]$ for $C \sim \mathrm{Binom}(n-1,2/(e^{\epsilon_0}+1) ).$
Then, we have $T(P_0,Q_0)$ is a piecewise linear function with 
    \begin{align*}
        T(P_0,Q_0) (\alpha(t)) = \sum_{i=0}^{n-1} w_i^0 \left\{ 1 -  F_i\left[F_i^{-1}(\alpha_i(t)) + 1\right]\right\},
    \end{align*}
for each knot $\alpha(t)=\sum_{i=0}^{n-1} w_i^0 \alpha_i(t) := \sum_{i=0}^{n-1} w_i^0 F_i\left(i - \frac{i+1}{t+1} \right).$
\end{proposition}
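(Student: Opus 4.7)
The plan is to represent $P_0$ and $Q_0$ as the mixtures $P_0 = \sum_{i=0}^{n-1} w_i^0 P_0^{(i)}$ and $Q_0 = \sum_{i=0}^{n-1} w_i^0 Q_0^{(i)}$, where $P_0^{(i)}$ and $Q_0^{(i)}$ denote the laws of $(A_i+1, i-A_i)$ and $(A_i, i-A_i+1)$, respectively, and then invoke Lemma~\ref{lemma:inequality}. The key observation promoting the resulting inequality to an equality is that every point $(a,b)$ in the support of either $P_0^{(i)}$ or $Q_0^{(i)}$ satisfies $a+b = i+1$, so the mixture index $i$ is deterministically recoverable from the observation. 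Consequently the post-processing map $(X,I)\mapsto X$ used in the proof of Lemma~\ref{lemma:inequality} is injective on the supports, so $T(P_0,Q_0)$ coincides with the joint trade-off function $T((X,I),(Y,I))$; by the lemma, this equals the weighted sum of component trade-off functions.

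Next I would evaluate each component trade-off function $T(P_0^{(i)}, Q_0^{(i)})$ in closed form. Since the two coordinates sum to $i+1$, each distribution is determined by the first coordinate, whose law is $1+\mathrm{Binom}(i,1/2)$ under $P_0^{(i)}$ and $\mathrm{Binom}(i,1/2)$ under $Q_0^{(i)}$. For the testing problem between a lattice distribution and its unit shift, the Neyman--Pearson optimal rules are the deterministic thresholds $\mathbf{1}[k \le m]$, which yield the knots $(\alpha,\beta) = (F_i(m-1),\, 1-F_i(m))$ for $m = 0,1,\ldots,i+1$. Eliminating the threshold $m$ via $F_i^{-1}$ produces the component closed form $T(P_0^{(i)}, Q_0^{(i)})(\alpha) = 1 - F_i(F_i^{-1}(\alpha)+1)$.

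The remaining step is to index the knots of the sum by a single parameter $t$. Setting $\alpha_i(t) = F_i(i - (i+1)/(t+1))$, interpreted through the right-continuous CDF, makes $\alpha_i(\cdot)$ a step function that traverses the component knot values $F_i(-1), F_i(0), \ldots, F_i(i-1)$ as $t$ crosses the critical thresholds $(k+1)/(i-k)$ for $k = 0,\ldots,i-1$. Summing with weights $w_i^0$ then enumerates the knots of $T(P_0,Q_0)$, and combining with $\beta_i(t) = 1 - F_i(F_i^{-1}(\alpha_i(t))+1)$ delivers the stated expression. The main subtlety will be confirming that this particular parameterization sweeps through exactly the knots of the mixture trade-off function without skipping or double-counting across components, and carefully handling the boundary conventions for $F_i^{-1}$ at the trivial endpoints where $\alpha_i\in\{0,1\}$.
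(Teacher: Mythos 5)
Your proposal is correct and follows essentially the same route as the paper: decompose $P_0$ and $Q_0$ as mixtures indexed by $C=i$, invoke Lemma~\ref{lemma:inequality}, show the inequality is an equality, compute each component trade-off function for the unit-shift lattice pair, and parameterize the knots by $t$. The one place where you articulate the argument more cleanly than the paper is the equality step: you observe that $a+b=i+1$ makes the mixture index deterministically recoverable from the observation, so releasing $I$ is a lossless post-processing. The paper instead reaches the same conclusion implicitly by computing the mixture likelihood ratio $p_0(a,b)/q_0(a,b)=a/b$ directly and decomposing the Neyman--Pearson rejection probabilities by conditioning on $C$; both amount to the observation that the component likelihood-ratio statistics coincide with the mixture one, so the mixture test decomposes exactly. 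Your closed-form $(\alpha,\beta)=\bigl(F_i(m-1),\,1-F_i(m)\bigr)$ for the component knots and the identification $\beta=1-F_i\bigl(F_i^{-1}(\alpha)+1\bigr)$ match what the paper derives (note the paper's appendix proof drops a ``$1-$'' in one intermediate display for $\beta_i^0(t)$, but the proposition statement and your reading are correct). No gaps.
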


\textbf{Remark.} Proposition \ref{prop:shuffling} holds with equality and the bound for $T(P_0,Q_0)$ is sharp.

Before stating our results for $T(P,Q)$, we define some notations related to $f$-DP.
For a function $g:\mathbb{R}\rightarrow \mathbb{R}$, let $g^*(y) := \max_{x} \{xy - g(x)\}$ be its convex conjugate.
 For a trade-off function $f$ , let $\mathcal{C}(f) = \min\{f, f^{-1}\}^{**}$ be its symmetrization, where $f^{-1}$ is the left inverse function of $f$, i.e., $f^{-1}\circ f (x) = x$.

 \begin{theorem}
\label{thm:shuffling}
The shuffled mechanism $\mathcal{A}_{\mathrm{Shuffle}}\circ \mathcal{A}$ is $\mathcal{C}(f_{\mathrm{Shuffle}})$-DP. Here $f_{\mathrm{Shuffle}} (\alpha(t))$ is a piecewise linear function 
 where each knot $\alpha(t)$ has the form
    \begin{align*}
        \alpha(t) = \sum_{i=0}^{n-1} w_i^0 \alpha_i(t) := \sum_{i=0}^{n-1} w_i^0 F_i\left(i - \frac{i+1}{t+1} \right)\in[0,1], \qquad \hbox{ for all } t\geq 0,
    \end{align*}
with $F_i$ being the distribution function of $\mathrm{Binom}(i,1/2)$ and $w_i^0 = \mathbb{P}[C = i]$ for $C \sim \mathrm{Binom}(n-1,2/(e^{\epsilon_0}+1) )$,     and the value of $f_{\mathrm{Shuffle}}$ at a knot $\alpha(t)$ is 
    \begin{align*}
    f_{\mathrm{Shuffle}}(\alpha(t)) = 2w \cdot \mathrm{Id}(\alpha(t)) + (1- 2w)\cdot\left[ \sum_{i=0}^{n-1} w_i^0 \left\{ 1 -  F_i\left[F_i^{-1}(\alpha_i(t)) + 1\right]\right\}\right],
    \end{align*}
with $w = \frac{1}{1+e^{\epsilon_0}}$ and $\mathrm{Id}(x) = 1 - x$ being the identity trade-off function.
\end{theorem}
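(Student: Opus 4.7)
The plan is to leverage the shared-noise structure in the mixtures $P = (1-w) P_0 + w Q_0$ and $Q = (1-w) Q_0 + w P_0$, combined with the closed-form trade-off function for $(P_0, Q_0)$ provided by Proposition~\ref{prop:shuffling}. Since $w = 1/(1+e^{\epsilon_0}) \leq 1/2$, I would first rewrite
\[ P = 2w\, M + (1-2w)\, P_0, \qquad Q = 2w\, M + (1-2w)\, Q_0, \]
where $M = (P_0 + Q_0)/2$ is the common component. A $2w$-fraction of each distribution is uninformative (drawn from $M$ under both hypotheses), while the remaining $(1-2w)$-fraction is governed by the genuinely distinguishable pair $(P_0, Q_0)$.

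Next, I would apply the joint concavity of trade-off functions (Lemma~\ref{lemma:inequality}) to this two-component decomposition, with pairs $(M, M)$ and $(P_0, Q_0)$ weighted by $2w$ and $1-2w$. Because $T(M, M) = \mathrm{Id}$, the inequality produces, at suitable thresholds, a pointwise lower bound of the shape
\[ T(P, Q)(\alpha) \;\geq\; 2w \cdot \mathrm{Id}(\alpha) + (1 - 2w)\, T(P_0, Q_0)(\alpha). \]
Substituting the closed-form expression $T(P_0, Q_0)(\alpha(t)) = \sum_i w_i^0 \{1 - F_i[F_i^{-1}(\alpha_i(t)) + 1]\}$ from Proposition~\ref{prop:shuffling} at its knots $\alpha(t) = \sum_i w_i^0 F_i(i - (i+1)/(t+1))$ yields exactly the formula for $f_{\mathrm{Shuffle}}(\alpha(t))$ asserted in the theorem, and piecewise linear interpolation extends it to all of $[0,1]$. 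To turn this into an $f$-DP guarantee, I would then symmetrize via $\mathcal{C}(f_{\mathrm{Shuffle}}) = \min(f_{\mathrm{Shuffle}}, f_{\mathrm{Shuffle}}^{-1})^{**}$, using that the shuffler $\mathcal{A}_{\mathrm{Shuffle}}$ is a data-independent permutation so that the composition $\mathcal{A}_{\mathrm{Shuffle}} \circ \mathcal{A}$ inherits the privacy of $(P, Q)$ by post-processing invariance.

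The hard part is justifying the same-argument form of the lower bound. A naive application of Lemma~\ref{lemma:inequality} produces $2w \cdot \mathrm{Id}(\alpha_1) + (1-2w)\, T(P_0, Q_0)(\alpha_2)$ with mismatched arguments satisfying only $\alpha = 2w\alpha_1 + (1-2w)\alpha_2$, rather than both terms evaluated at the same $\alpha$. Closing this gap exploits the fact that the likelihood ratio $q_1/p_1 \equiv 1$ in the $(M, M)$ factor is constant: for thresholds $t \neq 1$ the value of $\alpha_1$ is pinned to $0$ or $1$, while at the atom $t = 1$ the randomization parameter $c \in [0,1]$ lets the $(M, M)$ threshold be effectively decoupled from the one driving the $(P_0, Q_0)$ sub-test. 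Sweeping $t$ and $c$ carefully, the three resulting branches assemble into the piecewise linear envelope $2w\cdot\mathrm{Id} + (1-2w)\,T(P_0, Q_0)$ with the correct knots $\alpha(t)$. A cleaner route — and the one foreshadowed by the abstract — is to invoke the $f$-DP analog of the advanced joint convexity of the hockey-stick divergence, which is engineered precisely to deliver the same-argument form directly and bypasses this threshold-tracing bookkeeping.
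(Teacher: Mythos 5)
Your decomposition $P = 2w\,M + (1-2w)\,P_0$, $Q = 2w\,M + (1-2w)\,Q_0$ with $M = (P_0+Q_0)/2$ is exactly the one used in the proof of Proposition~\ref{prop:advancedShuffling}, and your closing sentence correctly names the advanced joint concavity (Lemma~\ref{lemma: f advanced joint convexity}, built on the hockey-stick version Lemma~\ref{lemma: HS advanced joint convexity}) as the right tool. The preceding ``threshold-tracing'' paragraph, however, is not a messier route to the same answer---it is wrong. The bound in Lemma~\ref{lemma:inequality} is exactly $T\big((X|I,I),(Y|I,I)\big)$, realized by a single-threshold Neyman--Pearson test on the augmented observation; the threshold $t$ and the randomization $c$ are shared across components by construction, so there is no ``decoupling'' to exploit, and any decoupled test would only be worse for the pair $(X|I,I)$ vs.\ $(Y|I,I)$. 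Sweeping $t$ and $c$ yields, on the branch where the $(M,M)$ factor contributes $\alpha_M=0$, the curve $\beta(\alpha)=2w + (1-2w)\,T(P_0,Q_0)\big(\alpha/(1-2w)\big)$, plus a symmetric branch and a linear chord at the atom---\emph{not} $2w(1-\alpha)+(1-2w)\,T(P_0,Q_0)(\alpha)$ evaluated at the same argument. Since $T(P_0,Q_0)$ has slope strictly below $-1$ near $\alpha=0$ whenever $P_0\neq Q_0$, the former is strictly lower, so plain joint concavity produces a genuinely weaker guarantee. The paper exhibits exactly this gap in Figure~\ref{fig:Tightness}, where for $w=1/3$ and $\epsilon=0.5$ the plain joint concavity gives $\delta=0.0020$ while the advanced version gives $1.5\times10^{-6}$. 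The advanced joint concavity is therefore the only route to the stated bound, not a shortcut.

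Two smaller omissions. First, the reduction of $\mathcal{A}_{\mathrm{Shuffle}}\circ\mathcal{A}$ to the dominating pair $(P,Q)$ rests on the structural post-processing reduction of Theorem~3.1 in \cite{feldman2023stronger}, not merely on the fact that the shuffler is a data-independent permutation. Second, the conjugate duality used to pass from the hockey-stick inequality to a trade-off function requires a symmetric trade-off function, so the paper invokes $f_0=\mathcal{C}(T(P_0,Q_0))$ and must then verify $\mathcal{C}\big(2w\,\mathrm{Id}+(1-2w)\,\mathcal{C}(T(P_0,Q_0))\big)=\mathcal{C}\big(2w\,\mathrm{Id}+(1-2w)\,T(P_0,Q_0)\big)$; your sketch skips this reconciliation.
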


\paragraph{Remark.}
The bound in Theorem \ref{thm:shuffling} is near-optimal. In fact, the proof of Theorem \ref{thm:shuffling} is based on a post-processing procedure in \cite{feldman2023stronger}, joint concavity (Proposition \ref{prop:shuffling}), and advanced joint concavity (Proposition \ref{prop:advancedShuffling}). The post-processing procedure is sharp for specific mechanisms, such as the randomized response mechanism, as shown by Theorem 5.2 and Theorem 5.3 in \cite{feldman2023stronger}. Proposition \ref{prop:shuffling}
holds with equality and is optimal.
The advanced joint concavity, which is an $f$-DP analog of the advanced joint convexity in \cite{balle2018privacy}, is optimal for specific distributions.
Compared to existing analysis of shuffled mechanisms (e.g., \cite{feldman2023stronger}), the main advantage of using 
$f$-DP is that we avoid the use of Hoeffding's inequality and the Chernoff bound to bound the distance between $P_0$ and $Q_0$ in Proposition \ref{prop:shuffling}, which is adopted in \cite{feldman2022hiding,feldman2023stronger} and leads to loose bounds, to bound the mixture of binomial distributions. Moreover, Theorem 3.2 in \cite{feldman2023stronger} holds with an assumption  $\epsilon_0\leq \log\left(\frac{n}{8\log (2/\delta)} - 1 \right)$
, which is removed by using 
$f$-DP in our paper.

To convert $f$-DP to $(\epsilon,\delta)$-DP, we use the primal-dual perspective in \cite{dong2019gaussian} and obtain the following Corollary.

\begin{corollary}
\label{coro:shuffling f to epsilon}
Let 
$
  l\left(t\right) :=-\frac{\sum_{i=0}^{n-1} w_i^0 p_{i}\left(\left\lfloor i +1  - \frac{i+1}{t+1}  \right\rfloor \right)}{\sum_{i=0}^{n-1} w_i^0 p_{i}\left(\left\lfloor i  - \frac{i+1}{t+1}\right\rfloor \right)}
$
with  $p_i$ being the probability mass function of $\mathrm{Binom}(i,1/2)$.
Then, we have
$\mathcal{A}_{\mathrm{Shuffle}}\circ\mathcal{A}$ is $(\epsilon,\delta_{f\text{-DP}}(\epsilon))$-DP for any $\epsilon>0$ with
\begin{align*}
\delta_{f\text{-DP}}(\epsilon) = (- e^{\epsilon}+2w)\left[\sum_{i=0}^{n-1}w_i^0 F_i\left(i - \frac{i+1}{t_{\epsilon}+1}\right) \right] + (1 - 2w)\left[\sum_{i=0}^{n-1}w_i^0 F_i\left(i +1 - \frac{i+1}{t_{\epsilon}+1}\right)\right],
\end{align*}
where $t_{\epsilon} = \inf\{t: -2w + (1 - 2w) l(t) \geq -e^{\epsilon}\}$ and $w = \frac{1}{e^{\epsilon_0} + 1}.$
\end{corollary}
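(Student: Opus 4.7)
The plan is to invoke the primal--dual conversion of \cite{dong2019gaussian}: a mechanism is $f$-DP if and only if, for every $\epsilon\geq 0$, it is $(\epsilon,\delta(\epsilon))$-DP with
\[
\delta(\epsilon)=1+f^{*}(-e^{\epsilon})=\sup_{\alpha\in[0,1]}\bigl(1-e^{\epsilon}\alpha-f(\alpha)\bigr),
\]
where $f^{*}$ is the convex conjugate. I would apply this to $f=f_{\mathrm{Shuffle}}$ from Theorem~\ref{thm:shuffling}. Because $P_0$ and $Q_0$ are interchanged by the coordinate swap $(x,y)\mapsto(y,x)$, $T(P_0,Q_0)$ is symmetric, hence so is $f_{\mathrm{Shuffle}}$; therefore $\mathcal{C}(f_{\mathrm{Shuffle}})=f_{\mathrm{Shuffle}}$ and the conversion can be applied directly.

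By Theorem~\ref{thm:shuffling} and Proposition~\ref{prop:shuffling}, $f_{\mathrm{Shuffle}}(\alpha)=2w(1-\alpha)+(1-2w)\,T(P_0,Q_0)(\alpha)$ is piecewise linear with knots $\{\alpha(t):t\ge 0\}$, so convexity forces the supremum to be attained at a knot whose subgradient contains $-e^{\epsilon}$. The right slope of $f_{\mathrm{Shuffle}}$ at $\alpha(t)$ decomposes as $-2w+(1-2w)\,l(t)$, where $l(t)$ is the right slope of $T(P_0,Q_0)$ at $\alpha(t)$. I would derive $l(t)$ by computing, for the two explicit sums
\[
\alpha(t)=\sum_{i=0}^{n-1}w_i^0 F_i\!\left(i-\frac{i+1}{t+1}\right),\qquad
T(P_0,Q_0)(\alpha(t))=\sum_{i=0}^{n-1}w_i^0\left\{1-F_i\!\left(i+1-\frac{i+1}{t+1}\right)\right\},
\]
how each changes as $t$ crosses the next knot, using the step-function identity $F_i(i+1-(i+1)/(t+1))=F_i(\lfloor i-(i+1)/(t+1)\rfloor+1)$. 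Tracking the increments of $\lfloor i-(i+1)/(t+1)\rfloor$ and expressing them through the binomial pmfs $p_i$ produces exactly the ratio of sums given in the statement, and the rule $t_{\epsilon}=\inf\{t:-2w+(1-2w)\,l(t)\ge -e^{\epsilon}\}$ then selects the first knot whose right slope has reached $-e^{\epsilon}$, i.e.\ the maximizing knot.

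Finally, substituting $\alpha=\alpha(t_\epsilon)$ into the primal--dual objective and expanding $f_{\mathrm{Shuffle}}(\alpha(t_\epsilon))$ via the knot formula of Theorem~\ref{thm:shuffling}, the expression $1-e^{\epsilon}\alpha(t_\epsilon)-f_{\mathrm{Shuffle}}(\alpha(t_\epsilon))$ reorganizes into a coefficient $(-e^{\epsilon}+2w)$ multiplying $\sum_i w_i^0 F_i(i-(i+1)/(t_\epsilon+1))$ plus the term $(1-2w)\sum_i w_i^0 F_i(i+1-(i+1)/(t_\epsilon+1))$, matching the stated formula for $\delta_{f\text{-DP}}(\epsilon)$. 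The main obstacle will be establishing the slope identity for $l(t)$: one must verify that the binomial-pmf ratio in the corollary is indeed the right slope of $T(P_0,Q_0)$ on the segment emanating from $\alpha(t)$, which requires careful treatment of the floor operations encoded in $a_i(t):=\lfloor i-(i+1)/(t+1)\rfloor$ and of indices $i$ for which $p_i(a_i(t))=0$ (e.g.\ degenerate conditionals with $a_i(t)<0$). Once this slope identity is in place, the remainder reduces to routine rearrangement of the piecewise-linear representation of $f_{\mathrm{Shuffle}}$.
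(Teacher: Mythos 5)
Your approach mirrors the paper's: apply the primal--dual conversion $\delta(\epsilon)=1+f^{*}(-e^{\epsilon})$, locate the optimal $\alpha$ at a knot via a first-order/subgradient condition on the piecewise linear trade-off function, decompose the slope there as $-2w+(1-2w)\,l(t)$ with $l(t)$ the slope of $T(P_0,Q_0)$, select $t_{\epsilon}$ as the first knot reaching slope $-e^{\epsilon}$, and substitute. This is exactly the paper's argument; the only cosmetic difference is that the paper writes everything in terms of $f_{P,Q}:=T(P,Q)$ while you work with $f_{\mathrm{Shuffle}}$, and these coincide on the region $\alpha\leq\bar x$ (where the slope is $\leq -1$), which is the only region the conjugate at $y=-e^{\epsilon}<-1$ ever probes.

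One intermediate claim in your proposal is false and deserves flagging: you assert that $T(P_0,Q_0)$ symmetric implies $f_{\mathrm{Shuffle}}=2w\,\mathrm{Id}+(1-2w)\,T(P_0,Q_0)$ is symmetric, and hence $\mathcal{C}(f_{\mathrm{Shuffle}})=f_{\mathrm{Shuffle}}$. This inference does not hold: a vertical convex combination of two symmetric trade-off functions is not in general symmetric (the knot $(\alpha_0,\beta_0)$ of $T(P_0,Q_0)$ maps to $(\alpha_0,\,2w(1-\alpha_0)+(1-2w)\beta_0)$ in $f_{\mathrm{Shuffle}}$, and its would-be reflection $(2w(1-\alpha_0)+(1-2w)\beta_0,\,\alpha_0)$ is not a knot of $f_{\mathrm{Shuffle}}$). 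Fortunately this does not damage the result: Theorem~\ref{thm:shuffling} certifies $\mathcal{C}(f_{\mathrm{Shuffle}})$-DP, and $\mathcal{C}(f_{\mathrm{Shuffle}})$ agrees with $f_{\mathrm{Shuffle}}$ on $[0,\bar x]$. Moreover, since $\mathrm{Id}$ has slope $-1$, $f_{\mathrm{Shuffle}}'(\bar x)=-2w+(1-2w)(-1)=-1$, so $f_{\mathrm{Shuffle}}$ and $\mathcal{C}(f_{\mathrm{Shuffle}})$ have the same breakpoint $\bar x$, and for every $\epsilon>0$ the supremum defining $f^{*}(-e^{\epsilon})$ is attained at some $\alpha\leq\bar x$. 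Thus $\mathcal{C}(f_{\mathrm{Shuffle}})^{*}(-e^{\epsilon})=f_{\mathrm{Shuffle}}^{*}(-e^{\epsilon})$, and your formula for $\delta_{f\text{-DP}}(\epsilon)$ is correct; only the justification for ignoring the symmetrization needs this patch (or, as the paper does, one can work directly with the symmetric $f_{P,Q}$).
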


\subsection{Numerical results and comparisons}
\begin{figure}
     \centering
     \begin{subfigure}[b]{0.28\textwidth}
         \centering    \includegraphics[width=\textwidth]{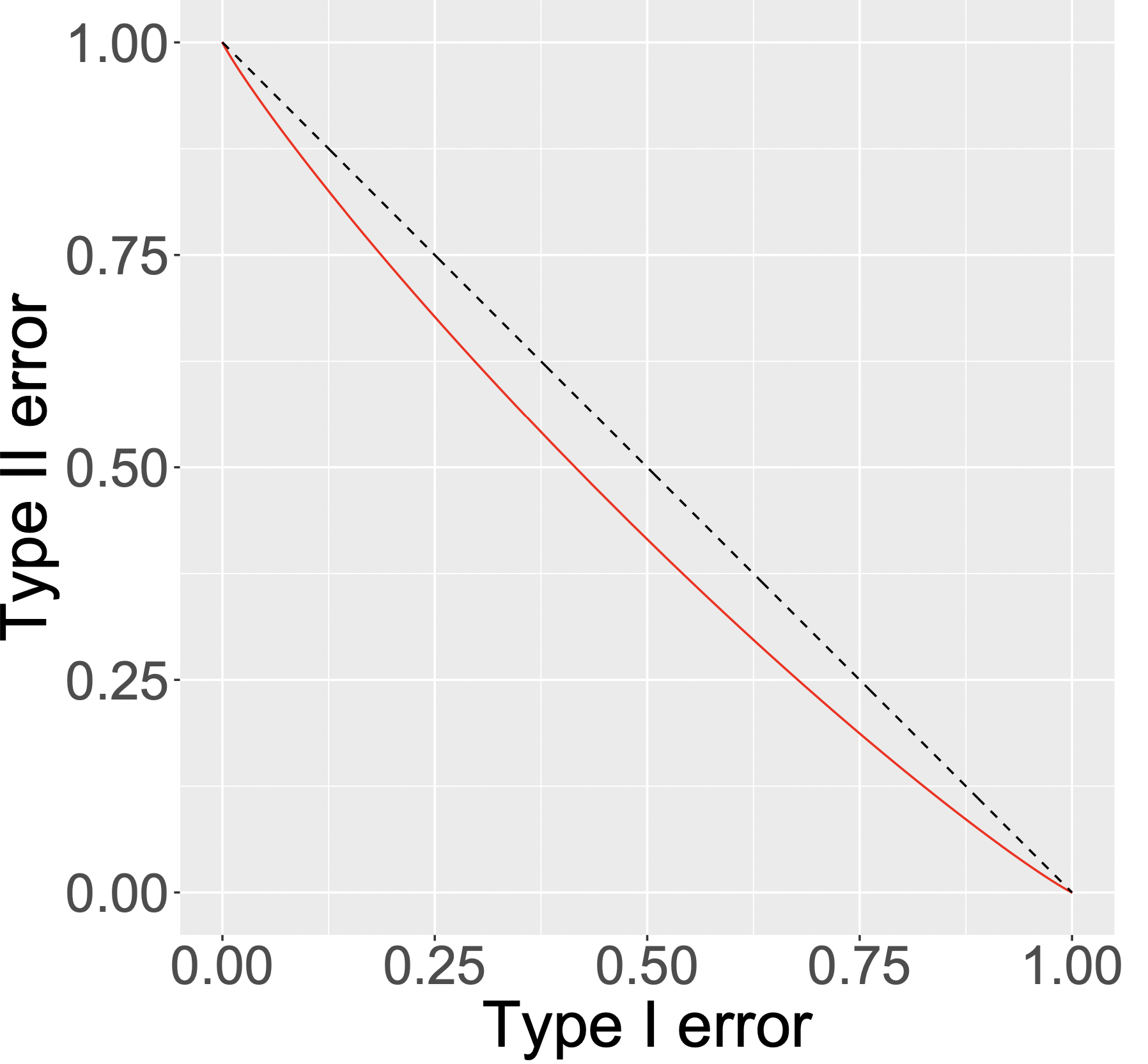}
         \caption{$\epsilon_0=5.444>4.444$}
         \label{fig:epsilon-1}
     \end{subfigure}
     \hfill
     \begin{subfigure}[b]{0.3\textwidth}
         \centering         \includegraphics[width=\textwidth]{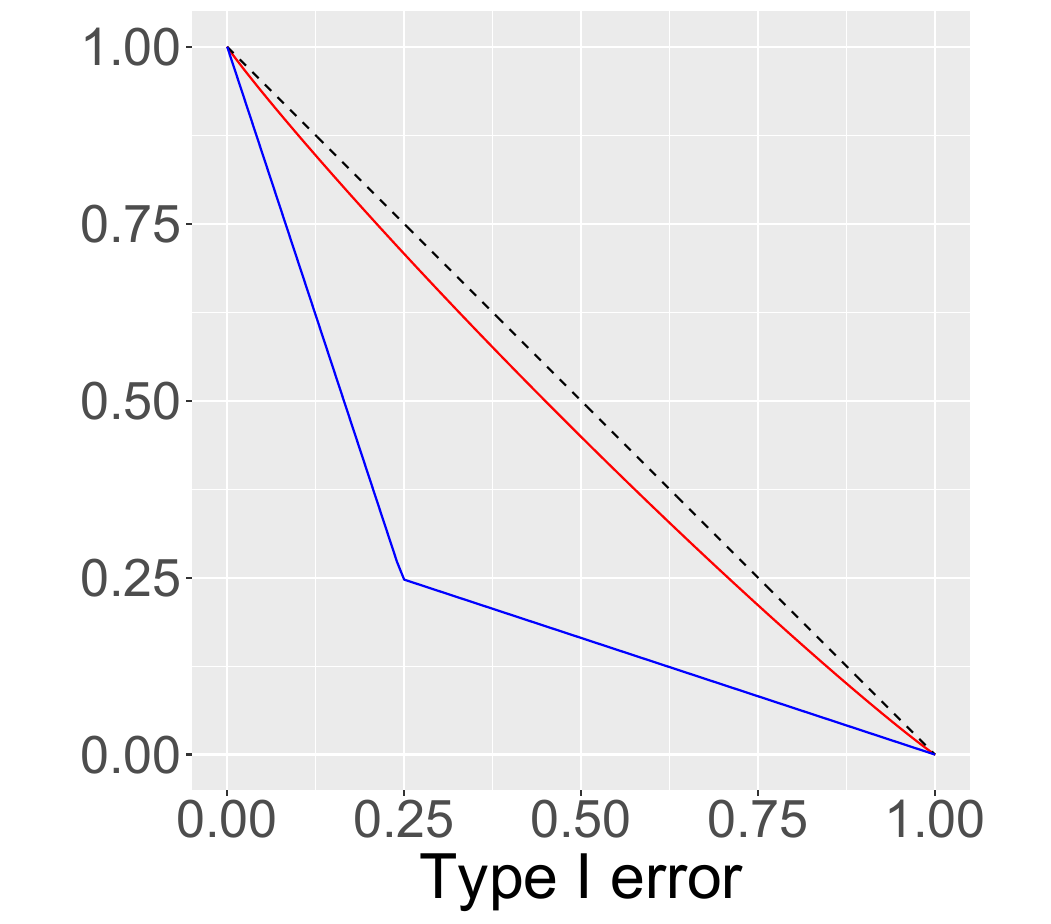}
         \caption{$\epsilon_0=4.444$}
         \label{fig:epsilon0}
     \end{subfigure}
     \hfill
     \begin{subfigure}[b]{0.33\textwidth}
         \centering         \includegraphics[width=\textwidth]{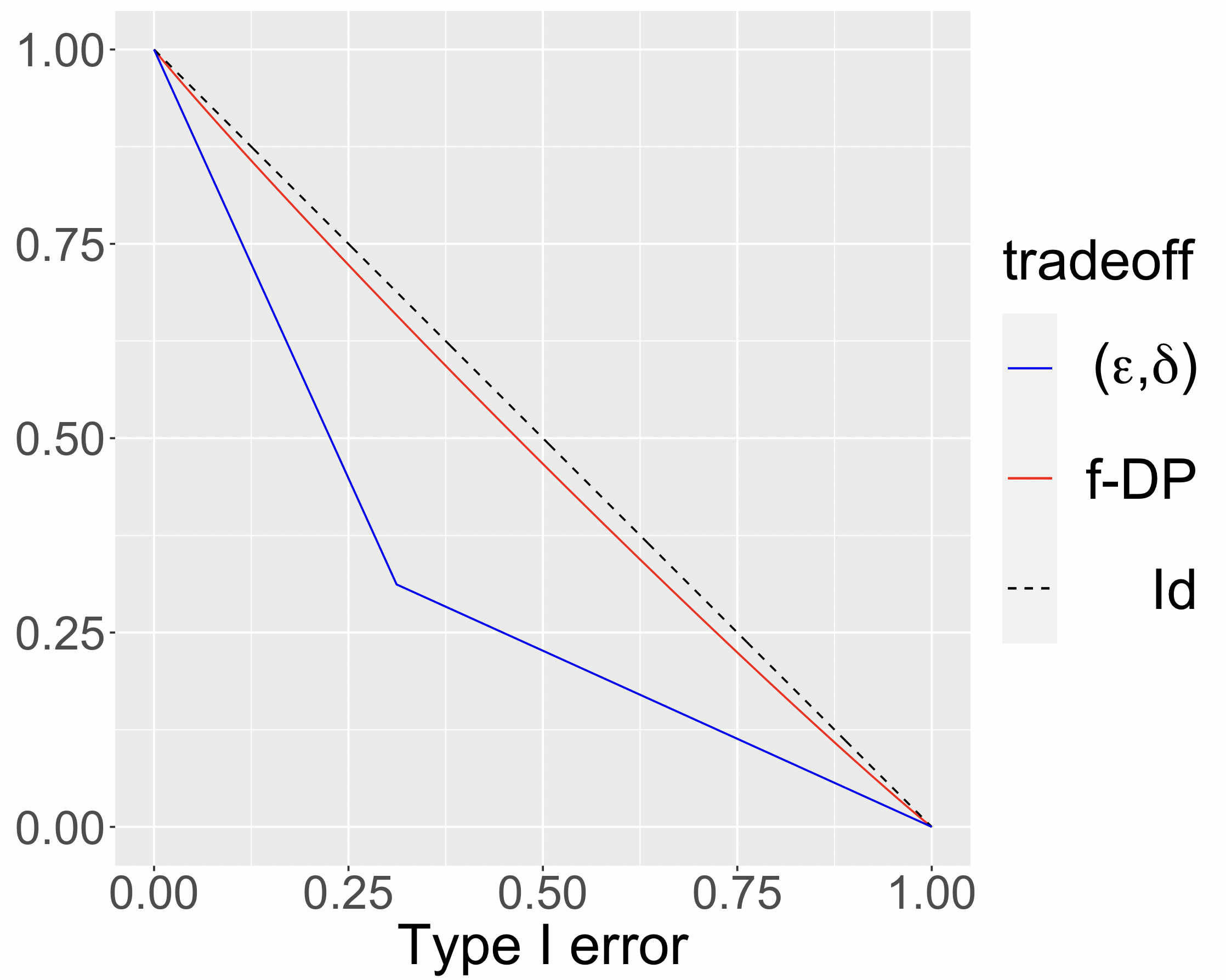}
         \caption{$\epsilon_0 = 3.444$}
         \label{fig:epsilon_0}
     \end{subfigure}
        \caption{A comparison between the trade-off function given by Theorem \ref{thm:shuffling} and $(\epsilon,\delta)$-DP with $n=10000$ and $\delta=n^{-1.5}$ given by \cite{feldman2023stronger}. \cite{feldman2023stronger} requires that $\epsilon_0 \leq \log\left(\frac{n}{8\log (2/\delta)} - 1 \right) \approx 4.444. $ Thus, there is no result for $(\epsilon,\delta)$-DP when $\epsilon_0 >4.444.$ }
    \label{fig:shafflings_small}
\end{figure}
To the best of our understanding, the leading privacy analysis for shuffled mechanisms is given in \cite{feldman2023stronger}.
In this section, we compare the privacy bounds from our Theorem \ref{thm:shuffling} and Corollary \ref{coro:shuffling f to epsilon} with those found in Theorem 3.2 of \cite{feldman2023stronger}. Additionally, we assess the tightness of our bound against the empirical lower bounds obtained through binary search.

Specifically, Figure \ref{fig:shafflings_small} presents a comparison of the trade-off function derived from our Theorem \ref{thm:shuffling} to that of \cite{feldman2023stronger}. This comparison clearly illustrates that $f$-DP offers tighter privacy bounds, given that its trade-off function aligns closer to the identity trade-off function.

In our Table \ref{tab:compareFeldman_10k}, we compare the values of  $\delta_{f\text{-DP}}(\epsilon)$, as derived from Corollary \ref{coro:shuffling f to epsilon} with $\delta(\epsilon)$ in \cite{feldman2023stronger}.
The results indicate that $\delta_{f\text{-DP}}(\epsilon)$ is significantly smaller than $\delta(\epsilon)$.

In Table \ref{tab:compareFeldman_numerical}, we present $\epsilon_{f\text{-DP}}$ alongside the numerical upper bound of $\epsilon$ from \cite{feldman2023stronger} and the numerical lower bound determined by binary search. Given its closeness to the lower bound, our Theorem \ref{thm:shuffling} can be considered near-optimal.

\begin{table}[H]
  \caption{Comparisons with \cite{feldman2023stronger}}
  \label{tab:compareFeldman_10k}
  \medskip
  \centering
  \begin{tabular}{ lllllll  }
\toprule 
 $\epsilon$ & 0.5 & 0.6 & 0.7 &0.8 & 0.9 & 1.0 \\
 \hline
 $\delta$ in \cite{feldman2023stronger} & $0.9494$ & $0.3764$ & $0.1038$  & $0.0181$ & $0.0018$  & $8 \times 10^{-5}$ \\
 \hline
 $\delta_{f\text{-DP}}$ (ours) & $3 \times 10^{-6}$ & $10^{-7}$ & $4 \times 10^{-9}$ & $9 \times 10^{-11}$ & $2 \times 10^{-12}$ & $2 \times 10^{-14}$  \\
  \bottomrule
\end{tabular}
\caption*{
We compare $\delta_{f\text{-DP}}$ obtained in Corollary \ref{coro:shuffling f to epsilon} with the corresponding $\delta$ derived from \cite{feldman2023stronger} using a fixed value of $\epsilon_0 = 4.444$ and $n=10000$. Notably, $\delta_{f\text{-DP}}$ is significantly smaller than $\delta$.
}
\end{table}

\begin{table}[H]
  \caption{Comparisons with numerical results in \cite{feldman2023stronger}} 
  \medskip
  \centering
  \begin{tabular}{ llllll  }
\toprule 
 $\delta$ & $5 \times 10^{-5}$& $3 \times 10^{-6}$ & $10^{-7}$ & $4 \times 10^{-9}$ & $9 \times 10^{-11}$   \\
 \hline
  $\epsilon_{f\text{-DP}}$ (ours) & $0.4$ & 0.5 & 0.6 & 0.7 &0.8 \\
 \hline
 Numerical $\epsilon$ upper bound in \cite{feldman2023stronger} & $1.014$& $1.085$ & $\epsilon_0$ & $\epsilon_0$  & $\epsilon_0$ \\
  \hline
 Numerical $\epsilon$ lower bound & $0.369$& $0.470$ & $0.575$ & $0.664$  & $0.758$\\
  \bottomrule
\end{tabular}
\caption*{
We compare $\epsilon_{f\text{-DP}}$ obtained from Corollary \ref{coro:shuffling f to epsilon} with the corresponding numerical  upper bound $\epsilon$ derived from \cite{feldman2023stronger} using a fixed value of $\epsilon_0 = 4.444$ and $n=10000$.
For $\delta < 10^{-7}$, the bound in \cite{feldman2023stronger} fails as the assumption $\epsilon_0 \leq \log\left(\frac{n}{8\log (2/\delta)} - 1 \right)$ is violated while our theory removes this assumption and holds for all $\epsilon_0$.
Moreover, we compare our theoretical upper bound with the empirical lower bound obtained by binary search in \cite{feldman2023stronger} which shows that our bound is near-optimal.}
\label{tab:compareFeldman_numerical}
\end{table}

In summary, our non-asymptotic privacy bound for shuffled mechanisms outperforms Theorem 3.2 in \cite{feldman2023stronger}. This improvement is a result of our Proposition \ref{prop:shuffling}, which optimally refines Lemma A.4 in \cite{feldman2023stronger}. Besides Proposition \ref{prop:shuffling}, the remainder of our proof of Theorem \ref{thm:shuffling} closely adheres to the methodology presented in \cite{feldman2023stronger}. Our near-optimal result is complicated due to its tightness. Thus, it is difficult to compare our result with the asymptotic bound in \cite{feldman2023stronger} analytically.

\section{Privacy analysis of one-iteration DP-GD with random initialization}
\label{sec:DPGD}
A significant challenge in the privacy analysis of the last-iteration model of DP-SGD lies in accounting for multiple randomization techniques used during iterations. 
This includes aspects like initialization, iterative steps, and sub-sampling. Since these techniques incorporate a mixture of random noise, the joint convexity of $F$-divergence becomes crucial in the privacy analysis of DP-SGD \cite{ye2022differentially,DBLP:conf/nips/AltschulerT22}.
Our Lemma \ref{lemma:inequality}, which provides a unified perspective on these convexity notations, has driven us to include it in the privacy analysis of DP-GD. 
Nevertheless, analyzing the trajectories from multi-step iterations remains complex. 
Therefore, our initial exploration is to investigate the effects of random initialization on a one-step iterate.
It's noteworthy that in machine learning, training a deep neural network using (stochastic) gradient descent combined with random initialization is widely adopted \cite{DBLP:conf/icml/SutskeverMDH13}. The significance of random initialization in noisy gradient descent is also emphasized by \cite{ye2023initialization} within the framework of Kullback-Leibler privacy.

Consider a dataset $\mathcal{D}=\{(x_i,y_i)\}_{i=1}^n$ with $x_i\in\mathbb{R}$ being the features and $y_i\in\mathbb{R}$ being the labels.
Let $\ell(\theta,\mathcal{D})$ be a loss function and let $g(\theta,\mathcal{D})$ be the gradient of $\ell$ with respect to $\theta$.
The output of one-step iteration of
DP-GD initialized at $\theta_0$ with step-size 1 is given by
\begin{align}
\label{eq:1-stepDPGD}
    \theta (\mathcal{D}) = \theta_0 - \left(g(\theta_0,\mathcal{D}) + \mathcal{N}(0,\sigma^2)\right).
\end{align}
In the setting of random initialization, $\theta_0$ is chosen as a Gaussian random variable. Without loss of generality, we consider $\theta_0= I \sim \mathcal{N}(0, 1)$ and rewrite $
    \theta(\mathcal{D}) = s_{I}(\mathcal{D}) + \mathcal{N}(0,\sigma^2)$ with $s_{I}(\mathcal{D}) = I -  g(I,\mathcal{D}).$
$\theta(\mathcal{D})$ is a Gaussian random variable when the initialization $I$ is given, that is, $\theta(\mathcal{D})|I=i\sim \mathcal{N}(s_i(\mathcal{D}),\sigma^2).$
Thus, we can regard $\theta(\mathcal{D})$ as an infinite mixture of Gaussian distributions with continuous Gaussian weights $\{\varphi(i)\}_{i\in\mathbb{R}}$, where $\varphi$ is the pdf of $I$
and the corresponding trade-off function $T(\theta(\mathcal{D}_0),\theta(\mathcal{D}_1)$ can be bounded using the joint concavity.

For simplicity, we define $\theta(\mathcal{D})|I$ as a random variable with a given initialization $I$.
For two neighboring datasets $\mathcal{D}_0$ and $\mathcal{D}_1$, it holds  
\begin{align*}  T(\left(\theta(\mathcal{D}_0)|I, I\right), \left(\theta(\mathcal{D}_1)|I,I\right))=T((X|I,I),(Y|I,I))
\end{align*}
 with $X|I \sim \mathcal{N}(0,1)$ and $Y|I \sim\mathcal{N}(\mu_I,1)$ for $I\sim\mathcal{N}(0,1)$,
where $\mu_I = (g(I,\mathcal{D}_1) - g(I,\mathcal{D}_0))/\sigma.$

\begin{theorem}
\label{thm:DPGD}
Let $\theta(\mathcal{D}_0)$ and $\theta(\mathcal{D}_1)$ be defined in \eqref{eq:1-stepDPGD} for neighboring datasets $\mathcal{D}_0 $ and $\mathcal{D}_1 $. Then, we have
\begin{align*}
T(\theta(\mathcal{D}_0),\theta(\mathcal{D}_1)) (\alpha(t))\geq\mathbb{E}_I\left[ \Phi(-t_I + \mu_I)\cdot\mathds{1}_{[\mu_I\leq 0]} +  \Phi(t_I - \mu_I)\cdot\mathds{1}_{[\mu_I> 0]}\right] 
\end{align*}
    with $t_I = -\frac{t}{\mu_I} + \frac{\mu_I}{2}$ and 
$
      \alpha(t) =\mathbb{E}_I\left[ \Phi(t_I)\cdot\mathds{1}_{[\mu_I\leq 0]} +  \Phi(-t_I)\cdot\mathds{1}_{[\mu_I> 0]}\right].
$
Here $\Phi$ is the cumulative distribution function of $\mathcal{N}(0,1)$ and the expectation is taken with respect to $I$.
\end{theorem}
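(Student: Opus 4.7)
The plan is to apply the joint concavity of trade-off functions to the conditional pair $(\theta(\mathcal{D})|I,I)$, where $I\sim\mathcal{N}(0,1)$ is the random initialization, and then carry out the explicit Neyman–Pearson computation in each Gaussian slice. By the paper's identity
\begin{align*}
T\bigl(\left(\theta(\mathcal{D}_0)|I,I\right),\left(\theta(\mathcal{D}_1)|I,I\right)\bigr)=T((X|I,I),(Y|I,I))
\end{align*}
with $X|I\sim\mathcal{N}(0,1)$ and $Y|I\sim\mathcal{N}(\mu_I,1)$, combined with the data-processing inequality for $f$-DP applied to the forgetful map $(X,i)\mapsto X$, we immediately get $T(\theta(\mathcal{D}_0),\theta(\mathcal{D}_1))\geq T((X|I,I),(Y|I,I))$. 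This is precisely the continuous-mixture analogue of Lemma \ref{lemma:inequality}, with the sum $\sum_i w_i$ replaced by $\mathbb{E}_I$.

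The next step is an explicit Neyman–Pearson computation in the product space $\mathbb{R}\times\mathbb{R}$. Conditional on $I=i$, the likelihood ratio of the joint distributions is
\begin{align*}
\frac{q_i(x)}{p_i(x)}=\exp\!\left(\mu_i x-\frac{\mu_i^2}{2}\right),
\end{align*}
which depends only on $x$. For a likelihood-ratio threshold parametrized as $e^{-t}$, the rejection region splits by the sign of $\mu_i$: when $\mu_i>0$ we reject if $x>t_i:=-t/\mu_i+\mu_i/2$, when $\mu_i<0$ we reject if $x<t_i$, and when $\mu_i=0$ the two conditional laws coincide and contribute nothing to distinguishability. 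A direct Gaussian computation then gives, for $\mu_i>0$, the conditional Type I and Type II errors $\Phi(-t_i)$ and $\Phi(t_i-\mu_i)$, while for $\mu_i\le 0$ they are $\Phi(t_i)$ and $\Phi(-t_i+\mu_i)$. Taking expectation over $I$ assembles these into the stated $\alpha(t)$ and the lower bound for the trade-off function.

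The main obstacle I anticipate is justifying the continuous version of the joint concavity bound, since Lemma \ref{lemma:inequality} is stated for finite mixtures. The cleanest route is to argue directly from post-processing: the inequality $T(\theta(\mathcal{D}_0),\theta(\mathcal{D}_1))\geq T((X|I,I),(Y|I,I))$ holds in full generality because projecting out the auxiliary coordinate $I$ is a data-independent measurable map, and $f$-DP is closed under such post-processing on arbitrary measurable spaces. Alternatively, one may approximate the Gaussian law of $I$ by finitely supported distributions on an increasing sequence of grids, apply Lemma \ref{lemma:inequality} for each, and pass to the limit using the continuity of the trade-off function; dominated convergence applied to the conditional error expressions $\Phi(-t_I+\mu_I)\mathds{1}_{[\mu_I\le 0]}+\Phi(t_I-\mu_I)\mathds{1}_{[\mu_I>0]}$ and its $\alpha$-counterpart (both bounded by $1$) then handles the limit. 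Everything else, including the case split on $\operatorname{sgn}(\mu_I)$ and the Gaussian tail identities, is routine.
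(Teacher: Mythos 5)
Your proposal is correct and follows essentially the same route as the paper: reduce to $T((X|I,I),(Y|I,I))$ via post-processing (the paper records this as the continuous analogue of Lemma~\ref{lemma:inequality} in a remark after that lemma's proof), then carry out the Neyman--Pearson computation with the sign split on $\mu_I$ and Gaussian tail identities. Your additional care about justifying the continuous-mixture step—either directly by post-processing on general measurable spaces or by discretization and a dominated-convergence limit—is the same resolution the paper invokes, stated a bit more explicitly.
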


\textbf{Remark.} Note that Theorem \ref{thm:DPGD} is instance-based privacy guarantee as it relies on the datasets.
To extend it to the worst case, we let $\mu_I^{\mathrm{max}} = \max_{\mathcal{D}_0,\mathcal{D}_1}\left\{\left|g(I,\mathcal{D}_1^{\max}) - g(I,\mathcal{D}_0^{\max})\right|/\sigma\right\}$ be the sensitivity of the gradient with a given initialization $I$.
As a result, $\theta(\mathcal{D})$ output by one-step  DP-GD is $f$-DP with $f(\alpha(t))=\mathbb{E}_I\left[  \Phi(t_I^{\mathrm{max}} - \mu_I^{\mathrm{max}})\right],
$
where $t_I^{\mathrm{max}} = -\frac{t}{\mu_I^{\mathrm{max}}} + \frac{\mu_I^{\mathrm{max}}}{2}$ and 
$
      \alpha(t) =\mathbb{E}_I\left[ \Phi(-t_I^{\mathrm{max}})\right].
$
The worst case trade-off function is bounded for strongly convex loss functions with a bounded data domain.

To numerically evaluate the trade-off function in Theorem \ref{thm:DPGD}, we consider an example $\mathcal{D}_0=\{(x_i,y_i)\}_{i=1}^n$ 
with $y_i=a x_i$ and $x_i^2 = 1$ for some constant $a$ and we defined $\mathcal{D}_1$ by removing an arbitrary element in $\mathcal{D}_0$.
Moreover, we assume that $\sigma=1.$
Note that for this example without gradient clipping, the gradient is linear in $I$ and $\theta(\mathcal{D}_0)$ is the sum of two Gaussian random variables which is Gaussian.
Thus, the trade-off function has a closed-form representation.
In general, the output is non-Gaussian and we should adopt Theorem \ref{thm:DPGD}. 
For example, if we consider gradient clipping \cite{abadi2016DPSGD, Bu2020Deep} and replace $g(\theta,\mathcal{D})$ by the clipped gradient
\begin{align*}
    g_c(\theta,\mathcal{D}) = \sum_{i=1}^n \frac{g^{(i)}(\theta)}{\max\{1, \|g^{(i)}(\theta)\|_2/c\}}, \qquad \hbox{with } g^{(i)}(\theta) = (y_i - \theta x_i) (-x_i),
\end{align*}
 where the gradient of each data point $g^{(i)}$ is cut off by some constant $c > 0$,
 then $\mu_I^{\max}$ is given by
\begin{align*}
    \mu_{I}^{\max} = \left\{
     \begin{array}{cc}
        a - I,  & |a - I| \leq c ,  \\
       c,   & a - I \geq c, \\
       -c, & a - I\leq -c,
     \end{array}
    \right.
\end{align*}
which is not Gaussian.
In this example $g_c(\theta,\mathcal{D}) + \mathcal{N}(0,1)$ is considered as $c$-GDP  if we disregard the effects of random initialization since the sensitivity of $g_c$ is $c$.

We illustrate the trade-off function of Theorem \ref{thm:DPGD} computed numerically in Figure \ref{fig:DPGD_clip}, where we also compare it with $c$-GDP for $a=1$ and varying values of $c$. 
Overall, the figure suggests that random initialization can amplify the privacy of DP-GD, as our bounds outperform those of $c$-GDP, which does not take into account the randomness of initialization.
Furthermore, we observe that as $c$ increases, the amplification effect caused by random initialization becomes more significant, since the difference between $T((X|I,I), (Y|I,I))$ and $c$-GDP also increases. This is reasonable, since the randomness resulting from initialization comes from $I$ such that $|a - I| \leq c$, whereas for $|a - I| > c$, $\mu_I$ remains constant and no randomness is introduced.
Thus, the random initialization introduces greater levels of randomness as $c$ increases.

It is worth noting that in this example, without gradient clipping, we have $\mu_I^{\max} = a - I$ and the dominate pair are two Gaussian distributions $\mathcal{N}(0,1)$ and $\mathcal{N}(0,2)$.
The R\'enyi DP fails to measure the privacy of initialization.
In fact, it holds $R_{\widetilde{\alpha}}(\mathcal{N}(0,1) \| \mathcal{N}(0,2) ) = \infty$ for $\widetilde{\alpha}$ large enough.

\begin{figure}[ht]
     \centering
          \begin{subfigure}[b]{0.3\textwidth}
         \centering         \includegraphics[width=\textwidth]{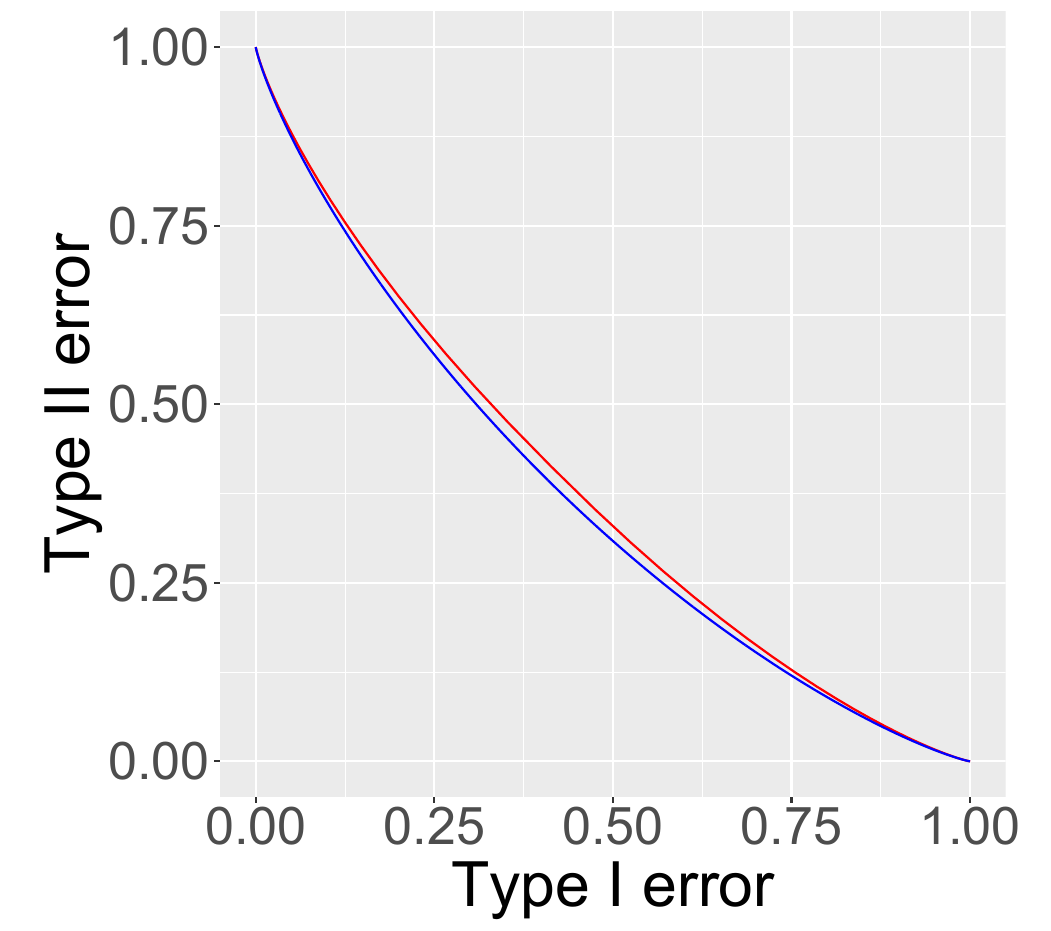}
         \caption{$c=0.5$}       \label{fig:DPGD_clip_0.5}
     \end{subfigure}
     \hfill
     \begin{subfigure}[b]{0.3\textwidth}
    \centering           \includegraphics[width=\textwidth]{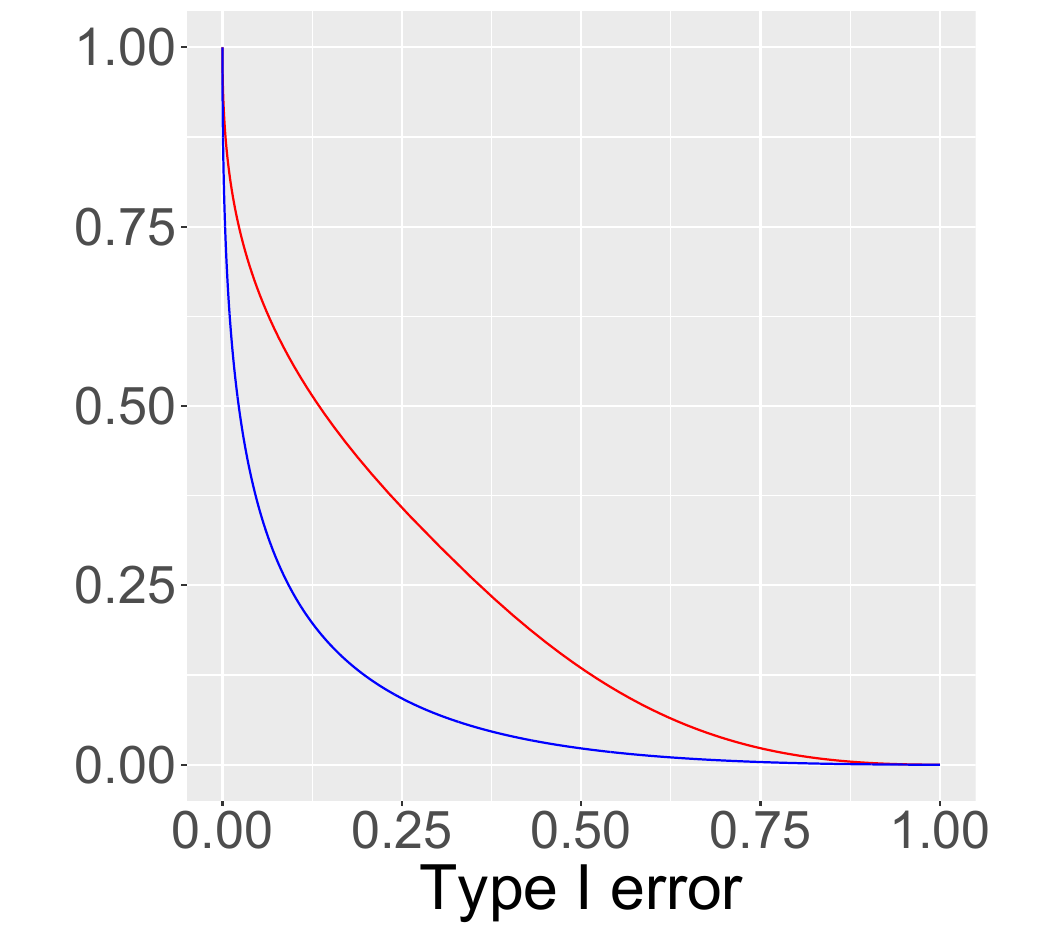}
    \caption{$c=2$}
    \label{fig:DPGD_clip_gdp}
     \end{subfigure}
     \hfill
     \begin{subfigure}[b]{0.36\textwidth}
         \centering         \includegraphics[width=\textwidth]{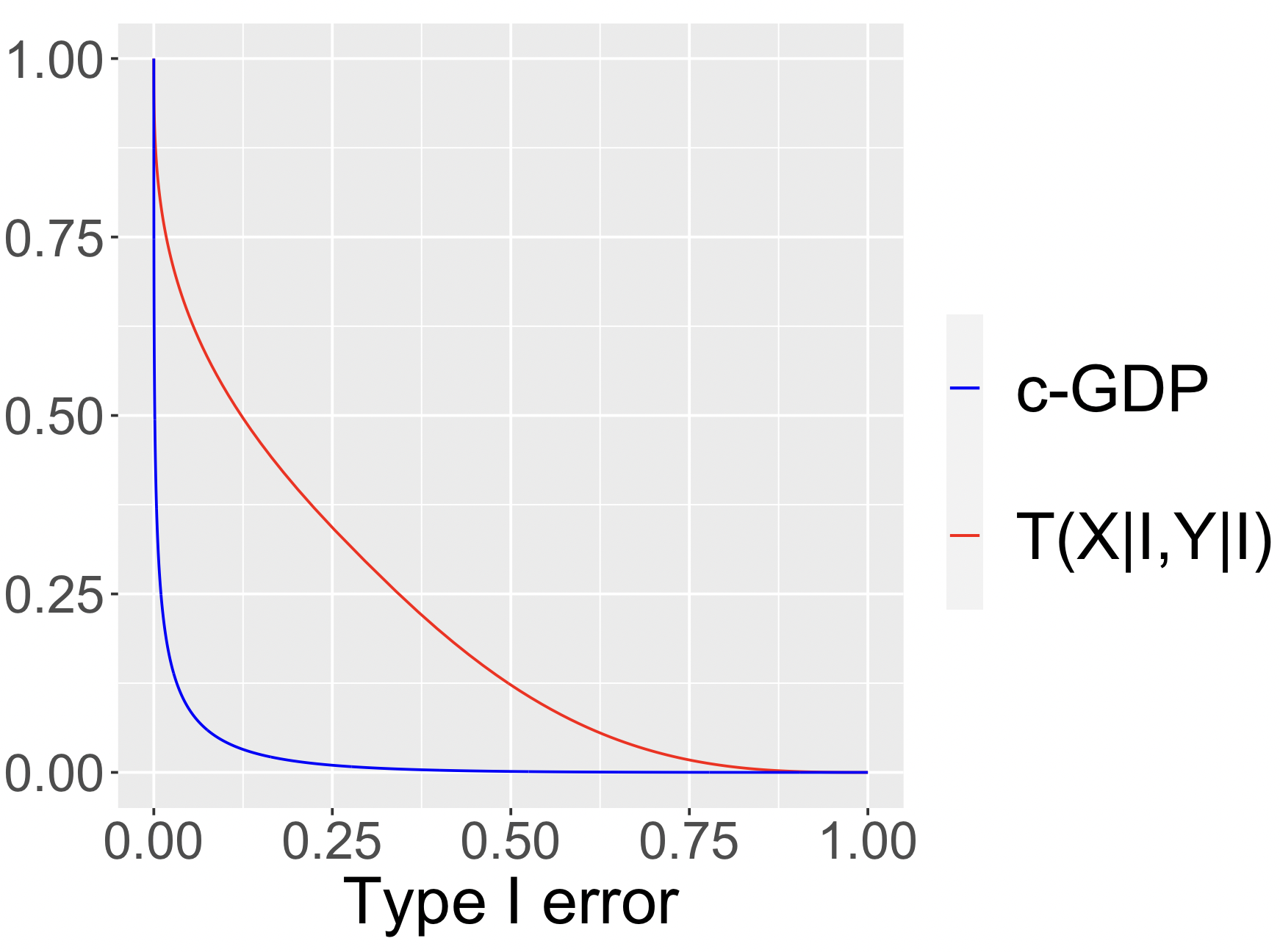}
         \caption{$c=3$}       \label{fig:c-gdp}
     \end{subfigure}
    \caption{Trade-off functions for linear models with $a=1$. }
    \label{fig:DPGD_clip}     
\end{figure}

\section{Optimality of joint concavity and advanced joint concavity}
\label{sec:theory}
In this section, we first explore the sufficient and necessary conditions under which Lemma \ref{lemma:inequality} holds with equality. While Lemma \ref{lemma:inequality} is generally not sharp, we introduce an $f$-DP analog of the advanced joint convexity of the hockey-stick divergence from \cite{balle2018privacy}, yielding tighter bounds in certain applications.

\begin{figure}
     \centering
          \begin{subfigure}[b]{0.45\textwidth}
         \centering       \includegraphics[width=0.7\textwidth]{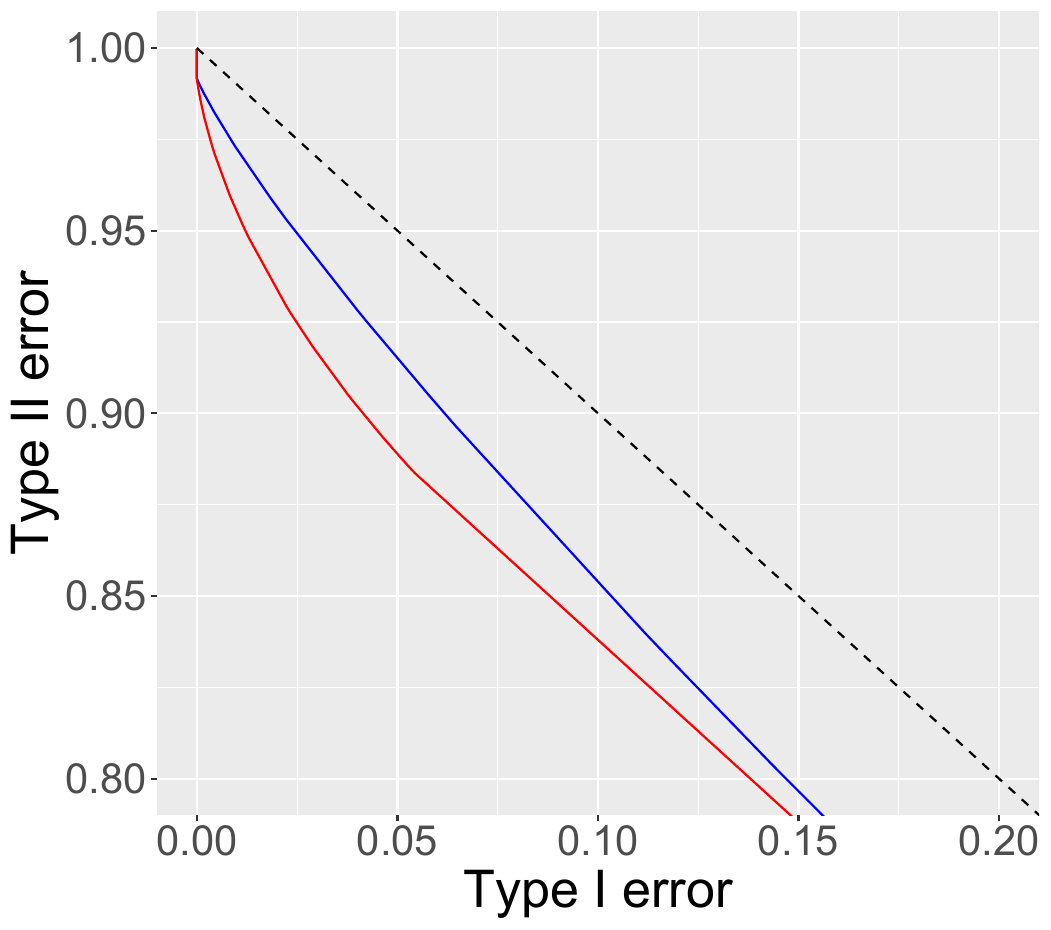}
         \caption{Top-left corner of Figure \ref{fig:compare_shuffling} for small type I error $\alpha$. As the parameters $(\epsilon,\delta)$ computed by the trade-off function only depend on small $\alpha$, advanced joint concavity leads to a tighter bound.}
     \end{subfigure}
     \hfill
     \begin{subfigure}[b]{0.45\textwidth}
         \centering    \includegraphics[width=\textwidth]{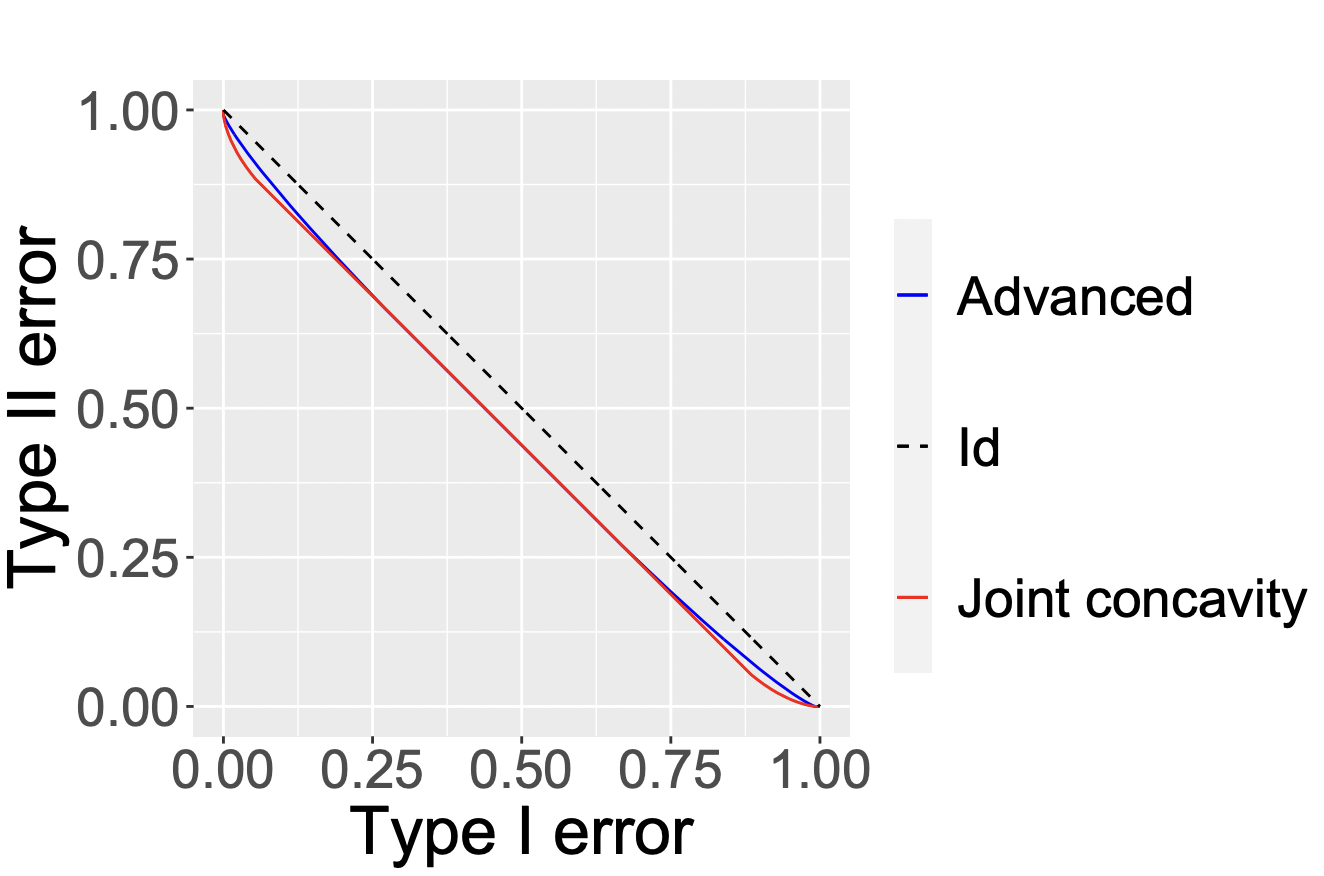} \caption{Trade-off functions obtained by (advanced) joint concavity. For example, for $w=1/3$ and $\epsilon=0.5$, $\delta$  derived from Lemma \ref{lemma: f advanced joint convexity} is $1.5 \times 10^{-6}$ while that from Lemma \ref{lemma:inequality} is $0.0020.$
         }        \label{fig:compare_shuffling}
     \end{subfigure}
     \hfill
    \caption{Comparison between joint concavity (Lemma \ref{lemma:inequality}) and advanced joint concavity (Lemma \ref{lemma: f advanced joint convexity}).}
        \label{fig:Tightness} 
\end{figure}

Recall the distributions $P = (1 - w) P_0 + w Q_0$ and $Q = (1 - w) Q_0 + w P_0$ that appear in the shuffled mechanisms.
Bounding the trade-off function $T(P,Q)$ directly using the joint concavity leads to a loose bound (cf., Figure \ref{fig:compare_shuffling}).
For the scenarios where Lemma \ref{lemma:inequality} is not tight, we introduce the $f$-DP analog of the advanced joint convexity of $(\epsilon,\delta)$-DP \cite{balle2018privacy} that may lead to tighter bounds and we term it the "advanced joint concavity of trade-off functions".

The following proposition presents a necessary and sufficient condition for Lemma \ref{lemma:inequality} to hold with equality.

\begin{proposition}
   \label{prop:iff-equality}
    For $m=2$,
    Lemma \ref{lemma:inequality} holds with equality if and only if  $\frac{w_1 p_1 + w_2 p_2}{w_1 q_1 + w_2 q_2} (X) \overset{\mathbb{P}}{=} w_1 \frac{p_1}{q_1}(X) + w_2 \frac{p_2}{q_2}(X)
$
 with $X\sim P_{\bfw}$, where for $p_i(X)/q_i(X) = 0/0$ and $p_j(X)/q_j(X) \neq 0/0$ with $i\neq j$, we set $p_i(X)/q_i(X) = p_j(X)/q_j(X)$.
\end{proposition}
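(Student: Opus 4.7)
My plan is to characterize the equality case in Lemma \ref{lemma:inequality} through the tightness of the data-processing inequality that underpins its proof. Recall that the lemma realizes its right-hand side as the trade-off function $T((X|I,I),(Y|I,I))$ of the index-augmented distributions and then passes to $T(P_\bfw,Q_\bfw)$ via the post-processing map $(X,I)\mapsto X$. Equality therefore should hold if and only if this post-processing is sufficient for the binary hypothesis $\{P_\bfw,Q_\bfw\}$. By the Neyman--Fisher criterion, sufficiency is equivalent to the joint likelihood ratio $L_I(X)=q_I(X)/p_I(X)$ being $\sigma(X)$-measurable almost surely; for $m=2$ and positive weights, this reduces to $r_1(X)=r_2(X)$ almost surely on the support of $P_\bfw$, where $r_i=p_i/q_i$ interpreted under the $0/0$ convention from the proposition.

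Next I will verify that this sufficiency condition is equivalent to the stated algebraic identity. Writing both sides of the identity in terms of $r_i$ and $q_i$ and subtracting, a short calculation gives
\begin{equation*}
\frac{p_\bfw(X)}{q_\bfw(X)}-\bigl(w_1 r_1(X)+w_2 r_2(X)\bigr)
=\frac{w_1 w_2\,\bigl(r_1(X)-r_2(X)\bigr)\bigl(q_1(X)-q_2(X)\bigr)}{w_1 q_1(X)+w_2 q_2(X)},
\end{equation*}
so the identity is equivalent to $(r_1-r_2)(q_1-q_2)=0$ almost surely under $P_\bfw$. The forward direction is immediate: $r_1=r_2$ makes the numerator vanish, so the identity is a restatement of sufficiency. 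For the converse I use the $0/0$ convention: wherever one component's ratio is $0/0$, it is set equal to the other well-defined ratio, which pins $r_1=r_2$ on the coincidence locus $\{q_1=q_2\}$ where the second factor vanishes but $r_1\neq r_2$ could a priori occur. This collapses the disjunction $\{r_1=r_2\}\cup\{q_1=q_2\}$ back to $\{r_1=r_2\}$ on the support of $P_\bfw$ and closes the equivalence.

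The main obstacle I expect is this converse step: showing that the disjunctive identity $(r_1-r_2)(q_1-q_2)=0$ genuinely forces $r_1=r_2$, rather than leaving a loophole where $q_1=q_2$ but $r_1\neq r_2$ on a set of positive $P_\bfw$-mass. Carefully unpacking the $0/0$ convention, which ties together the vanishing patterns of $p_i$ and $q_i$ across the two components, is what makes the argument go through. A secondary subtle point is to make precise that equality in Lemma \ref{lemma:inequality} is intended in the \emph{global} sense (at every $(t,c)$ simultaneously); equality at a single $\alpha(t,c)$ is strictly weaker than sufficiency and would correspond only to a pointwise agreement of rejection regions on the LR level sets rather than to the algebraic identity characterized above.
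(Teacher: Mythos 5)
Your approach is genuinely different from the paper's. The paper argues via the Blackwell order: it invokes the equivalence (from Theorem 2.10 of \cite{dong2019gaussian}) between $T(P_\bfw,Q_\bfw) \geq T(P_I,Q_I)$ and Blackwell dominance, then characterizes Blackwell equivalence using Mu et al.'s ``perfected'' log-likelihood-ratio distributions $\tilde F_1, \tilde G_1$, reducing equality to $F_0 = G_0$, i.e.\ to the coincidence of the distributions of the two log-likelihood ratios. You instead go through the Neyman--Fisher sufficiency criterion: equality of trade-off functions $\Leftrightarrow$ the map $(X,I)\mapsto X$ is a sufficient statistic $\Leftrightarrow$ the joint likelihood ratio $p_I/q_I$ is $\sigma(X)$-measurable. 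These are legitimately different routes; your sufficiency step needs a conditional-Jensen argument to be made airtight (equal distributions of $L$ and $\mathbb{E}[L\mid\sigma(X)]$ together with strict convexity forces $L = \mathbb{E}[L\mid\sigma(X)]$ a.s.), and you assert it rather than prove it, but the plan is sound up to that point, and your algebraic identity
\[
\frac{p_\bfw}{q_\bfw} - (w_1 r_1 + w_2 r_2) = \frac{w_1 w_2\,(r_1 - r_2)(q_1 - q_2)}{w_1 q_1 + w_2 q_2}
\]
is a correct computation.

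The genuine gap is precisely the one you flag as the ``main obstacle,'' and your proposed fix does not close it. The factored form shows the stated identity is equivalent to $(r_1 - r_2)(q_1 - q_2) = 0$ a.s.\ under $P_\bfw$, whereas your sufficiency criterion demands $r_1 = r_2$ a.s. The $0/0$ convention only governs the degenerate locus where a pair $(p_i, q_i)$ both vanish; it says nothing about the set $\{q_1 = q_2 \neq 0,\ r_1 \neq r_2\}$, which can have full measure. Concretely, take $w_1 = w_2 = 1/2$, $q_1 = q_2 = \mathrm{Unif}[0,1]$, $p_1(x) = 2x$, $p_2(x) = 2(1-x)$: then $q_1 - q_2 \equiv 0$ so your identity holds everywhere (indeed both sides equal $1$), yet $r_1(x) = 2x \neq 2(1-x) = r_2(x)$, so the index-removal map is not sufficient; here $P_\bfw = Q_\bfw = \mathrm{Unif}[0,1]$ gives $T(P_\bfw, Q_\bfw) = \mathrm{Id}$ while $T(P_I,Q_I)$ is strictly below $\mathrm{Id}$, so Lemma~\ref{lemma:inequality} is strict. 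This shows the disjunction $\{r_1=r_2\}\cup\{q_1=q_2\}$ cannot be collapsed as you claim; the converse direction of your proof, as written, is false. (The same example is worth scrutinizing against the proposition itself, whose $\overset{\mathbb{P}}{=}$ should really encode equality of the law of $\log(p_\bfw/q_\bfw)$ under $P_\bfw$ with the law of $\log(p_I/q_I)$ under $P_I$, as the Blackwell argument produces; the passage from $F_0 = G_0$ to the displayed pointwise-looking identity is also where the paper's own proof is terse.)
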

It is not difficult to see that $P_0$ and $Q_0$ in shuffling models satisfy this necessary and sufficient condition when $n=2$.

As we discussed, Lemma \ref{lemma:inequality} may not be sharp in general.
The following lemma is about the advanced joint convexity of the hockey-stick divergence, which is a slight generalization of Theorem 2 in \cite{balle2018privacy}.

\begin{lemma}
\label{lemma: HS advanced joint convexity}
For any non-negative $\epsilon', \epsilon_0,\epsilon_1, \gamma,$ and $ \eta$ satisfying
$
    \exp(\epsilon') = (1 - w) \exp(\epsilon_0) + w \exp(\epsilon_{1})
$
and 
$
    \exp(\epsilon_0) (1 - w) \gamma + \exp(\epsilon_1) w \eta = \exp(\epsilon') w,
$
we have
\begin{align}\label{eq:HS-advanced}
\begin{split}
    &H_{e^{\epsilon'}} \left((1 - w) P_{1} + w P_{2} \| (1-w) Q_{1} + w Q_{2} \right)\\
   & \leq\  (1 - w) H_{e^{\epsilon_0}} \left(P_1 \| (1-\gamma) Q_{1} + \gamma Q_{2} \right) + w H_{e^{\epsilon_1}} \left(P_{2} \| (1-\eta) Q_{1} + \eta Q_{2} \right).
\end{split}
\end{align}
\end{lemma}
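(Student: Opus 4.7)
The plan is to reduce the statement to a direct application of the subadditivity of the positive-part map $(a+b)_{+}\leq a_{+}+b_{+}$, after splitting the integrand of the left-hand side into two pieces that are pointwise matched to the two hockey-stick integrands on the right. Writing $p_1,p_2,q_1,q_2$ for the densities of $P_1,P_2,Q_1,Q_2$, the LHS of \eqref{eq:HS-advanced} is
\begin{equation*}
\int\bigl((1-w)p_1(x)+wp_2(x)-e^{\epsilon'}(1-w)q_1(x)-e^{\epsilon'}wq_2(x)\bigr)_{+}\,dx.
\end{equation*}

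I would then show that, under the two algebraic constraints on $(\epsilon',\epsilon_0,\epsilon_1,\gamma,\eta)$, the integrand admits the pointwise decomposition
\begin{equation*}
(1-w)\bigl[p_1-e^{\epsilon_0}((1-\gamma)q_1+\gamma q_2)\bigr]+w\bigl[p_2-e^{\epsilon_1}((1-\eta)q_1+\eta q_2)\bigr].
\end{equation*}
The $p_1,p_2$ coefficients match trivially; matching the $q_1$ coefficient requires $e^{\epsilon'}(1-w)=(1-w)e^{\epsilon_0}(1-\gamma)+we^{\epsilon_1}(1-\eta)$, and matching the $q_2$ coefficient requires $e^{\epsilon'}w=(1-w)e^{\epsilon_0}\gamma+we^{\epsilon_1}\eta$. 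The latter is exactly the second hypothesis of the lemma, and summing the two requirements yields the first hypothesis $e^{\epsilon'}=(1-w)e^{\epsilon_0}+we^{\epsilon_1}$; conversely, given the two hypotheses, the $q_1$ identity follows by subtracting the $q_2$ identity from the first hypothesis multiplied by $(1-w)+w$. So the decomposition is valid precisely under the stated conditions.

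With the decomposition in hand, applying $(a+b)_{+}\leq a_{+}+b_{+}$ pointwise and integrating gives
\begin{align*}
&\int\bigl((1-w)p_1+wp_2-e^{\epsilon'}((1-w)q_1+wq_2)\bigr)_{+}dx\\
&\quad\leq(1-w)\int\bigl(p_1-e^{\epsilon_0}((1-\gamma)q_1+\gamma q_2)\bigr)_{+}dx+w\int\bigl(p_2-e^{\epsilon_1}((1-\eta)q_1+\eta q_2)\bigr)_{+}dx,
\end{align*}
which is exactly \eqref{eq:HS-advanced} after recognizing each integral on the right as a hockey-stick divergence.

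There is essentially no analytic obstacle here; the content of the lemma is the algebraic bookkeeping of the coefficients. The only step that requires care is verifying that the two hypotheses are precisely what is needed so that the sub-densities $(1-\gamma)q_1+\gamma q_2$ and $(1-\eta)q_1+\eta q_2$ are themselves valid mixture densities, i.e.\ that $\gamma,\eta\in[0,1]$; this is built into the hypothesis that these parameters are non-negative together with the first constraint (which forces $\gamma,\eta\leq1$ via the second constraint and non-negativity of $w,1-w$). I would include a brief remark that the generalization over \cite{balle2018privacy} lies in permitting this two-parameter family $(\gamma,\eta)$ rather than fixing $\gamma=\eta=w$, which is what yields the tighter bounds used in Proposition \ref{prop:advancedShuffling}.
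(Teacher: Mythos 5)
Your proof is correct and mirrors the paper's: both establish the same pointwise decomposition of the integrand under the two algebraic constraints and then conclude via the subadditivity $(a+b)_+\leq a_+ + b_+$ of the positive part, which you state explicitly and the paper leaves implicit in its final sentence. The coefficient matching you write out (the $q_2$ condition is the second hypothesis, the $q_1$ condition follows from subtracting it from the first hypothesis) is exactly the calculation in the paper.

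One small correction to your closing remark: the hypotheses do not in fact force $\gamma,\eta\leq 1$. Take $w=\tfrac12$, $e^{\epsilon_0}=10$, $e^{\epsilon_1}=1$, so $e^{\epsilon'}=\tfrac{11}{2}$; then $\gamma=0$, $\eta=\tfrac{11}{2}$ satisfies both constraints with every parameter non-negative. In that case $(1-\eta)Q_1+\eta Q_2$ is a signed measure rather than a genuine mixture. This does no harm to the argument, since $\int(p-\alpha q)_+\,dx$ and the subadditivity step remain valid for signed $q$, so the lemma holds as stated without any implicit restriction to $\gamma,\eta\in[0,1]$; the ``step that requires care'' you flagged is simply not needed.
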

Lemma \ref{lemma: HS advanced joint convexity} is reduced to the advanced joint convexity of the hockey-stick divergence in \cite{balle2018privacy}
when $P_1 = Q_1,$ by minimizing the right-hand-side of \eqref{eq:HS-advanced} with respect to $\gamma, \eta, \epsilon_0,$ and $\epsilon_1$.

Recall the convex conjugate $g^*$ of a function $g$ defined by $g^*(y) = \sup_x \{xy - g(x)\}$ and $\mathcal{C}(f) = \min\{f,f^{-1}\}^{**}$ which is the symmetrization of $f$.
We have the following advanced joint concavity of trade-off functions.

\begin{lemma}[Advanced joint concavity] 
\label{lemma: f advanced joint convexity}
Suppose that $T(P_i,Q_i)$ is symmetric for each $i$. Then, for $0\leq w\leq 1$, we have
    \begin{align*}
        T((1 - w) &P_{1} + w P_{2}, (1-w) Q_{1} + w Q_{2}) \\
        &\geq 
        \mathcal{C} \Bigg(\bigg((1 - w)(1 - \gamma) F_{1,1}^* + w(1 - \eta) F_{2,1}^* + (1 - w) \gamma F_{1,2}^{*} + w \eta F_{2,2}^{*}  \bigg)^{*} \Bigg)
    \end{align*}
    for arbitrary $0 \leq \gamma < w < \eta \leq 1$, where $F_{i,j}(x)$ is given by 
    $ F_{1, i}(x) := f_{1,i} \left(\frac{x (1 - w) (\eta - \gamma)}{(\eta - w)} \right),$ and 
    $F_{2, i}(x) := f_{2,i} \left(\frac{ x w (\eta - \gamma)}{(w - \gamma)}\right),$ 
    and the trade-off functions are defined as$
        f_{i,j} = T(P_i, Q_j)$  for $1 \leq i,j \leq 2$.
Moreover, for $\gamma = \eta = w$, it holds
    \begin{align*}
        T((1 - w) P_{1} + w P_{2}, &(1-w) Q_{1} + w Q_{2}) 
        \\
        &\geq \mathcal{C}\left((1 - w) T \left(P_1 , (1 - w) Q_1 + w Q_2 \right) + w T \left(P_2 ,(1 - w) Q_1 + w Q_2 \right)\right).
    \end{align*}
\end{lemma}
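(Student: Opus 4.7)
The plan is to reduce the advanced joint concavity of trade-off functions to the advanced joint convexity of hockey-stick divergences (Lemma~\ref{lemma: HS advanced joint convexity}) by working in the primal--dual dictionary $H_{e^{\epsilon}}(P\|Q) = 1 + T(P,Q)^*(-e^{\epsilon})$ from $f$-DP. Writing $P = (1-w)P_1 + wP_2$, $Q = (1-w)Q_1 + wQ_2$, and $f = T(P,Q)$, the goal is a pointwise conjugate inequality $f^*(-e^{\epsilon'}) \leq G(-e^{\epsilon'})$ for every admissible $\epsilon' \geq 0$, where
\[
G := (1-w)(1-\gamma) F_{1,1}^* + w(1-\eta) F_{2,1}^* + (1-w)\gamma F_{1,2}^* + w\eta F_{2,2}^*.
\]
From $f^* \leq G$ pointwise, biconjugation (and $f = f^{**}$, valid for closed convex trade-off functions) delivers $f \geq G^*$, and the monotonicity $\mathcal{C}(G^*) \leq G^*$ then yields the stated lemma.

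To establish the pointwise conjugate inequality, fix $\epsilon' \geq 0$ and let $\epsilon_0, \epsilon_1$ be the values determined by the two constraints of Lemma~\ref{lemma: HS advanced joint convexity}. That lemma gives
\[
H_{e^{\epsilon'}}(P\|Q) \leq (1-w) H_{e^{\epsilon_0}}\bigl(P_1 \,\big\|\, (1-\gamma)Q_1 + \gamma Q_2\bigr) + w H_{e^{\epsilon_1}}\bigl(P_2 \,\big\|\, (1-\eta)Q_1 + \eta Q_2\bigr).
\]
To control each inner hockey-stick term, I first invoke joint concavity (Lemma~\ref{lemma:inequality}) with matching mixture weights, obtaining $T(P_1, (1-\gamma)Q_1 + \gamma Q_2) \geq (1-\gamma) f_{1,1} + \gamma f_{1,2}$ together with the analogous estimate for the $P_2$ term; then translate back via $H_{e^{\epsilon}} = 1 + (\cdot)^*(-e^{\epsilon})$; and finally split the conjugate of a convex combination through the elementary inequality $(\lambda g + (1-\lambda) h)^*(y) \leq \lambda g^*(y) + (1-\lambda) h^*(y)$, which is immediate from the sup of a convex combination being bounded by the convex combination of sups. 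This yields
\[
H_{e^{\epsilon_0}}\bigl(P_1 \,\big\|\, (1-\gamma)Q_1 + \gamma Q_2\bigr) \leq 1 + (1-\gamma) f_{1,1}^*(-e^{\epsilon_0}) + \gamma f_{1,2}^*(-e^{\epsilon_0})
\]
and a parallel bound for the $P_2$ term.

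The algebraic crux is matching scales. Eliminating $w e^{\epsilon'}$ between the two constraints yields $(1-w)(w-\gamma) e^{\epsilon_0} = w(\eta-w) e^{\epsilon_1}$, from which one derives $e^{\epsilon'}/e^{\epsilon_0} = (1-w)(\eta-\gamma)/(\eta-w)$ and $e^{\epsilon'}/e^{\epsilon_1} = w(\eta-\gamma)/(w-\gamma)$; these are precisely the dilations built into $F_{1,i}$ and $F_{2,i}$. The transformation rule $F(x) = f(bx) \Rightarrow F^*(y) = f^*(y/b)$ then converts each $f_{i,j}^*(-e^{\epsilon_k})$ into $F_{i,j}^*(-e^{\epsilon'})$. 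Substituting into the chain above and canceling the constant $1$'s produces $f^*(-e^{\epsilon'}) \leq G(-e^{\epsilon'})$; passing to the Legendre conjugate and invoking $\mathcal{C}(G^*) \leq G^*$ completes the bound $f \geq G^* \geq \mathcal{C}(G^*)$.

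For the degenerate case $\gamma = \eta = w$, the two constraints collapse into $e^{\epsilon'} = (1-w) e^{\epsilon_0} + w e^{\epsilon_1}$, admitting $\epsilon_0 = \epsilon_1 = \epsilon'$ with all scaling factors equal to one. A cleaner direct route is to apply Lemma~\ref{lemma:inequality} with $Q_1' = Q_2' = (1-w)Q_1 + wQ_2$ and $P_i' = P_i$, which immediately yields $T(P,Q) \geq (1-w) T(P_1, Q) + w T(P_2, Q)$, followed by $\mathcal{C}$-symmetrization. I expect the main obstacle to be the bookkeeping that aligns the two scaling factors $b_1, b_2$ with the constraint parameters of Lemma~\ref{lemma: HS advanced joint convexity}; once those identities are in hand, the rest is routine Legendre duality combined with the monotonicity $\mathcal{C}(h) \leq h$.
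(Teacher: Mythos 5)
Your main argument follows the same route as the paper: invoke Lemma~\ref{lemma: HS advanced joint convexity}, expand each inner hockey-stick term via joint convexity, pass to convex conjugates using $H_{e^\epsilon}(P\|Q)=1+T(P,Q)^*(-e^\epsilon)$, identify the scaling factors from the two constraint equations, and biconjugate; this matches the paper's derivation of $F=\bigl(\sum w_{ij}F_{ij}^*\bigr)^*$.

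Two points, however, warrant correction. First, a citation issue in the intermediate step: the inequality you write, $T(P_1,(1-\gamma)Q_1+\gamma Q_2)\geq(1-\gamma)f_{1,1}+\gamma f_{1,2}$ \emph{as a pointwise inequality}, does not follow from Lemma~\ref{lemma:inequality} — that lemma bounds $T(P_\bfw,Q_\bfw)$ only at the knots $\alpha(t,c)=\sum w_i\alpha_i(t,c)$, not at a common abscissa. In this special case (first argument not a mixture) the pointwise bound happens to hold, but by the elementary observation $\inf_\phi\{(1-\gamma)\beta_{Q_1}(\phi)+\gamma\beta_{Q_2}(\phi)\}\geq(1-\gamma)\inf_\phi\beta_{Q_1}(\phi)+\gamma\inf_\phi\beta_{Q_2}(\phi)$ over the same feasible set, or, more directly, by Corollary~\ref{coro:main-thm-to-DP} at the hockey-stick level; either route should replace the appeal to Lemma~\ref{lemma:inequality}.

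Second, and more seriously, your proposed \emph{cleaner direct route} for the degenerate case $\gamma=\eta=w$ contains a genuine gap. You assert that Lemma~\ref{lemma:inequality} applied with $Q_1'=Q_2'=(1-w)Q_1+wQ_2=:Q$ ``immediately yields'' $T(P,Q)\geq(1-w)T(P_1,Q)+wT(P_2,Q)$. This is false. When the \emph{first} argument is a mixture, Lemma~\ref{lemma:inequality}'s conclusion is not the pointwise convex combination of trade-off functions, and indeed the pointwise inequality fails. For a concrete counterexample, take $P_1=\mathrm{Unif}[0,1]$, $P_2=\mathrm{Unif}[2,3]$, $Q_1=Q_2=\mathrm{Unif}[0.5,1.5]$, $w=1/2$: then $T(P_1,Q_1)=T(P_2,Q_2)$ are symmetric (so the hypotheses of the lemma hold), one computes $T(P,Q)(\alpha)=\max\{0,0.5-2\alpha\}$ and $(1-w)T(P_1,Q)(\alpha)+wT(P_2,Q)(\alpha)=\max\{0,0.25-0.5\alpha\}$, and at $\alpha=0.2$ the former is $0.1$ while the latter is $0.15$. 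The statement in the lemma with $\mathcal{C}(\cdot)$ applied \emph{is} true (and is recovered here: $\mathcal{C}(g)(0.2)=0.05\leq 0.1$), but it does not follow from ``immediately apply Lemma~\ref{lemma:inequality} and then symmetrize.'' You need the dual/infimal-convolution route the paper takes, which produces a bound only on $T(P,Q)^*$ at arguments $\leq -1$; the $\mathcal{C}$ in the statement is doing real work, not cosmetic symmetrization after a stronger pointwise bound.
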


Determining the trade-off functions using advanced joint concavity can be challenging in many practical situations.
In fact, to apply the advanced joint concavity, one need to specify the choice of $\gamma,\eta$ by maximizing the right-hand-side of Lemma \ref{lemma: f advanced joint convexity}.
Therefore, in real-world applications, we often rely on both joint concavity and advanced joint concavity.

For $P=(1- w) P_0 + wQ_0$ and $Q=(1- w) Q_0 + wP_0$ in shuffling models, we have the following bound derived from Lemma \ref{lemma: f advanced joint convexity}.

\begin{proposition}
\label{prop:advancedShuffling}
For $P=(1- w) P_0 + wQ_0$ and $Q=(1- w) Q_0 + wP_0$ with some weight $0\leq w \leq 1/2$, we have 
$
T(P,Q) \geq \mathcal{C} \left( 2w \mathrm{Id} + (1 - 2w) T(P_0,Q_0) \right).
$
\end{proposition}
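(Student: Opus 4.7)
My approach is to leverage an alternative representation of the pair $(P, Q)$. Starting from the identity
\[
(1-2w) P_0 + 2w \cdot \tfrac{P_0 + Q_0}{2} = (1-w) P_0 + w Q_0 = P,
\]
and the analogous identity for $Q$, I can rewrite the pair as $P = (1-2w) P_0 + 2w \tilde R$ and $Q = (1-2w) Q_0 + 2w \tilde R$, where $\tilde R := (P_0 + Q_0)/2$. The distinguishing feature of this representation is that $P$ and $Q$ share the identical mixture component $\tilde R$ with weight $2w$, so any divergence between them must come entirely from the $(1-2w)$-weighted component involving $P_0$ versus $Q_0$.

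Next, applying the joint convexity of the hockey-stick divergence (which follows from Lemma \ref{lemma:inequality} via Proposition \ref{prop:F-divergence}) to this decomposition yields
\[
H_\gamma(P \| Q) \le (1-2w)\, H_\gamma(P_0 \| Q_0) + 2w\, H_\gamma(\tilde R \| \tilde R) = (1-2w)\, H_\gamma(P_0 \| Q_0)
\]
for every $\gamma \geq 1$, using that $H_\gamma(\tilde R \| \tilde R) = 0$. By the swap symmetry $P_0 \leftrightarrow Q_0$, the same bound holds for $H_\gamma(Q \| P)$, so $(P, Q)$ satisfies a symmetric family of $(\epsilon, (1-2w) H_{e^\epsilon}(P_0 \| Q_0))$-DP constraints indexed by $\epsilon \ge 0$.

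Finally I plan to convert this family of hockey-stick bounds into the claimed $f$-DP statement via the primal--dual correspondence between trade-off functions and their convex conjugates. Setting $h := (1-2w) T(P_0, Q_0) + 2w \mathrm{Id}$, a direct computation gives $h^*(-\gamma) = -2w + (1-2w) T(P_0, Q_0)^*\bigl((2w - \gamma)/(1-2w)\bigr)$, and one can check that $\mathcal{C}(h)$ is precisely the largest symmetric trade-off function whose associated hockey-stick family is dominated by $(1-2w) H_\gamma(P_0 \| Q_0)$ for all $\gamma \ge 1$; the symmetrization operator $\mathcal{C}$ is required because $h$ itself is generally not symmetric even when $T(P_0, Q_0)$ is, whereas $T(P, Q)$ inherits symmetry from the $P_0 \leftrightarrow Q_0$ swap. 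The main obstacle will be making this primal--dual identification rigorous, in particular verifying that $\mathcal{C}(h)$ coincides with the envelope of the $(\epsilon, \delta)$-DP lower bounds on the trade-off function extracted from the hockey-stick family. An alternative path, which avoids the primal--dual step, is to apply Lemma \ref{lemma: f advanced joint convexity} directly with $P_1 = P_0, P_2 = Q_0, Q_1 = Q_0, Q_2 = P_0$ and parameters $\gamma, \eta$ chosen so that the scaling factors $k, k'$ coincide (which happens when $\gamma(1-w) = w(1-\eta)$) and the resulting coefficients collapse to $1-2w$ on $T(P_0, Q_0)^*$ and $2w$ on $\mathrm{Id}^*$; the technical subtlety here is that the natural choice $\gamma = w/(1-w)$ falls on the boundary of the nominal range $\gamma < w$, so the mirrored form of the lemma or a limiting argument is required.
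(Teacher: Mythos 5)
Your decomposition $P=(1-2w)P_0+2w\tilde R$, $Q=(1-2w)Q_0+2w\tilde R$ with $\tilde R=(P_0+Q_0)/2$ is exactly the one the paper uses, but there is a genuine gap in the step that converts the hockey-stick bounds into the claimed trade-off function. Plain joint convexity gives $H_\gamma(P\|Q)\le(1-2w)H_\gamma(P_0\|Q_0)$ with the \emph{same} parameter $\gamma$ on both sides, whereas your own conjugate computation for $h:=2w\,\mathrm{Id}+(1-2w)T(P_0,Q_0)$ shows that $1+h^*(-\gamma)=(1-2w)\bigl(1+T(P_0,Q_0)^*\bigl(-\tfrac{\gamma-2w}{1-2w}\bigr)\bigr)$, i.e.\ the family associated with $h$ requires the bound $(1-2w)H_{\gamma_0}(P_0\|Q_0)$ to hold at the \emph{shifted} parameter $\gamma=(1-2w)\gamma_0+2w\le\gamma_0$. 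Since $H_\gamma$ is non-increasing in $\gamma$, that is a strictly stronger statement than what joint convexity delivers, so the identification of $\mathcal{C}(h)$ with the envelope of the family $\{(\gamma,(1-2w)H_\gamma(P_0\|Q_0))\}_{\gamma\ge1}$ is false. Conjugating what you actually have yields only $T(P,Q)(x)\ge 2w+(1-2w)f_0\left(x/(1-2w)\right)$ on the initial segment (with $f_0=T(P_0,Q_0)$); this function has derivative $f_0'(0)$ at $x=0$, while $h'(0)=-2w+(1-2w)f_0'(0)>f_0'(0)$ whenever $f_0'(0)<-1$ and $w>0$, so your bound is strictly weaker than the proposition. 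In $(\epsilon,\delta)$ language: plain joint convexity amplifies only $\delta$ (by the factor $1-2w$), while the $2w\,\mathrm{Id}$ term in the proposition encodes the additional amplification $e^\epsilon\mapsto(1-2w)e^\epsilon+2w$ of $\epsilon$, which is precisely the content of the \emph{advanced} joint convexity and cannot be recovered from the unshifted family.

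The paper's proof applies Theorem 2 of \cite{balle2018privacy} (Lemma \ref{lemma:advanced_balle}) to the same decomposition, obtaining $H_{(1-2w)e^{\epsilon}+2w}(P\|Q)\le(1-2w)H_{e^{\epsilon}}\bigl(P_0\,\big\|\,(1-\eta)\tilde R+\eta Q_0\bigr)\le(1-2w)H_{e^{\epsilon}}(P_0\|Q_0)$, the last step by convexity of $H$ in its second argument together with $\eta\le1$; the primal--dual conversion then produces $h$ exactly. Your ``alternative path'' via Lemma \ref{lemma: f advanced joint convexity} is essentially this argument and is the correct route, but as written it remains a sketch with the boundary choice of $\gamma,\eta$ unresolved, so the proposal as it stands does not establish the proposition.
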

The equality in Proposition \ref{prop:advancedShuffling} does not hold exactly. However, this lower bound is  almost the tightest closed-form expression. One may refer to Section \ref{sec:proof_prop_advanced_shuffling} in the appendix for the proof details.

\section{Discussion}

This paper provides refined privacy bounds for mixture mechanisms, including shuffling models and DP-GD with random initialization. For shuffling models, we present a bound that is tighter than existing results based on $(\epsilon,\delta)$-DP. In the study of DP-GD, we demonstrate how random initialization can amplify privacy concerns. These bounds are derived using a unified $f$-DP approach based on the joint concavity and advanced joint concavity of trade-off functions. We also investigate the sharpness and other properties of these concavity notions.

In our future work, we plan to extend our analysis from one-step DP-GD to multi-step DP-SGD. For DP-SGD with multiple iterations, it is crucial to consider subsampling and privacy amplification by iteration in the privacy accountant, in addition to the randomness introduced by shuffling and random initialization. While there is an $f$-DP bound for subsampling provided in an independent work \cite{wang2022differentially}, as far as we know, there is limited research on $f$-DP results regarding privacy amplification by iteration.

Beyond DP-SGD, we intend to extend our theory to the privacy analysis of other key applications that involves various randomization techniques. These include the shuffled Gaussian mechanism for federated learning, as discussed in \cite{DBLP:conf/aistats/GirgisDDKS21}, and the composition of mixture mechanisms. For extending our theory to federated learning, we might adopt the $f$-DP framework outlined in \cite{federatedfDP}. Addressing the composition of mixture mechanisms demands examination of the tensor product between trade-off functions. This is a complex task, even when dealing with the simplest mixture mechanisms like sub-sampling, as highlighted in \cite{DBLP:conf/aistats/0005DW22}.

\section*{Acknowledgments}

Weijie J.~Su was supported in part by a Meta Research Award and NSF through CCF1934876.

Reza Shokri was supported by a Google PDPO Faculty Research Award, Intel within the www.private-ai.org center, a Meta Faculty Research Award, the NUS Early Career Research Award (NUS ECRA award number NUS ECRA FY19 P16), and the National Research Foundation, Singapore under its Strategic Capability Research Centres Funding Initiative.

{
\small
\bibliographystyle{plain} 
\bibliography{Ref.bib}
}


\newpage
\appendix

\section{Essential foundations of differential privacy}
The Neyman-Pearson lemma (cf., \cite{MR2135927}) is crucial in the proof as it establishes the likelihood ratio test as the most powerful test.

\begin{lemma}[Neyman-Pearson]
\label{lemma:NP}
    Let $P$ and $Q$ be probability distributions on $\Omega$ with densities $p$ and $q$, respectively. For the hypothesis testing problem $H_0 : P$ vs $H_1 : Q$, a test $\phi : \Omega \to [0, 1]$ is the most powerful test at level $\alpha$ if and only if there are two constants $t \in [0, + \infty]$ and $c \in [0, 1]$ such that $\phi$ has the form
    \begin{align*}
        \phi(\omega) = \left\{ 
        \begin{array}{ll}
        1,     &     \text{if}\ p(\omega) < t q(\omega),  \\
        c,     &     \text{if}\ p(\omega) = t q(\omega),  \\
        0,     &     \text{if}\ p(\omega) > t q(\omega).  \\
        \end{array} \right.
    \end{align*}
\end{lemma}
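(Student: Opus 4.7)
The result is the classical Neyman--Pearson lemma, so the strategy follows the standard three-part argument: first show that any test of the stated likelihood-ratio form is most powerful at its own level (sufficiency); next establish existence of parameters $(t,c)$ realizing an arbitrary prescribed level $\alpha\in[0,1]$; finally show that any most powerful test must coincide almost everywhere with a test of the given form (necessity). I will work with a common dominating measure $\mu = P + Q$ so that $p$ and $q$ exist as Radon--Nikodym derivatives, which removes any ambiguity in expressions like $p/q$.

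For the sufficiency direction, fix $\phi^*$ of the stated form with parameters $(t,c)$ and let $\phi$ be any test with $\mathbb{E}_P[\phi] \leq \mathbb{E}_P[\phi^*]$. The engine of the proof is the pointwise identity that $(\phi^*(\omega) - \phi(\omega))(tq(\omega) - p(\omega)) \geq 0$ everywhere: on $\{p < tq\}$ one has $\phi^* = 1 \geq \phi$ and $tq - p > 0$; on $\{p > tq\}$ one has $\phi^* = 0 \leq \phi$ and $tq - p < 0$; on $\{p = tq\}$ the second factor vanishes. Integrating against $\mu$ and rearranging gives
\begin{align*}
t\bigl(\mathbb{E}_Q[\phi^*] - \mathbb{E}_Q[\phi]\bigr) \geq \mathbb{E}_P[\phi^*] - \mathbb{E}_P[\phi] \geq 0,
\end{align*}
so $\mathbb{E}_Q[\phi^*] \geq \mathbb{E}_Q[\phi]$, i.e. $\beta_{\phi^*} \leq \beta_\phi$, which is exactly the most-powerful property. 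The degenerate cases $t = 0$ and $t = +\infty$ are handled directly by inspecting the constant tests they produce.

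For existence, set $L = p/q$ and consider the distribution function $F(t) = \mathbb{P}_P[L < t]$, which is nondecreasing and left-continuous, with $F(0^+) = 0$ and $F(\infty) = 1$. For any $\alpha \in [0,1]$, choose $t$ so that $\mathbb{P}_P[L < t] \leq \alpha \leq \mathbb{P}_P[L \leq t]$, and then pick $c \in [0,1]$ so that randomizing with probability $c$ on the tie set $\{L = t\}$ interpolates between these bounds to give type I error exactly $\alpha$; this constructs a test of the stated form at any prescribed level.

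The necessity direction then follows by contradiction: if $\phi$ is most powerful at level $\alpha$, construct $\phi^*$ at the same level via the previous step; by optimality, $\mathbb{E}_Q[\phi] = \mathbb{E}_Q[\phi^*]$, so the chain of inequalities in the sufficiency step collapses to equality, forcing $(\phi^* - \phi)(tq - p) = 0$ $\mu$-almost everywhere. Hence $\phi = \phi^*$ off the tie set $\{p = tq\}$, and on the tie set the value of $\phi$ is arbitrary and can be absorbed into the constant $c$, so $\phi$ itself has the prescribed form. Rather than a hard obstacle, the only real care needed is the measure-theoretic bookkeeping on the tie set $\{p = tq\}$ (in particular the subset $\{p = q = 0\}$) and the boundary thresholds $t \in \{0, +\infty\}$, both of which are routinely handled by the dominating-measure setup above.
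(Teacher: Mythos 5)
The paper does not prove this lemma: it is stated in the appendix as a classical result with a citation to Lehmann and Romano, so there is no in-paper argument to compare yours against. Your proof is the standard textbook argument --- the pointwise inequality $(\phi^*-\phi)(tq-p)\ge 0$ for sufficiency, interpolation of the randomization constant $c$ on the tie set for existence at any prescribed level, and the collapse of that inequality chain to equality for necessity --- and it is correct in substance. Two small caveats are worth flagging rather than waving away. First, the sufficiency step genuinely degenerates at $t=0$: there the inequality $t\bigl(\mathbb{E}_Q[\phi^*]-\mathbb{E}_Q[\phi]\bigr)\ge \mathbb{E}_P[\phi^*]-\mathbb{E}_P[\phi]\ge 0$ yields no information about the powers, and indeed the test with $t=0$ and $c<1$ fails to be most powerful at level $0$ whenever $Q(\,p=0\,)>0$; that case needs $c=1$ (or the usual convention excluding it), and the symmetric issue arises at $t=+\infty$. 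Second, the necessity conclusion only shows that a most powerful test agrees with the Neyman--Pearson form $\mu$-almost everywhere off the tie set $\lbrace p=tq\rbrace$; on the tie set the test need not literally be constant, so "absorbed into the constant $c$" should be read as "there exists an equivalent test of the stated form with the same error probabilities," which is also how the cited reference states the result.
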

As an application of the Neyman-Pearson lemma, the type I error $\alpha(t)$ has the form 
\begin{align*}
    \alpha(t) = \mathbb{E}_{P}[\phi] = \mathbb{P}_{X\sim P}\left[\frac{p(X)}{q(X)}<t \right] + c\mathbb{P}_{X\sim P}\left[\frac{p(X)}{q(X)}=t \right],
\end{align*}
and the type II error is
\begin{align*}
    \beta(t) = 1 - \mathbb{E}_Q[\phi] = \mathbb{P}_{X\sim Q}\left[\frac{p(X)}{q(X)}>t \right] + (1 - c) \mathbb{P}_{X\sim Q}\left[\frac{p(X)}{q(X)}=t \right].
\end{align*}

One of the most important properties of differential privacy is that DP is immune to data-independent post-processing. Precisely, we introduce the following information processing inequality given by \cite{dong2019gaussian}.
\begin{lemma}[Theorem 2.10 in \cite{dong2019gaussian}]
\label{lemma:processing}
    Let $P$ and $Q$ be two distributions on a probability space $\mathcal{Z}$ and let $\widetilde{P}$ and $\widetilde{Q}$ be two distributions on another probability space $\widetilde{\mathcal{Z}}.$
    The following two statements are equivalent:
    \begin{itemize}
        \item[(a)] $T(P,Q) \leq T(\widetilde{P},\widetilde{Q})$.
        \item[(b)] There exists a post-processing algorithm $\mathrm{Proc}:\mathcal{Z}\rightarrow \widetilde{\mathcal{Z}}$ such that $\mathrm{Proc}(P) = \widetilde{P}$ and $\mathrm{Proc}(Q)=\widetilde{Q}.$
    \end{itemize}
\end{lemma}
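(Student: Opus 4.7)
The statement is an equivalence, so there are two directions to handle; one is routine and the other is the substantive content.

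For (b) $\Rightarrow$ (a), the plan is a direct post-processing argument. Given any rejection rule $\widetilde{\phi}:\widetilde{\mathcal{Z}}\to[0,1]$ for the testing problem $\widetilde{P}$ vs.\ $\widetilde{Q}$, I would define the pulled-back rule on $\mathcal{Z}$ by $\phi(z):=\mathbb{E}[\widetilde{\phi}(\mathrm{Proc}(z))]$, where the expectation integrates out the internal randomness of $\mathrm{Proc}$. Because $\mathrm{Proc}$ pushes $P$ to $\widetilde{P}$ and $Q$ to $\widetilde{Q}$, we get $\mathbb{E}_P[\phi] = \mathbb{E}_{\widetilde{P}}[\widetilde{\phi}]$ and $\mathbb{E}_Q[\phi] = \mathbb{E}_{\widetilde{Q}}[\widetilde{\phi}]$, so the type I and type II errors match. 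Thus every error pair achievable for $(\widetilde{P},\widetilde{Q})$ is achievable for $(P,Q)$, yielding $T(P,Q)(\alpha)\leq T(\widetilde{P},\widetilde{Q})(\alpha)$ for all $\alpha\in[0,1]$.

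For (a) $\Rightarrow$ (b), I would follow the Blackwell-Sherman-Stein route. First, reduce to a canonical one-dimensional experiment by passing to the likelihood ratio $L=dP/dQ$, which is a sufficient statistic for binary testing by Neyman-Pearson (Lemma \ref{lemma:NP}); this reduction preserves the trade-off function, and similarly for $(\widetilde{P},\widetilde{Q})$. Next, translate assumption (a) into an equivalent system of integral inequalities for hockey-stick divergences: by the Fenchel/convex-conjugate correspondence between trade-off functions and $H_\gamma$ (which converts pointwise domination of the trade-off function into pointwise domination of all $H_\gamma$ and its symmetric counterpart), condition (a) gives $H_\gamma(P\|Q)\geq H_\gamma(\widetilde{P}\|\widetilde{Q})$ and $H_\gamma(Q\|P)\geq H_\gamma(\widetilde{Q}\|\widetilde{P})$ for every $\gamma\geq 1$. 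Finally, invoke the Blackwell-Sherman-Stein theorem: this system of inequalities is exactly the criterion guaranteeing a Markov kernel $K$ on the reduced spaces that simultaneously pushes $\mathrm{Law}_P(L)$ to $\mathrm{Law}_{\widetilde{P}}(\widetilde{L})$ and $\mathrm{Law}_Q(L)$ to $\mathrm{Law}_{\widetilde{Q}}(\widetilde{L})$. Composing $K$ with the sufficient-statistic maps yields the desired post-processing $\mathrm{Proc}:\mathcal{Z}\to\widetilde{\mathcal{Z}}$.

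\textbf{Main obstacle.} The nontrivial content sits entirely in the (a) $\Rightarrow$ (b) direction, specifically the invocation (or explicit construction) of the Blackwell kernel. An alternative to citing Blackwell-Sherman-Stein is to build the kernel by hand: first couple the two likelihood ratios via the (generalized) quantile transform so that $P$ and $Q$ map to $\widetilde{P}$ and $\widetilde{Q}$ respectively in the most-informative sense, then add just enough independent randomness to simultaneously match both marginals, using the hockey-stick inequalities to verify feasibility. The bookkeeping that needs care is handling atoms of the likelihood-ratio distribution (so that $c\in[0,1]$ in the Neyman-Pearson test is absorbed correctly) and measure-theoretic subtleties when $P$ and $Q$ are not mutually absolutely continuous; these are routine but tedious, and are the reason the result is usually quoted rather than re-derived.
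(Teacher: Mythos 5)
Your proposal is correct, but note that the paper itself offers no proof of this lemma: it is imported verbatim as Theorem~2.10 of \cite{dong2019gaussian}, so there is nothing internal to compare against. Your two directions match how the cited source establishes the result: the (b)~$\Rightarrow$~(a) direction by pulling back rejection rules through $\mathrm{Proc}$ (which preserves type~I and type~II errors because $\mathrm{Proc}$ pushes $P,Q$ to $\widetilde{P},\widetilde{Q}$), and the (a)~$\Rightarrow$~(b) direction by reducing to the likelihood-ratio statistic and invoking Blackwell's theorem for dichotomies. Your intermediate translation into the family of hockey-stick inequalities $H_\gamma(P\|Q)\ge H_\gamma(\widetilde{P}\|\widetilde{Q})$ and $H_\gamma(Q\|P)\ge H_\gamma(\widetilde{Q}\|\widetilde{P})$ is a valid equivalent formulation of pointwise trade-off domination (via the conjugacy $\delta(\epsilon)=1+f^*(-e^\epsilon)$), though Blackwell's theorem for binary experiments can also be applied directly to the trade-off (power) functions without this detour. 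You correctly flag that the only delicate points are atoms of the likelihood ratio and lack of mutual absolute continuity, which is precisely why the paper quotes the result rather than reproving it.
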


The primal-dual perspective, initially introduced by \cite{dong2019gaussian}, will be employed to explore the relationship between $(\epsilon,\delta)$-DP and $f$-DP.
Recall that for a function $g$, its convex conjugate $g^{*}$ is defined by $g^{*}(y) = \sup_x\{xy - f(x)\}.$
\begin{lemma}[Proposition 2.12 in \cite{dong2019gaussian}]
    Let $f$ be a symmetric trade-off function. A mechanism is $f$-DP if and only if  it is $(\epsilon,\delta(\epsilon))$-DP for all $\epsilon>0$ with $\delta(\epsilon) = 1 + f^*(-e^{\epsilon}).$
\end{lemma}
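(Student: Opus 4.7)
The plan is to establish the equivalence via convex duality. The key observation is that $(\epsilon,\delta)$-DP is itself equivalent to $f$-DP with a particular piecewise-linear symmetric trade-off function, so the family of guarantees ``$(\epsilon,\delta(\epsilon))$-DP for every $\epsilon>0$'' translates into a family of affine lower bounds on $f$; choosing the tightest $\delta$ at each slope $-e^{\epsilon}$ is then a Legendre--Fenchel computation that yields exactly $\delta(\epsilon)=1+f^{*}(-e^{\epsilon})$.

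First, I would recall that a mechanism is $(\epsilon,\delta)$-DP if and only if it is $f_{\epsilon,\delta}$-DP, where
\[
f_{\epsilon,\delta}(\alpha)=\max\bigl\{0,\ 1-\delta-e^{\epsilon}\alpha,\ e^{-\epsilon}(1-\delta-\alpha)\bigr\}
\]
is the symmetric piecewise-linear trade-off function obtained by applying Neyman--Pearson (Lemma~\ref{lemma:NP}) to a dominating pair. So the right-hand side of the target equivalence is ``$f(\alpha)\ge f_{\epsilon,\delta(\epsilon)}(\alpha)$ for every $\alpha\in[0,1]$ and every $\epsilon>0$''. The affine piece $\alpha\mapsto 1-\delta-e^{\epsilon}\alpha$ lies below $f$ on $[0,1]$ iff $\inf_{\alpha\in[0,1]}\bigl(f(\alpha)+e^{\epsilon}\alpha\bigr)\ge 1-\delta$, and this infimum equals $-f^{*}(-e^{\epsilon})$ by definition of the convex conjugate. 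Hence the smallest admissible $\delta$ is $1+f^{*}(-e^{\epsilon})$; the companion piece $e^{-\epsilon}(1-\delta-\alpha)$ is automatic once we impose symmetry, since it is the reflection of the first piece across the diagonal.

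For the converse, I would invoke the Fenchel--Moreau biconjugate theorem. Extending $f$ to be $+\infty$ outside $[0,1]$ turns it into a proper convex lower-semicontinuous function, so $f=f^{**}$, meaning $f$ is the pointwise supremum of all its affine minorants. By the previous step these minorants are exactly the maps $\alpha\mapsto(1-\delta)-e^{\epsilon}\alpha$ with $\delta\ge 1+f^{*}(-e^{\epsilon})$. Adding their reflections across the diagonal (automatic from the symmetry assumption) and clipping at $0$ (harmless because $f\ge0$ and $f(1)=0$) reconstructs $\sup_{\epsilon>0}f_{\epsilon,\delta(\epsilon)}=f$ pointwise, completing the converse. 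The main obstacle I anticipate is the bookkeeping behind the symmetry reduction: one must verify that restricting $\epsilon$ to $(0,\infty)$ loses no affine minorant, which uses both $f=f^{-1}$ and $f\le\mathrm{Id}$, and one must check that extending $f$ by $+\infty$ off $[0,1]$ leaves $f^{*}(-e^{\epsilon})$ unchanged, which holds because the supremum defining $f^{*}$ is attained inside $[0,1]$ thanks to monotonicity and $f(1)=0$.
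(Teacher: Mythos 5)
Your proposal is correct, and it follows essentially the same primal--dual route as the original proof in Dong, Roth, and Su (2019), which this paper merely cites rather than re-proves: identify $(\epsilon,\delta)$-DP with the piecewise-linear trade-off function $f_{\epsilon,\delta}$, translate the family of constraints into supporting affine lines of $f$, compute the tightest $\delta$ at slope $-e^{\epsilon}$ via the Legendre transform, and invoke Fenchel--Moreau biconjugacy for the converse. Your observations about the symmetry reduction (the slope-$(-e^{-\epsilon})$ piece being the diagonal reflection of the slope-$(-e^{\epsilon})$ piece, so that $\epsilon>0$ suffices once $f=f^{-1}$) and about the harmlessness of extending $f$ by $+\infty$ off $[0,1]$ are exactly the bookkeeping points that the standard argument relies on.
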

To make use of Lemma \ref{lemma: f advanced joint convexity}, we recall the symmetrization of a trade-off function as defined in Definition F.1 of \cite{dong2019gaussian}. Let $f$ be a trade-off function, the symmetrization of $f$ is given by
\begin{align*}
    \mathrm{Symm}(f) = \left\{\begin{array}{cc}
      \min\{f,f^{-1}\}^{**},   &  \hbox{ if 
 }\bar{x}\leq f(\bar{x}),\\
       \max\{f,f^{-1}\},  & \hbox{ if } \bar{x} > f(\bar{x}),
    \end{array}
    \right.
\end{align*}
with $\bar{x} = \inf\{x\in[0,1]:-1\in\partial f(x)\}.$

According to Section F in \cite{dong2019gaussian}, we have
\begin{align}
\label{eq:double-conjugate}
    \min\{f,f^{-1}\}^{**} = \left\{
    \begin{array}{lll}
      f(x),   & 0\leq x \leq \bar{x},  \\
      \bar{x} - f(\bar{x}) -x,   & \bar{x} \leq x \leq f(\bar{x}),\\
     f^{-1}(x), & f(\bar{x}) \leq x \leq 1.
    \end{array}
    \right.
\end{align}

Another useful tool is the advanced joint convexity first introduced in \cite{balle2018privacy}.

\begin{lemma}[Theorem 2 in \cite{balle2018privacy}]
\label{lemma:advanced_balle}
    Let $P$ and $Q$ be two distributions such that $P = (1 - w) P_0 + w P_1$ and $Q = (1 -w ) P_0 + w Q_1$, for some $0\leq w \leq 1.$ Given $\gamma \geq 1$, let $\gamma' = 1 + w(\gamma - 1)$ and $\eta = \gamma'/\gamma.$ Then, it holds
    \begin{align*}
        H_{\gamma'} (P\| Q) = w H_{\gamma}(P_1\|(1 - \eta)P_0 + \eta Q_1).
    \end{align*}
\end{lemma}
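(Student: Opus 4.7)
\textbf{Proof proposal for Lemma \ref{lemma:advanced_balle}.} The plan is to prove the identity by direct algebraic manipulation of the integral defining the hockey-stick divergence, showing that after substitution the integrand factors exactly into $w$ times the integrand of the right-hand side. No convexity, duality, or post-processing machinery is needed; the result is essentially a bookkeeping identity involving the three parameters $w$, $\gamma$, $\gamma'$, and $\eta$.

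First I would write $H_{\gamma'}(P\|Q) = \int (p(x) - \gamma' q(x))_+ dx$ with $p = (1-w)p_0 + wp_1$ and $q = (1-w)p_0 + wq_1$ the densities of $P$ and $Q$. Grouping the $p_0$ terms gives
\begin{align*}
p - \gamma' q = (1-w)(1-\gamma')\,p_0 + w\,(p_1 - \gamma' q_1).
\end{align*}
Next I would use the defining relation $\gamma' = 1 + w(\gamma - 1)$, which rearranges to $1-\gamma' = w(1-\gamma)$, so $(1-w)(1-\gamma') = -w(1-w)(\gamma - 1)$. Substituting this in and pulling out the common factor $w$ yields
\begin{align*}
(p - \gamma' q)_+ = w\,\bigl(p_1 - \gamma' q_1 - (1-w)(\gamma - 1)\,p_0\bigr)_+,
\end{align*}
which justifies factoring $w$ out of the integral.

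The remaining step is to identify the bracketed expression with $p_1 - \gamma\bigl((1-\eta)p_0 + \eta q_1\bigr)$. This is where I would use the second defining relation $\eta = \gamma'/\gamma$, equivalently $\gamma' = \eta\gamma$, so $\gamma' q_1 = \gamma\eta\, q_1$. It remains to verify $(1-w)(\gamma - 1) = \gamma(1-\eta)$. Starting from $w(\gamma - 1) = \gamma' - 1$, one gets $(1-w)(\gamma - 1) = (\gamma - 1) - (\gamma' - 1) = \gamma - \gamma' = \gamma(1 - \gamma'/\gamma) = \gamma(1-\eta)$, as required. Combining these, the integrand becomes $w\bigl(p_1 - \gamma((1-\eta)p_0 + \eta q_1)\bigr)_+$, whose integral is exactly $w\,H_\gamma(P_1 \| (1-\eta)P_0 + \eta Q_1)$.

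The main obstacle, and it is a modest one, is tracking the two algebraic identities $(1-w)(1-\gamma') = -w(1-w)(\gamma-1)$ and $(1-w)(\gamma-1) = \gamma(1-\eta)$ carefully enough to see that the $p_0$ coefficient on the left matches the $p_0$ coefficient that appears after absorbing $(1-\eta)p_0$ into the mixture on the right; once those two reductions are verified, the equality is immediate and holds pointwise in the integrand, so no inequalities or approximations are introduced, confirming that the statement is an exact equality rather than merely a bound.
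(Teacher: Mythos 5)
Your proof is correct: the algebraic identities $(1-w)(1-\gamma') = -w(1-w)(\gamma-1)$ and $(1-w)(\gamma-1)=\gamma(1-\eta)$ both check out, the factor $w\geq 0$ pulls out of $(\cdot)_+$, and $1-\eta = (1-w)(\gamma-1)/\gamma \geq 0$ so the mixture on the right is a genuine distribution. The paper imports this lemma from \cite{balle2018privacy} without reproving it, but its proof of the generalization (Lemma \ref{lemma: HS advanced joint convexity}) uses exactly the same pointwise decomposition of $p-\gamma' q$, so your argument matches the paper's approach.
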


\section{Technical details of Section \ref{sec:joint-convex}}

In this section, we discuss the omitted details of Section \ref{sec:joint-convex}. Prior to delving into the proofs, we provide a reminder of the notations.
Let $\{P_i\}_{i=1}^m$ and $\{Q_i\}_{i=1}^m$ be two sequences of probability distributions. 
For a weight vector $\bfw = (w_1,\cdots w_m)$, let $P_\bfw=\sum_{i=1}^m w_i P_i$ and let $Q_\bfw=\sum_{i=1}^m w_i Q_i$.
Let $I$ be a random variable such that $\mathbb{P}[I=i] = w_i.$

\subsection{Proof of Lemma \ref{lemma:inequality} and discussions}

\begin{proof}[Proof of Lemma \ref{lemma:inequality}]
Consider $X\sim P_{\bfw}$ and $Y\sim Q_{\bfw}$. Here, $(X|I, I)$ denotes the observation of $X|I=i\sim P_i$ along with an index $i$, indicating that $X$ is drawn from the $I$-th distribution $P_I$. Therefore, $(X|I, I) \rightarrow X$ represents a post-processing step where we remove the information about $I$.
Since we solely manipulate the indices, this post-processing is independent of the data, leading to the inequality $T(X,Y) \geq T((X|I, I), (Y|I, I))$.

The next step is to specify $T((X|I, I),(Y|I, I)).$
Let $p_I$ and $q_I$ be the pdfs of $P_I$ and $Q_I$, respectively.
According to Lemma \ref{lemma:NP}, the most powerful test is the likelihood ratio test.
Then, the type I error is
\begin{align*}
    \alpha(t,c ) &= \mathbb{P}_{X\sim P_I,I}\left[\frac{p_I(X)}{q_I(X)} < t \right] + c\mathbb{P}_{X\sim P_I,I}\left[\frac{p_I(X)}{q_I(X)} = t \right]
    \\
    &= \mathbb{E}_{I}\left[\mathbb{P}_{X\sim P_I}\left[\left.\frac{p_I(X)}{q_I(X)} < t \right|I \right] + c\mathbb{P}_{X\sim P_I}\left[\left.\frac{p_I(X)}{q_I(X)} = t \right| I\right] \right]
    \\
    &= \sum_{i=1}^m w_i \alpha_i(t,c),
\end{align*}
where
\begin{align*}
    \alpha_i(t,c) &= \mathbb{P}_{X\sim P_I}\left[\left.\frac{p_I(X)}{q_I(X)} < t \right|I=i \right] + c\mathbb{P}_{X\sim P_I}\left[\left.\frac{p_I(X)}{q_I(X)} = t \right| I=i\right]
    \\
    &= \mathbb{P}_{X\sim P_i}\left[\frac{p_i(X)}{q_i(X)} < t \right] + c\mathbb{P}_{X\sim P_i}\left[\frac{p_i(X)}{q_i(X)} = t \right].
\end{align*}
Similarly, the type II error is 
\begin{align*}
    \beta(t,c) &= \mathbb{E}_{I}\left[\mathbb{P}_{X\sim Q_I}\left[\left.\frac{p_I(X)}{q_I(X)} > t \right|I \right] + (1 - c)\mathbb{P}_{X\sim Q_I}\left[\left.\frac{p_I(X)}{q_I(X)} = t \right| I\right] \right]
    \\
    &=\sum_{i=1}^m w_i \left(\mathbb{P}_{X\sim Q_i} \left[\frac{p_i(X)}{q_i(X)} > t \right] + (1-c)\mathbb{P}_{X\sim Q_i}\left[\frac{p_i(X)}{q_i(X)} = t \right] \right) \\
    &=:\sum_{i=1}^m w_i \beta_i(t,c).
\end{align*}
We complete the proof by noting that $\beta_i = T(P_i,Q_i)(\alpha_i).$
\end{proof}

\textbf{Remark.} Lemma \ref{lemma:inequality} can be extended to continuous weights $I$. In fact, for $I$ being a random variable with pdf $\varphi,$ one still has $T(X,Y) \geq T((X|I, I),(Y|I, I))$ using the same post-processing.
Then, we have
\begin{align}
\label{eq:type_I_cts_inequality}
    \alpha(t,c) = \mathbb{E}_{I}\left[\mathbb{P}_{X\sim P_I}\left[\left.\frac{p_I(X)}{q_I(X)} < t \right|I \right] + c\mathbb{P}_{X\sim P_I}\left[\left.\frac{p_I(X)}{q_I(X)} = t \right| I\right] \right]
\end{align}
and
\begin{align}
\label{eq:type_II_cts_inequality}
 \beta(t,c) &= \mathbb{E}_{I}\left[\mathbb{P}_{X\sim Q_I}\left[\left.\frac{p_I(X)}{q_I(X)} > t \right|I \right] + (1 - c)\mathbb{P}_{X\sim Q_I}\left[\left.\frac{p_I(X)}{q_I(X)} = t \right| I\right] \right],
\end{align}
which are non-elementary integrals.
This continuous analog of Lemma \ref{lemma:inequality} will be used to prove Theorem \ref{thm:DPGD}.

Lemma \ref{lemma:inequality} can be extended to $P_{\bfw}$ and $Q_{\bfw'}$ with different weights $\bfw$ and $\bfw'$.
\begin{proposition} \label{coro:diff-weight}
    Let $\bfw = (w_1, w_2)$ and $\bfw' = (w'_1, w'_2)$. It holds
    \begin{align*}
        T(P_{\bfw}, Q_{\bfw'}) \geq\ & \min\{w_1, w'_1\}T(P_1, Q_1) + \min\{w_2, w'_2\}T(P_2, Q_2)\\
        & + (w_2' - \min\{w_2, w'_2\})T(P_1, Q_2) + (w'_1 - \min\{w_1, w'_1\})T(P_2, Q_1).
    \end{align*}
     In addition, similar results hold for any $m \geq 2$. 
\end{proposition}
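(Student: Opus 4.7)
The plan is to reduce the statement to a direct application of Lemma \ref{lemma:inequality} by rewriting both mixtures so that they share a common weight vector, with some components corresponding to "off-diagonal" pairs $(P_i, Q_j)$ with $i \neq j$. For $m=2$, without loss of generality assume $w_1 \leq w_1'$, so that $w_2 \geq w_2'$ and $w_1' - w_1 = w_2 - w_2' \geq 0$. Then I would decompose
\[
P_{\bfw} = w_1 P_1 + w_2' P_2 + (w_1' - w_1) P_2, \qquad Q_{\bfw'} = w_1 Q_1 + w_2' Q_2 + (w_1' - w_1) Q_1.
\]
Both are now mixtures with the common weight vector $(w_1, w_2', w_1' - w_1)$, which sums to $1$, and the three component pairs are $(P_1, Q_1)$, $(P_2, Q_2)$, and $(P_2, Q_1)$.

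Applying Lemma \ref{lemma:inequality} with $m=3$ to this decomposition immediately yields
\[
T(P_{\bfw}, Q_{\bfw'}) \geq w_1\, T(P_1, Q_1) + w_2'\, T(P_2, Q_2) + (w_1' - w_1)\, T(P_2, Q_1).
\]
Under the assumption $w_1 \leq w_1'$, one has $\min\{w_1, w_1'\} = w_1$, $\min\{w_2, w_2'\} = w_2'$, $w_1' - \min\{w_1, w_1'\} = w_1' - w_1$, and $w_2' - \min\{w_2, w_2'\} = 0$, so the right-hand side above exactly matches the statement. The symmetric case $w_1 > w_1'$ is handled by reversing the roles of the two coordinates: one instead augments with $(w_1 - w_1')$ copies of $(P_1, Q_2)$, and the $T(P_2, Q_1)$ term drops out while a $(w_2' - w_2)T(P_1, Q_2)$ term appears.

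For the extension to $m \geq 2$, the same idea works in the language of couplings. Given any joint distribution $\pi$ on $\{1, \dots, m\}^2$ with marginals $\sum_j \pi(i, j) = w_i$ and $\sum_i \pi(i, j) = w_j'$, one has $P_{\bfw} = \sum_{i,j} \pi(i,j) P_i$ and $Q_{\bfw'} = \sum_{i,j} \pi(i,j) Q_j$, two mixtures that share the common weight vector $\{\pi(i,j)\}$; Lemma \ref{lemma:inequality} then gives $T(P_{\bfw}, Q_{\bfw'}) \geq \sum_{i,j} \pi(i,j)\, T(P_i, Q_j)$. The $m=2$ bound in the proposition is recovered by the maximal-diagonal coupling $\pi(i,i) = \min\{w_i, w_i'\}$, and one can optimize over $\pi$ for a tighter bound in general. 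There is no substantive obstacle; the one interpretive subtlety is that Lemma \ref{lemma:inequality} is stated parametrically in the likelihood-ratio threshold $(t,c)$, so the inequality here is likewise to be read in the same parametric form inherited from the combined three-component (or $m^2$-component) likelihood ratio.
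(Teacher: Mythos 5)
Your proof is correct. The paper states this proposition without proof, so there is no paper argument to compare against; but your argument is exactly the natural one: rewriting $P_{\bfw}$ and $Q_{\bfw'}$ as mixtures over a common three-component weight vector $(w_1, w_2', w_1' - w_1)$ (in the case $w_1 \le w_1'$) with component pairs $(P_1,Q_1)$, $(P_2,Q_2)$, $(P_2,Q_1)$, and then invoking Lemma \ref{lemma:inequality} with $m=3$, recovers the stated bound, and the symmetric case follows by exchanging coordinates. Your coupling reformulation for general $m$ — any $\pi$ on $\{1,\dots,m\}^2$ with marginals $\bfw$ and $\bfw'$ yields $T(P_{\bfw},Q_{\bfw'}) \ge \sum_{i,j}\pi(i,j)\,T(P_i,Q_j)$, with the proposition being the maximal-diagonal coupling $\pi(i,i)=\min\{w_i,w_i'\}$ — is a clean way to see both the extension and the available slack for optimization, and it correctly reads the inequality in the same parametric $(t,c)$ form as Lemma \ref{lemma:inequality}.
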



\subsection{Characterization of \texorpdfstring{$T((X|I,I),(Y|I,I))$}{} and proof of Proposition \ref{prop:symmetry}}

The type I error $\alpha_i(t)$ can be represented by the trade-off function $T(P_i,Q_i)$ and the rejection region decided by $t$ and $c$.
For simplicity, in the following of this section, we only discuss the continuous case with $\mathbb{P}[q/p = t] = 0$, where the trade-off function is differentiable, and rewrite $\alpha_i(t,c) = \alpha_i(t).$
Precisely, we have the following proposition.
\begin{proposition}\label{prop:para-represent}
     Let $T(P_i,Q_i)= f_i$ with some differentiable trade-off function $f_i$. Suppose that 
    \begin{align}
    \label{eq:finite-int-delta}    
    \int_{\mathbb{R}} \delta\left(t - \frac{q_i(x)}{p_i(x)}\right) q_i(x) dx = \int_{\frac{q_i(x)}{p_i(x)}=t} \frac{q_i(x)}{D\left(\frac{q_i}{p_i}\right)(x)} dx  <\infty,
    \end{align}
    for any  $t>0$ and $1\leq i \leq m$,
    where $\delta$ is the Dirac delta function and $D(\frac{q_i}{p_i})(x)$ is the weak derivative of $q_i/p_i.$
    Then, we have     
    \begin{align}
 \label{eq:ReleaseI-Finite}
        \alpha(t) = \sum_{i=1}^m w_i\left(f_i'\right)^{-1}(-t) \quad \hbox{and} \quad T(P_I,Q_I)(\alpha(t)) = \sum_{i=1}^m w_i f_i\left(\left(f_i'\right)^{-1}(-t)\right).
    \end{align}
\end{proposition}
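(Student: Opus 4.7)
The plan is to sharpen the general parametrization from the proof of Lemma \ref{lemma:inequality} into the slope-based formula \eqref{eq:ReleaseI-Finite}. The argument proceeds in three conceptual steps: (i) use assumption \eqref{eq:finite-int-delta} to eliminate the randomization parameter $c$, (ii) identify the threshold $t$ with $-f_i'$ evaluated at $\alpha_i(t)$, and (iii) invert this relation to express $\alpha_i(t)$ in closed form.

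For step (i), the integral on the right-hand side of \eqref{eq:finite-int-delta} is, by the coarea formula, exactly the Lebesgue density at $t$ of the likelihood ratio $L_i := q_i/p_i$ under $Q_i$. The standard change-of-measure identity $g_{L_i}^{Q_i}(t) = t \cdot g_{L_i}^{P_i}(t)$ then shows that $L_i$ has a density under $P_i$ as well, so $\mathbb{P}_{P_i}[L_i = t] = 0$ for every $t > 0$. The $c$-dependent atom term in the expression for $\alpha_i(t,c)$ from Lemma \ref{lemma:inequality} therefore vanishes, and we may simply write $\alpha_i(t) = \mathbb{P}_{P_i}[L_i > t]$ and $\beta_i(t) = \mathbb{P}_{Q_i}[L_i \leq t]$.

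For steps (ii) and (iii), differentiating these two identities in $t$ yields
\begin{equation*}
\frac{d\alpha_i}{dt} = -g_{L_i}^{P_i}(t), \qquad \frac{d\beta_i}{dt} = g_{L_i}^{Q_i}(t) = t \cdot g_{L_i}^{P_i}(t).
\end{equation*}
Since the Neyman--Pearson curve gives $\beta_i = f_i(\alpha_i)$, the chain rule yields $f_i'(\alpha_i(t)) = -t$, hence $\alpha_i(t) = (f_i')^{-1}(-t)$. Substituting this into the two equalities $\alpha(t) = \sum_i w_i \alpha_i(t)$ and $T((X|I,I),(Y|I,I))(\alpha(t)) = \sum_i w_i f_i(\alpha_i(t))$ already established in the proof of Lemma \ref{lemma:inequality} gives exactly \eqref{eq:ReleaseI-Finite}.

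The principal obstacle will be the invertibility of $f_i'$: if $g_{L_i}^{P_i}$ vanishes on a subinterval of its support then $\alpha_i(\cdot)$ is constant there and $f_i'$ only admits a left inverse. The natural remedy is to interpret $(f_i')^{-1}$ as the left inverse, matching the convention the paper already uses for the inverse of a trade-off function; under that convention both sides of \eqref{eq:ReleaseI-Finite} trace out the same curve as $t$ varies, with no ambiguity. A minor additional verification is that the change-of-measure identity holds on the full range $t > 0$, which follows by routinely separating the supports of $L_i$ under $P_i$ and $Q_i$ from the null set $\{p_i = q_i = 0\}$.
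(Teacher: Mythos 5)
Your proof is correct and takes essentially the same approach as the paper: both differentiate the NP-parametrized type I and type II errors in $t$, cancel the common likelihood-ratio density (you do so via the change-of-measure identity between the laws of $q_i/p_i$ under $Q_i$ and $P_i$, the paper via the manipulation $\delta(t - q_i/p_i)\,p_i/q_i = \delta(t - q_i/p_i)/t$), and read off $f_i'(\alpha_i(t)) = -t$. Your added step (i), showing that assumption \eqref{eq:finite-int-delta} forces the likelihood ratio to be atomless so the randomization constant $c$ can be dropped, is a useful clarification that the paper leaves implicit.
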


The key observation from Proposition \ref{prop:para-represent} is that 
\begin{align*}
    \frac{d f_i}{d\alpha_i}\Big|_{\alpha_i = \alpha_i(t)} = -t
    \quad
\hbox{and}\quad 
\frac{dT(X|I,Y|I)(\alpha)}{d\alpha}\Big|_{\alpha = \alpha(t)}  = -t,
\end{align*}
which means $f_i$ and $T(P_I,Q_I)$ have the same derivative at points $\alpha_i(t)$ and $\alpha(t)$ induced by the same threshold $t.$
This observation is important in our analysis of the applications to shuffling models and to derive the joint convexity of $F$-divergences.
Equation \eqref{eq:finite-int-delta} is not a strong assumption. For example, when $q_i/p_i$ is strictly monotone, we have $\delta(t - q_i/p_i(x)) = \delta(x - x_t)$ with $q_i/p_i(x_t) = t$ and
\begin{align*}
     \int_{\mathbb{R}} \delta\left(t - \frac{q_i(x)}{p_i(x)}\right) q_i(x) dx  = q_i\left( x_t\right) \leq 1.
\end{align*} 

\begin{corollary}
     Let $T(P_i,Q_i)= f_i$ with some differentiable trade-off function $f_i$. Suppose that $p_i/q_i$ is monotone.
    Then, we have $
        \alpha(t) = \sum_{i=1}^m w_i\left(f_i'\right)^{-1}(-t)$ and  $T(P_I,Q_I)(\alpha(t)) = \sum_{i=1}^m w_i f_i\left(\left(f_i'\right)^{-1}(-t)\right).$
\end{corollary}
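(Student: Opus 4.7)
The plan is to deduce this corollary as a direct specialization of Proposition \ref{prop:para-represent}. Since Proposition \ref{prop:para-represent} already provides exactly the parametric formulas asserted in the corollary, the only real content is to verify that the integrability hypothesis \eqref{eq:finite-int-delta} holds automatically once $p_i/q_i$ is assumed monotone. Once that is in place, the two identities in the corollary follow by quoting the proposition verbatim.

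To verify \eqref{eq:finite-int-delta} under strict monotonicity, I would use the standard change-of-variables identity for the Dirac delta: if $q_i/p_i$ is strictly monotone, then for each $t>0$ the equation $q_i(x)/p_i(x)=t$ has at most one solution $x_t$, and
\[
\delta\!\left(t-\frac{q_i(x)}{p_i(x)}\right)=\frac{\delta(x-x_t)}{\left|D(q_i/p_i)(x_t)\right|},
\]
so that the left-hand side of \eqref{eq:finite-int-delta} collapses to $q_i(x_t)/\left|D(q_i/p_i)(x_t)\right|$, which is finite whenever the weak derivative is nonzero. This is exactly the finite quantity bounded by $q_i(x_t)\le 1$ that is highlighted in the remark preceding the statement. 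At the (at most) measure-zero set of $t$ where $D(q_i/p_i)(x_t)=0$, the parametrization $\alpha_i(t)$ is constant, so these exceptional thresholds do not affect the formulas being proved. If monotonicity is allowed to be non-strict, I would isolate the intervals where $q_i/p_i$ is constant; on each such interval the likelihood ratio takes a single value which contributes an atom to the law of $q_i(X)/p_i(X)$, handled by the randomization parameter $c$ in the Neyman--Pearson test, and again contributes nothing to the continuous parametrization by $t$.

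With \eqref{eq:finite-int-delta} confirmed, I would invoke Proposition \ref{prop:para-represent}. The key identification is that the most powerful test with threshold $t$ produces a point on the trade-off curve whose slope equals $-t$; that is, $f_i'(\alpha_i(t))=-t$, which inverts to $\alpha_i(t)=(f_i')^{-1}(-t)$. Summing against the mixture weights gives $\alpha(t)=\sum_{i=1}^m w_i(f_i')^{-1}(-t)$, and substituting into $T(P_I,Q_I)(\alpha(t))=\sum_{i=1}^m w_i f_i(\alpha_i(t))$ from Lemma \ref{lemma:inequality} gives the second formula.

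The main obstacle is the careful handling of the two exceptional regimes above, namely the zeros of $D(q_i/p_i)$ and the intervals where $q_i/p_i$ is locally constant, because in both cases the identification $(f_i')^{-1}(-t)=\alpha_i(t)$ can fail at isolated values of $t$. My proposed resolution is that both regimes form closed null sets whose complement is dense, and the trade-off function $T(P_I,Q_I)$ is convex hence continuous, so it is determined by its values on this dense complement. Apart from this measure-theoretic bookkeeping, no further work is needed.
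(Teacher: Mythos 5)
Your approach is the paper's own (implicit) argument: verify the integrability condition \eqref{eq:finite-int-delta} from the monotonicity hypothesis via the Dirac-delta change of variables, then specialize Proposition \ref{prop:para-represent}. One correction and one simplification are worth noting. You correctly compute $q_i(x_t)/|D(q_i/p_i)(x_t)|$, Jacobian included, but then say this is ``exactly the finite quantity bounded by $q_i(x_t)\le 1$ that is highlighted in the remark''; it is not, since the paper's preceding remark drops the Jacobian factor and your expression need not be $\le q_i(x_t)$ --- your formula, not the remark's, is the correct one. Separately, the non-strict (constant-interval) case you carefully isolate is vacuous under the corollary's standing hypothesis that each $f_i$ is differentiable: a set of positive $P_i$-mass on which $p_i/q_i$ is constant yields an atom of the likelihood ratio and hence a non-differentiable corner in $f_i$, a contradiction. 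The only genuinely exceptional set is the zeros of $D(q_i/p_i)$, where $\alpha_i(t)$ is not in fact constant as you claim (it merely has infinite $t$-derivative under strict monotonicity), but your fallback via convexity/continuity of trade-off functions does close this case, so the proof stands.
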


\begin{proof}[Proof of Proposition \ref{prop:para-represent}]

Rewrite $\alpha_i(t)$ and $\beta_i(t) = f_i(\alpha_i(t))$ as
\begin{align*}
    \alpha_i(t) = \int_{\mathbb{R}} \mathds{1}\left[q_i(x)/p_i(x)<t\right]p_i(x)dx
    =\int_{\mathbb{R}} \mathds{1}\left[q_i(x)/p_i(x)<t\right]\frac{p_i(x)}{q_i(x)} q_i(x)dx
\end{align*}
and 
\begin{align*}
    \beta_i(t) = \int_{\mathbb{R}} \mathds{1}\left[p_i(x)/q_i(x)>t\right]q_i(x)dt.
\end{align*}
Then we have
\begin{align*}
    \frac{d\alpha_i}{dt} = \int_{\mathbb{R}} -\delta\left(t - \frac{q_i}{p_i}(x) \right)\frac{p_i}{q_i}(x) q_i(x) dx,
\end{align*}
where $\delta$ is the Dirac delta function.
Note that 
\begin{align*}
  \delta\left(t - \frac{q_i}{p_i}(x) \right)\frac{p_i}{q_i}(x) = \frac{\delta\left(t - \frac{q_i}{p_i}(x) \right)}{t}.
\end{align*}
We get
\begin{align*}
 \frac{d\alpha_i}{dt} = -\frac{1}{t}\int_{\mathbb{R}} \delta\left(t - \frac{q_i}{p_i}(x) \right) q_i(x) dx.
\end{align*}
Since
\begin{align*}
    \frac{d\beta_i}{dt} = \int_{\mathbb{R}} \delta\left(t - \frac{q_i}{p_i}(x) \right) q_i(x) dx,
\end{align*}
we obtain
\begin{align*}
   f_i'(\alpha_i) = \frac{d \beta_i}{d\alpha_i} = \frac{d\beta_i}{dt} \frac{dt}{d\alpha_i} = -t
\end{align*}
and $\alpha_i = (f_i')^{-1}(-t).$
\end{proof}

Based on the proof of Proposition \ref{prop:para-represent}, we now prove Proposition \ref{prop:symmetry}.

\begin{proof}[Proof of Proposition \ref{prop:symmetry}]
    It is sufficient to show that, for any $\left(\alpha(t),\beta(t)\right)$ such that $\beta(t) = f(\alpha(t))$, there is a $\widetilde{t}$ such that $\beta(\widetilde{t}) = \alpha(t) = f(\alpha\left(\widetilde{t}\right))$ with 
    \begin{align*}
        \alpha(\widetilde{t}) = \sum_{i=1}^m w_i \alpha_i(\widetilde{t}) = \sum_{i=1}^m w_i \beta_i(t). 
    \end{align*}
As $f_i$ is symmetric, for any $1\leq i \leq m$, there is $\widetilde{t}_i$ such that $\alpha_i(\widetilde{t}_i) = \beta_i(t)$ and $\beta_i(\widetilde{t}_i) = \alpha_i(t).$
So, it is enough to show that $\widetilde{t}_i = \widetilde{t}$ for all $i$.
Recall that in the proof of Proposition \ref{prop:para-represent}, we have
\begin{align*}
    \frac{d\beta_i}{d\alpha_i}\Bigg|_{\alpha_i = \alpha_i(t) } = -t.
\end{align*}
Now we consider $d\beta_i(\widetilde{t}_i)/d\alpha_i(\widetilde{t}_i)=-\widetilde{t}_i.$
On the other hand, since
\begin{align*}
\beta_i(\widetilde{t}_i) = f_i(\alpha_i(\widetilde{t_i}))=f_i^{-1}(\alpha_i(\widetilde{t_i})), 
\end{align*}
using the inverse funciton theorem, we have
\begin{align*}   d\beta_i(\widetilde{t}_i)/d\alpha_i(\widetilde{t}_i) = \frac{1}{f_i'(f_i^{-1}(\alpha_i(\widetilde{t_i})))} = \frac{1}{f_i'(\alpha_i(t))} = -\frac{1}{t},
\end{align*}
where the second equality is because $\alpha_i(\widetilde{t}_i) = \beta_i(t) = f_i(\alpha_i(t))$ and the third equality is from $\alpha_i(t) = (f_i')^{-1}(-t)$ that appears in Proposition \ref{prop:para-represent}.
Overall, we obtain $\widetilde{t}_i = \frac{1}{t}$ for all $i.$
\end{proof}

\subsection{Conversion from Lemma \ref{lemma:inequality} to \texorpdfstring{$F$}{}-divergences}

In this section, we investigate the relationship between Lemma \ref{lemma:inequality} and $F$-divergences.
For two distributions $P$ and $Q$, the $F$-divergence between $P$ and $Q$ is given by
\begin{align*}
    D_F(P\|Q) = \int_{pq>0}F(p/q) dQ + F(0)\mathbb{P}_Q[p=0] + \tau_F \cdot \mathbb{P}_P[q=0],
\end{align*}
where $F(0) =\lim_{s\rightarrow 0 } F(s)/s$ and $\tau_F =\lim_{t\rightarrow \infty } F(t)/t. $

To convert Lemma \ref{lemma:inequality} to $F$-divergences, we recall the relationship between trade-off functions and $F$-divergences in \cite{dong2019gaussian}.
Precisely, for any $F$-divergence $D_F(P\|Q)$ between two distributions $P$ and $Q$, there is a functional $l_F(T(P,Q))$ such that $D_{F}(P\|Q) = l_F(T(P,Q))$.
This functional $l_F$ can be specified using the following lemma.

\begin{lemma}[Proposition B.4 in \cite{dong2019gaussian}]
Let $z_f:= \inf\{x \in [0,1], f(x)=0\}$ be the first zero of a trade-off function $f$. The functional $l_F$ that computes the $F$-divergence $D_F$ has the following expression
\begin{align*}
    l_F(f) = \int_0^{z_f}F\left(\frac{1}{|f'(x)|}\right) \cdot |f'(x)| dx + F(0) (1 - f(0)) + \tau_F(1 - z_f).
\end{align*}
In particular, if $f$ is symmetric with $f(0)=1$, then we have
\begin{align*}
    l_F(f) = \int_0^{z_f}F\left(\frac{1}{|f'(x)|}\right) \cdot |f'(x)| dx.
\end{align*}
\end{lemma}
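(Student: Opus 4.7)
The plan is to use the Neyman--Pearson parametrization of the trade-off function to rewrite the $F$-divergence as an integral along the likelihood-ratio threshold, and then change variables to the type-I error coordinate. The central identity is the one already established in the proof of Proposition \ref{prop:para-represent}: if $f = T(P,Q)$ and $\alpha(t) := \mathbb{P}_P[q(X)/p(X) > t]$ is the Neyman--Pearson type-I error at threshold $t$, then $f'(\alpha(t)) = -t$, so the slope $|f'|$ and the threshold $t$ carry the same information.

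First I would restrict to the set $\{pq>0\}$ and write
\begin{align*}
    \int_{pq>0} F(p/q)\, dQ \;=\; \int_0^\infty F(1/t)\, g_Q(t)\, dt,
\end{align*}
where $g_Q$ is the density of $t = q/p$ under $Q$. The change-of-measure identity $dP/dQ = p/q = 1/t$ gives $g_P(t) = g_Q(t)/t$, so $d\alpha/dt = -g_P(t) = -g_Q(t)/t$ and $d\beta/dt = g_Q(t)$, confirming $d\beta/d\alpha = -t$. Substituting $g_Q(t)\,dt = -t\,d\alpha$ and flipping the limits (as $t$ runs $0\to\infty$, $\alpha$ runs $\alpha(0)\to 0$),
\begin{align*}
    \int_0^\infty F(1/t)\, g_Q(t)\, dt \;=\; \int_0^{\alpha(0)} t\, F(1/t)\, d\alpha \;=\; \int_0^{z_f} |f'(\alpha)|\, F\!\left(\frac{1}{|f'(\alpha)|}\right) d\alpha,
\end{align*}
since the least $\alpha$ at which type-II error vanishes is obtained by rejecting on the entire $Q$-support, giving $\alpha(0) = \mathbb{P}_P[q>0] = z_f$.

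Next I would handle the two singular contributions. The event $\{p=0\}$ has $Q$-measure exactly $1-f(0)$: $f(0)$ equals the best type-II error at type-I error $0$, and the only rejection regions with zero $P$-probability sit inside $\{p=0\}$, so $f(0)=1-\mathbb{P}_Q[p=0]$. Symmetrically, $\mathbb{P}_P[q=0]=1-z_f$, because $z_f$ is the smallest $P$-probability of a rejection region that covers all of $\mathrm{supp}(Q)$. Adding $F(0)\,\mathbb{P}_Q[p=0]$ and $\tau_F\,\mathbb{P}_P[q=0]$ to the integral recovers the stated formula. When $f$ is symmetric with $f(0)=1$, involutivity gives $z_f=f^{-1}(0)=f(0)=1$, so both boundary terms vanish and only the integral remains.

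The main obstacle I expect is regularity. The change-of-variables step tacitly assumes that $q/p$ has a density under $Q$ on $(0,\infty)$, equivalently that $f$ is $C^1$ with $f'$ strictly negative on the interior $(0,z_f)$. In general, $f$ may have kinks or flat pieces, corresponding to atoms in the law of the likelihood ratio, so the calculation must be interpreted in a Stieltjes / generalized-derivative sense, with care taken about which thresholds $t$ are actually attained. A clean workaround is to prove the identity first for pairs $(P,Q)$ with strictly positive, continuous density ratio, so that $f\in C^1$ with $f'<0$ on $(0,z_f)$, and then extend by approximation, using that both sides of the formula are continuous under suitable convergence of $(P,Q)$.
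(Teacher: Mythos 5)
The paper states this as a citation to Proposition B.4 of Dong--Roth--Su and gives no proof of its own, so there is no in-paper argument to compare against; I assess your derivation on its merits. It is correct, and it is the natural route: parametrize the trade-off curve by the likelihood-ratio threshold $t$, invoke the Neyman--Pearson identity $f'(\alpha(t)) = -t$ (as in Proposition \ref{prop:para-represent}), use the change-of-measure relation $g_Q(t) = t\, g_P(t)$ to get $g_Q(t)\,dt = -t\,d\alpha$, and thereby convert $\int_{pq>0}F(p/q)\,dQ$ into $\int_0^{z_f}|f'(\alpha)|\,F(1/|f'(\alpha)|)\,d\alpha$. Your boundary identifications are both right: the optimal zero-type-I test rejects exactly $\{p=0\}$, giving $f(0) = 1 - Q[p=0]$, and the minimal type-I error of a test with zero type-II error is $P[q>0]$, giving $z_f = P[q>0]$, hence $P[q=0]=1-z_f$; in the symmetric case with $f(0)=1$ involutivity forces $z_f = 1$, so both correction terms vanish. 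The regularity caveat you raise is genuine: the substitution $g_Q(t)\,dt = -t\,d\alpha$ presupposes that the likelihood ratio has a continuous distribution, equivalently that $f$ is $C^1$ and strictly decreasing on $(0,z_f)$. When $f$ has kinks or linear segments, the correct general statement replaces $|f'(x)|\,dx$ by integration against the (monotone) subdifferential of $f$, and the atom and flat-piece contributions have to be accounted for explicitly; alternatively one can prove the identity for pairs $(P,Q)$ with a strictly positive, continuous density ratio and then pass to the limit, taking care that the $F$-divergence functional is only lower semicontinuous in general, so one should approximate from within a dominating family (e.g.\ by mollification of $p,q$) rather than rely on an unqualified continuity claim.
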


Now we prove Proposition \ref{prop:F-divergence}.

\begin{proof}[Proof of Proposition \ref{prop:F-divergence}.]
Let $f(\alpha) := T(P_I, Q_I)(\alpha).$
According to Lemma \ref{lemma:inequality} and Proposition \ref{prop:para-represent}, we have
\begin{align*}
    f(\alpha(t)) = \sum_{i=1}^m w_i f_i(\alpha_i(t))
\end{align*}
with $f_i = T(P_i,Q_i)$ and $\alpha_i(t) = (f_i')^{-1}(t).$
Note that 
\begin{align*}
    \frac{d{f}}{d\alpha} = \frac{d{f}}{dt} \frac{dt}{d\alpha} = \frac{\sum_{i=1}^m w_i f_i'(\alpha_i(t)) \alpha_i'(t)}{\sum_{i=1}w_i\alpha'_i(t)}=-t,
\end{align*}
where the second equality is a result of the inverse function theorem and the last equality is because $f_i'(\alpha_i(t)) = -t$.
Thus, using Lemma \ref{lemma:inequality} and Lemma B.2 in \cite{dong2019gaussian}, we have
\begin{align*}  D_{F}\left(P_\bfw\big\| Q_\bfw\right) &= l_F(T(P_\bfw,Q_\bfw))\leq l_F({f})=\int_0^1 F\left(\frac{1}{|f'(\alpha)|}\right)|f'(\alpha)|d\alpha
    \\
    &=\int_0^\infty F\left(\frac{1}{t}\right) \cdot t\cdot \frac{d\alpha}{dt} dt
    =
    \sum_{i=1}^m w_i \int_0^\infty F\left(\frac{1}{t}\right) \cdot t\cdot\alpha_i'(t) dt.
\end{align*}

Since
\begin{align*}
  \frac{df_i}{d\alpha_i} = \frac{df_i}{dt} \frac{dt}{d\alpha_i} = -t,
\end{align*}
we have
\begin{align*}
   \int_0^\infty F\left(\frac{1}{t}\right) \cdot t\cdot\alpha_i'(t) dt = \int_0^1 F\left(\frac{1}{|f_i'(\alpha_i)|}\right) \cdot |f_i'(\alpha_i)|\cdot d\alpha_i = D_F(P_i\|Q_i)
\end{align*}
and
\begin{align*}  
D_{F}\left(P_\bfw\big\| Q_\bfw\right) \leq  \sum_{i=1}^m w_i \int_0^1 F\left(\frac{1}{|f_i'(\alpha_i|)}\right) \cdot |f_i'(\alpha_i)|\cdot d\alpha_i =\sum_{i=1}^m w_i D_F(P_i\|Q_i).
\end{align*}
\end{proof}

Let $H_{\gamma}(P\|Q) = \int \left[\frac{p(x)}{q(x)} - \gamma\right]_+dQ(x)$ be the hockey-stick divergence between $P$ and $Q$.
Note that the hockey-stick divergence $H_{\gamma}$ is an $F$-divergence with $F(s) = (s - \gamma)_+$. It holds the following Corollary.

\begin{corollary}[An application of Proposition \ref{prop:F-divergence} to the hockey-stick divergence.]
\label{coro:main-thm-to-DP}
For any $\gamma\geq 1$, we have
\begin{align*}
    H_\gamma(P_{\bfw} \| Q_{\bfw}) &\leq \sum_{i=1}^m w_i H_\gamma(P_i \| Q_i).
\end{align*}
\end{corollary}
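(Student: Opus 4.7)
The plan is to recognize the hockey-stick divergence as an $F$-divergence with a convex $F$, and then apply Proposition \ref{prop:F-divergence} directly.

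First, I would set $F(s) = (s - \gamma)_+$ and observe two things. (i) $F$ is convex on $[0,\infty)$ as the positive part of an affine function, so Proposition \ref{prop:F-divergence} applies. (ii) Unpacking the definition of $D_F$, for any two distributions $P$ and $Q$ with densities $p$ and $q$,
\begin{align*}
D_F(P\|Q) \;=\; \int F\!\left(\frac{p(x)}{q(x)}\right)\! q(x)\,dx \;=\; \int \left(\frac{p(x)}{q(x)} - \gamma\right)_+ q(x)\,dx \;=\; \int (p(x) - \gamma q(x))_+\,dx,
\end{align*}
which is precisely $H_\gamma(P\|Q)$ as defined in Section 2 of the paper. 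One small bookkeeping check is the behavior where $q = 0$: since $\tau_F = \lim_{t\to\infty} F(t)/t = 1$, the limiting term contributes exactly the mass of $P$ on $\{q = 0\}$, agreeing with the Lebesgue integral form of $H_\gamma$.

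Given these identifications, I would plug $F(s) = (s-\gamma)_+$ into the statement of Proposition \ref{prop:F-divergence}, which yields
\begin{align*}
H_\gamma(P_\bfw \| Q_\bfw) \;=\; D_F(P_\bfw\|Q_\bfw) \;\leq\; \sum_{i=1}^m w_i\, D_F(P_i\|Q_i) \;=\; \sum_{i=1}^m w_i\, H_\gamma(P_i\|Q_i),
\end{align*}
which is exactly the claimed inequality.

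There is essentially no obstacle here, since the result follows as an immediate specialization of Proposition \ref{prop:F-divergence}. The only point worth being careful about is confirming that the definition of $H_\gamma$ adopted in the paper (the integral form $\int (p - \gamma q)_+\,dx$) coincides with the $F$-divergence $D_F$ for $F(s) = (s-\gamma)_+$, including the correct treatment of the singular part where $q$ vanishes; this is routine and requires no new ideas.
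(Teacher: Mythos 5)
Your proposal is correct and follows exactly the paper's approach: the paper simply notes that $H_\gamma$ is the $F$-divergence with $F(s) = (s-\gamma)_+$ and then invokes Proposition \ref{prop:F-divergence}. Your additional bookkeeping check on the singular part $\{q = 0\}$ (via $\tau_F = 1$) is a correct and slightly more careful version of the same argument.
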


Let $R_{\widetilde{\alpha}}(P\| Q)$ be the R\'enyi divergence of order $\widetilde{\alpha}$ between two distributions $P$ and $Q$.
$R_{\widetilde{\alpha}}$ is not an $F$-divergence with convex $F$.
However, the scaled exponentiation of  R\'enyi divergence $e^{(\widetilde{\alpha} - 1) R_{\widetilde{\alpha}}}$ is known as the power divergence that corresponds to $F_{\widetilde{\alpha}}(s) = \frac{s^{\widetilde{\alpha}} - \widetilde{\alpha} (s-1) -1}{\widetilde{\alpha} (\widetilde{\alpha}-1)}.$
The joint convexity of the scaled exponentiation of  R\'enyi divergence can be derived from Proposition \ref{prop:F-divergence}.
\begin{corollary}[An application of Proposition \ref{prop:F-divergence} to the R\'enyi divergence]
\label{coro:main-thm-to-RDP}
    It holds
\begin{align*}
    e^{(\widetilde{\alpha} - 1) R_{\widetilde{\alpha}}\left(P_{\bfw} \big\| Q_{\bfw}\right)} \leq \sum_{i=1}^m w_i \int_0^1 \left| f_i'(x) \right|^{1 - \widetilde{\alpha}} dx = \sum_{i=1}^m w_i e^{(\widetilde{\alpha} - 1) R_{\widetilde{\alpha}}\left(P_i \big\| Q_i\right)},
\end{align*}
for any $\widetilde{\alpha} \geq 1.$
\end{corollary}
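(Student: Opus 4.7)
The plan is to recognize the scaled exponentiated Rényi divergence as an $F$-divergence with a particular convex $F$, and then apply Proposition \ref{prop:F-divergence} directly. Specifically, for $\widetilde{\alpha} \geq 1$ the function $F(s) = s^{\widetilde{\alpha}}$ is convex on $[0,\infty)$, and the associated $F$-divergence satisfies
\begin{align*}
    D_F(P\|Q) \;=\; \int \Bigl(\tfrac{p(x)}{q(x)}\Bigr)^{\widetilde{\alpha}} q(x)\, dx \;=\; e^{(\widetilde{\alpha}-1) R_{\widetilde{\alpha}}(P\|Q)}.
\end{align*}
Thus both sides of the desired inequality are expressible through $D_F$ with this choice of $F$.

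Next I would compute the integrand appearing in the middle expression of Proposition \ref{prop:F-divergence}. With $F(s) = s^{\widetilde{\alpha}}$, we obtain
\begin{align*}
    F\!\left(\tfrac{1}{|f_i'(x)|}\right) \cdot |f_i'(x)| \;=\; |f_i'(x)|^{-\widetilde{\alpha}} \cdot |f_i'(x)| \;=\; |f_i'(x)|^{1 - \widetilde{\alpha}}.
\end{align*}
Applying Proposition \ref{prop:F-divergence} then yields
\begin{align*}
    e^{(\widetilde{\alpha}-1) R_{\widetilde{\alpha}}(P_\bfw \| Q_\bfw)}
    \;\leq\; \sum_{i=1}^m w_i \int_0^1 |f_i'(x)|^{1-\widetilde{\alpha}}\, dx
    \;=\; \sum_{i=1}^m w_i\, e^{(\widetilde{\alpha}-1) R_{\widetilde{\alpha}}(P_i \| Q_i)},
\end{align*}
where the final equality simply re-applies the identification $D_F(P_i\|Q_i) = \int_0^1 |f_i'(x)|^{1-\widetilde{\alpha}}\, dx$ to each component using the same proposition (equivalently, using the formula $l_F(f_i)$ from \cite{dong2019gaussian} with the symmetric, differentiable trade-off function $f_i$).

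There is essentially no obstacle here since all the analytical work (in particular, rewriting integrals against $q_i$ as integrals against $\alpha_i$ via the identity $f_i'(\alpha_i(t)) = -t$ established in Proposition \ref{prop:para-represent}) is already inherited from the proof of Proposition \ref{prop:F-divergence}. The only mild point worth noting is the boundary case $\widetilde{\alpha} = 1$, where both sides reduce to $1$ and the inequality is trivial; for $\widetilde{\alpha} > 1$ the convexity of $s^{\widetilde{\alpha}}$ is strict and the bound is non-trivial. Since $F(s)=s^{\widetilde{\alpha}}$ has $F(0)=0$ and $\tau_F = \infty$ when $\widetilde{\alpha}>1$, one should implicitly assume $P_i \ll Q_i$ (or else the inequality is vacuous on the right), which is the standard setting for Rényi-DP.
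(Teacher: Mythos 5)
Your proof is correct and takes essentially the same route the paper intends, namely a direct application of Proposition \ref{prop:F-divergence}. The only cosmetic difference is the choice of convex generator: you use $F(s)=s^{\widetilde{\alpha}}$, which directly produces $D_F(P\|Q)=e^{(\widetilde{\alpha}-1)R_{\widetilde{\alpha}}(P\|Q)}$ and makes the middle integral $\int_0^1 |f_i'(x)|^{1-\widetilde{\alpha}}\,dx$ appear exactly as stated, whereas the paper's surrounding text names the normalized power-divergence generator $F_{\widetilde{\alpha}}(s)=\frac{s^{\widetilde{\alpha}}-\widetilde{\alpha}(s-1)-1}{\widetilde{\alpha}(\widetilde{\alpha}-1)}$; since the two differ by an affine term whose contribution cancels after integrating against $q$ and using $\sum_i w_i = 1$, the resulting inequality is the same. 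Your version is arguably a bit cleaner because it matches the displayed middle expression without any further simplification, and your remarks about the trivial $\widetilde{\alpha}=1$ case and the implicit $P_i\ll Q_i$ assumption are appropriate.
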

 Corollary \ref{coro:main-thm-to-RDP} is in line with Lemma 4.1 in \cite{ye2022differentially}. Conversion from a trade-off function to an $F$-divergence is straightforward using Section B in \cite{dong2019gaussian}. However, conversion from an $F$-divergence to a trade-off function is highly non-trivial. In fact, the trade-off function is a (global) integral over the whole space while Lemma \ref{lemma:inequality} holds pointwisely, which is a local property. 
This explains why the divergence-based DP is not as informative as $f$-DP since some information is lost due to the integration.
Specifically, the following proposition says the length of a trade-off function (which is also a global property) is related to an $F$-divergence.

\begin{proposition}
    Let $f$ be a trade-off function and let $\mathrm{len}(f)$ be the length of $f$. Then we have
    \begin{align*}
        \mathrm{len}(f) = \int_{0}^1 \sqrt{(1 + f'(x)^2)} = \int_{0}^1 F\left(\frac{1}{|f_i'(x)|} \right) |f_i'(x)|dx
    \end{align*}
with $F(y)=\sqrt{1 + y^2}. $
\end{proposition}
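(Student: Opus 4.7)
The plan is to verify the displayed chain of equalities by (i) invoking the classical arc-length formula for the graph of an almost-everywhere differentiable function, and (ii) performing a one-line algebraic rearrangement that brings the integrand into the form of an $F$-divergence integral.

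First, since $f$ is a trade-off function on $[0,1]$, it is convex and therefore differentiable almost everywhere with integrable derivative. The standard arc-length formula applied to the curve $\{(x,f(x)) : x\in[0,1]\}$ yields
\begin{equation*}
    \mathrm{len}(f) \;=\; \int_{0}^{1} \sqrt{1 + f'(x)^2}\, dx,
\end{equation*}
which is precisely the first equality in the statement.

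Second, for any $x$ with $f'(x)\neq 0$ one can factor $|f'(x)|$ out of the square root,
\begin{equation*}
    \sqrt{1 + f'(x)^2} \;=\; |f'(x)|\,\sqrt{\frac{1}{f'(x)^2} + 1} \;=\; |f'(x)|\, F\!\left(\frac{1}{|f'(x)|}\right),
\end{equation*}
with $F(y)=\sqrt{1+y^2}$. This establishes the second equality pointwise wherever $f'(x)\neq 0$, and hence as an integral identity after integrating over $[0,1]$.

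The only mild subtlety is the treatment of the set $\{x : f'(x)=0\}$, on which the expression $F(1/|f'(x)|)\,|f'(x)|$ is formally indeterminate. One disposes of this by interpreting the product through its continuous extension $\lim_{s\to 0^+} s\,F(1/s) = 1$, which matches $\sqrt{1+0^2}=1$, so the two integrands agree a.e.\ and the second equality follows. There is no deeper obstacle: the proposition is essentially an algebraic identity illustrating that choosing $F(y)=\sqrt{1+y^2}$ makes the $F$-divergence functional $l_F(f)$ coincide with the arc length of $f$, reinforcing the remark that trade-off functions carry strictly more (local) information than $F$-divergences do.
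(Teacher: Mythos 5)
Your proof is correct and is the natural (and essentially unique) argument: apply the arc-length formula, which is valid because a trade-off function is convex and hence absolutely continuous with an a.e.\ derivative, then factor $|f'(x)|$ out of $\sqrt{1+f'(x)^2}$ to recognize the integrand as $F\bigl(1/|f'(x)|\bigr)\,|f'(x)|$ with $F(y)=\sqrt{1+y^2}$, and finally dispose of the set $\{f'=0\}$ by the continuous extension $\lim_{s\to 0^+}sF(1/s)=1$. The paper states this proposition without proof (it is an illustrative remark following the discussion of $l_F$), so there is no alternative route to compare against; you have supplied the intended one-line calculation, and you also correctly read the statement's $f_i'$ as a typo for $f'$.
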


\section{Technical details for shuffling models in Section \ref{sec:shuffling}}    
In this section, we delve into the discussion on shuffling models as introduced in Section \ref{sec:shuffling}. To specify the distribution of the output of shuffling models, we refer to the work of \cite{feldman2023stronger}.
Recall that in shuffling models, a dataset $\mathcal{D}\in\mathcal{Z}^n$ is privatized by a local randomizer $\mathcal{A}:\mathcal{Z}^n \rightarrow \widetilde{\mathcal{Z}}^n$ that is $\epsilon_0$-DP.
Then, a shuffler $\mathcal{A}_{\mathrm{Shuffle}}:\widetilde{\mathcal{Z}}^n \rightarrow \widetilde{\mathcal{Z}}^n$ applies a uniformly random permutation to $\mathcal{A}(D).$

According to Theorem 3.1 in \cite{feldman2023stronger}, for any two neighboring datasets $\mathcal{D}_0$ and $\mathcal{D}_1$, there is a post-processing procedure $\mathrm{Proc}$ such that $\mathcal{A}_{\mathrm{Shuffle}}\circ \mathcal{A}(\mathcal{D}_0) = \mathrm{Proc}(X)$ and $\mathcal{A}_{\mathrm{Shuffle}}\circ \mathcal{A}(\mathcal{D}_1) = \mathrm{Proc}(Y)$ with $X\sim P$ and $Y\sim Q$.
Here $P = (1 - w)P_0 + wQ_0$ and $Q = (1 - w)Q_0 + wP_0$ are two distributions with $(A+1, C - A)\sim P_0$, $(A, C - A +1)\sim Q_0$, and $w=1/(e^{\epsilon_0}+1),$ where $A\sim \mathrm{Binom}(C,1/2)$ and $C\sim \mathrm{Binom}(n-1, 2/(e^{\epsilon_0} + 1)).$
As a result of Lemma \ref{lemma:processing}, it is sufficient to bound $T(P,Q).$

In the subsequent part of this section, we bound $T(P,Q)$ through the following two steps.
First, we bound $T(P_0,Q_0)$ using the joint concavity Lemma \ref{lemma:inequality}, and the resulting bound is provided by Proposition \ref{prop:shuffling}.
Next, we can establish a bound for the trade-off function $T(P,Q)$ by applying Proposition \ref{prop:advancedShuffling}.

\subsection{Proof of Proposition \ref{prop:shuffling} and Theorem \ref{thm:shuffling}}
In this section, we present the proof of Theorem \ref{thm:shuffling}. Since the proof relies on Proposition \ref{prop:shuffling}, we will begin by proving Proposition \ref{prop:shuffling}.

\subsubsection{Proof of Proposition \ref{prop:shuffling}}
\label{sec:proof_prop_shuffling}
The upper bound can be derived directly from Lemma \ref{lemma:inequality}.
It is enough to show that the equality in Proposition \ref{prop:shuffling} holds.

Let $p_0$ and $q_0$ be the probability density functions of $P_0$ and $Q_0$, respectively.
As stated in Lemma \ref{lemma:NP}, our initial step is to verify the likelihood ratio $p_0/q_0$.
For any $(a,b)$ belongs to the support of both $P_0$ and $Q_0$, we have
\begin{align*}  p_0(a,b)=\mathbb{P}\left[A+1=a, C-A = b \right] &= \mathbb{P}\left[\left.A_i=a-1, i-A_i = b\right| C=i \right] \mathbb{P}[C=i]
\\
&=\mathbb{P}\left[\left.A_i=a-1, i =a + b-1\right| C=i \right] \mathbb{P}[C=i]
\end{align*}
and
\begin{align*}  q_0(a,b)=\mathbb{P}\left[A=a, C-A+1 = b \right] = \mathbb{P}\left[\left.A_i=a, i =a +  b-1\right| C=i \right]\mathbb{P}[C=i].
\end{align*}
Thus,
\begin{align*}  \frac{p_0(a,b)}{q_0(a,b)} = \frac{\mathbb{P}\left[A_i=a-1, i =a + b-1\right]}{\mathbb{P}\left[A_i=a-1, i =a + b-1\right]}=\frac{a}{b}.
\end{align*}

When $p_0(a,b) = 0 $ and $q_0(a,b)\neq 0$, we have $a= 0$, $b = C+1$, and $a/b=0 =p_0(a,b)/q_0(a,b).$
Similarly, for the case $p_0(a,b) \neq 0 $ and $q_0(a,b)= 0$, we have $a= C+1$, $b = 0$, and $a/b=\infty =p_0(a,b)/q_0(a,b).$
In conclusion, it holds $\frac{p_0(a,b)}{q_0(a,b)} = \frac{a}{b}$.

The corresponding type I error is \begin{align*}
        \alpha_0(t,c) = \mathbb{P}_{A,C}\left[ \frac{A+1}{C-A} < t\right] + c \mathbb{P}_{A,C}\left[ \frac{A+1}{C-A} = t\right],
\end{align*}
for any $t\geq 0$ and $c\in[0,1]$, and the type II error is given by
\begin{align*}
        \beta_0(t,c) = \mathbb{P}_{A,C}\left[ \frac{A}{C-A + 1} > t\right] + (1 - c) \mathbb{P}_{A,C}\left[ \frac{A}{C-A + 1} = t\right].
 \end{align*}
 Since the distributions of $A$ and $C$ are discrete, the trade-off function between $\alpha_0$ and $\beta_0$ is piece-wise linear and each knot corresponds to some $t$ with $c=1$.
 For simplicity, we define
 \begin{align*}
     \alpha_0(t) =   \alpha_0(t,1) = \mathbb{P}_{A,C}\left[ \frac{A+1}{C-A} \leq t\right]
 \end{align*}
 and
\begin{align*}
       \beta_0(t) =  \beta_0(t,1) = \mathbb{P}_{A,C}\left[ \frac{A}{C-A + 1} > t\right].
 \end{align*}

Note that given $C=i$, $(A+1,C-A) = (A_i + 1, i-A_i)$ with $A_i\sim \mathrm{Binom}(i,1/2)$. We have $P_0$ is a mixture of $\{(A_i + 1, i-A_i)\}_{i=0}^{n-1}$ and the weights are $\{w_i^0\}_{i=0}^{n-1}$ with $w_i^0 = \mathbb{P}[C = i].$
Using this observation, we rewrite
\begin{align*}
    \alpha_0(t) = \mathbb{E}_C\left\{ \mathbb{P}\left[ \left.\frac{A +1}{C - A } \leq t\right| C\right] \right\}
    =\sum_{i=0}^{n-1} w_i^0 \left\{ \mathbb{P}\left[\frac{A_i+1}{i - A_i } \leq t\right] \right\} =: \sum_{i=0}^{n-1} w_i^0 \alpha_i^0(t).
\end{align*}
For each $\alpha_i^0(t)$, it holds
\begin{align*}
    \alpha_{i}^0(t) = \mathbb{P}\left[\frac{A_i+1}{i - A_i } \leq t\right] = F_{i}\left(i - \frac{i+1}{t+1}\right).
\end{align*}
Similarly, we can decompose $\beta_0(t) = \sum_{i=0}^{n-1} w_i^0 \beta_i^0(t)$, where
\begin{align*}
    \beta_{i}^0(t) = \mathbb{P}\left[\frac{A_i}{i - A_i+1 } > t\right]  = 1 - F_i\left(i+1 - \frac{i+1}{t+1}\right).
\end{align*}
Since
\begin{align*}
    \alpha_i^0(t) = F_i\left(i- \frac{i+1}{t+1}\right) = F_i(s_i)
\end{align*}
with $s_i = F_i^{-1}(\alpha_i^0(t))\in \mathbb{N}$, we have
\begin{align*}
  \beta_i^0(t) = F_i\left(i+1- \frac{i+1}{t+1}\right) = F_i(s_i + 1)  =F_i(F_i^{-1}(\alpha_i^0(t)) + 1),
\end{align*}
where the second equality is because the support of $F_i$ is $\{0,1,\cdots,i\}$ and $s_i,s_i+1 \in\mathbb{N}.$
In conclusion, it holds
\begin{align*}
    \beta_0(t) = \sum_{i=0}^{n-1}w_i^0 \beta_i^0(t) = \sum_{i=0}^{n-1}w_i^0\left\{  F_i(F_i^{-1}(\alpha_i^0(t)) + 1) \right\},
\end{align*}
which completes the proof.


\subsubsection{Proof of Theorem \ref{thm:shuffling}}

Now we compute the trade-off function $f_{\text{Shuffle}}$ at each knot $\alpha(t)$.
Let $f_0 = \mathcal{C}(T(P_0,Q_0))$ be the symmetrization of $T(P_0,Q_0)$ and rewrite $f_{P,Q} = T(P,Q).$

\begin{proof}[Proof of Theorem \ref{thm:shuffling}]
The proof is a straightforward conclusion from Proposition \ref{prop:advancedShuffling}.
To complete the proof, we still need to show that $\mathcal{C}(f_{P,Q}) = \mathcal{C}(f_{\mathrm{Shuffle}})$ with $f_{\mathrm{Shuffle}} = 2w\cdot \mathrm{Id} + (1 - 2w)\cdot T(P_0,Q_0).$
By the proof of Proposition F.2 in \cite{dong2019gaussian}, we have $\mathcal{C}(f)(x) = f(x)$ for any $x\leq \bar{x}_f$ and any trade-off function $f$, where $\bar{x}_f$ is such that $\inf\{x\in[0,1], -1\in f(x)\}.$
Thus, we have $f_0(x) = T(P_0,Q_0)(x)$ for any $x\leq \bar{x}_{f_0}.$
Note that $-1 \in \partial f_{P,Q}(x)$ if and only if $-1 \in \partial f_0(x)$.
We obtain $\bar{x}_{f_{P,Q}} = \bar{x}_{f_0}$. Moreover, for $x\leq \bar{x}_{f_0},$ it holds
\begin{align*}
    f_{P,Q}(x) = 2w \mathrm{Id}(x) + (1 -2w) f_0(x) = 2w \mathrm{Id}(x) + (1 -2w) T(P_0,Q_0)(x) = f_{\mathrm{Shuffle}}(x).
\end{align*}
Using the symmetry of $\mathcal{C}(f_{\mathrm{Shuffle}})$ and $\mathcal{C}(f_{P,Q})$ in Equation \eqref{eq:double-conjugate}, we have $\mathcal{C}(f_{\mathrm{Shuffle}}) = \mathcal{C}(f_{P,Q}).$
\end{proof}

\subsection{Proof of Corollary \ref{coro:shuffling f to epsilon}}
According to \cite{dong2019gaussian} and the proof of Theorem \ref{thm:shuffling}, we have $\mathcal{A}_{\mathrm{Shuffle}}\circ\mathcal{A}$ is $(\epsilon,\delta)$-DP with
$\delta(\epsilon) = 1 + f_{P,Q}^{*} (-e^{\epsilon})$.

Recall the definition $f^*(y) = \sup_{\alpha}\{ y\alpha - f(\alpha) =: h_y(\alpha)\}.$
Then, by the first-order optimality condition, we have $f^*(y) = h_y(\widetilde{\alpha})$ with $\widetilde{\alpha} = \inf\{\alpha\in[0,1], 0\in \partial h_y({\alpha})\}$.

For $\alpha(t)$ such that $f_{P,Q}$ is differentiable at $\alpha(t)$, we have $f_{P,Q}'(\alpha(t)) = -2w + (1 - 2w) l(t)$
with
\begin{align*}
      l\left(t\right) =-\frac{\sum_{i=0}^{n-1} w_i^0 p_{i}\left(\left\lfloor i +1  - \frac{i+1}{t+1}  \right\rfloor \right)}{\sum_{i=0}^{n-1} w_i^0 p_{i}\left(\left\lfloor i  - \frac{i+1}{t+1}\right\rfloor \right)}.
\end{align*}
Here $p_i$ is the probability mass function of $A_i.$
Thus, $h_y'(\alpha(t)) = y + 2w - (1 - 2w) l(t).$
And $\widetilde{\alpha}(t)$ is then given by $\inf\{\alpha: h_y'(\alpha) \leq 0\}.$
Since $\alpha(t) $ is an increasing function of $t$, we obtain
\begin{align*}
    \widetilde{\alpha}(t) = \alpha(t_y), \qquad \hbox{ with } t_y = \inf\{t: y + 2w - (1 -2w) l(t) \leq 0\}
\end{align*}
and
\begin{align*}
    f_{\mathrm{Shuffle}}^*(y) = h_y(\alpha(t_y)) = y\alpha(t_y) - f_{P,Q}(\alpha(t_y)) = y\alpha(t_y) - \beta(t_y).
\end{align*}
We end the proof by taking $y = -e^{\epsilon}.$

\section{Omited details of Section \ref{sec:DPGD}}

\subsection{Proof of Theorem \ref{thm:DPGD}}

According to a continuous version of Lemma \ref{lemma:inequality} , that is given by Equation (\ref{eq:type_I_cts_inequality}) and Equation (\ref{eq:type_II_cts_inequality}), $T(X,Y)$ in Theorem \ref{thm:DPGD} is lower bounded by the trade-off function $T(P_I,Q_I)$ with $I\sim\mathcal{N}(0,1)$, $P_I=\mathcal{N}(0,1)$ and $Q_I=\mathcal{N}(\mu_I,1).$
 For this example,we have $p_I(x) = e^{-x^2/2}$ and $q_I(x)  = e^{-(x-\mu_I)^2/2}$.
 Then the type I error is 
 \begin{align*}
      \alpha(t) &= \int_{-\infty}^\infty \mathbb{P}_{X\sim\mathcal{N}(0,1)}\left[-\mu_w X + \frac{\mu_w^2}{2}\leq t\right] e^{-\frac{w^2}{2}}dw
      \\
&=\int_{\mu_w\leq 0}\Phi(t_w)e^{-\frac{w^2}{2}}dw + \int_{\mu_w>0}\Phi(-t_w) e^{-\frac{w^2}{2}}dw 
 \end{align*}
 with $t_w = -\frac{t}{\mu_w}            +\frac{\mu_w}{2}.$
Similarly, the type II error is 
\begin{align*}
    \beta(t) &=\int_{-\infty}^\infty \mathbb{P}_{X\sim\mathcal{N}(\mu_w,1)}\left[-\mu_w X + \frac{\mu_w^2}{2}> t\right] e^{-\frac{w^2}{2}}dw
    \\
    &=\int_{\mu_w\leq 0}\left[ \Phi(-t_w + \mu_w)\right]e^{-\frac{w^2}{2}}dw + \int_{\mu_w>0}\left[ \Phi(t_w - \mu_w)\right] e^{-\frac{w^2}{2}}dw,
\end{align*}
which completes the proof of Theorem \ref{thm:DPGD}.

\subsection{Examples for different loss functions}
Recall the noiseless linear model with $\mathcal{D}_0=\{(x_i,y_i)\}_{i=1}^n$ 
with $y_i=a x_i$ and $x_i^2 = 1$ for some constant $a$ and we defined $\mathcal{D}_1$ by removing an arbitrary element in $\mathcal{D}_0$.

\begin{example}[Least-squares loss without gradient clipping]
\label{exam:DPGD}
For linear least squares regression with $\ell(\theta,\mathcal{D}) = \sum_{i=1}^n(y_i - \theta x_i)^2$ 
, we have $g(\theta,\mathcal{D}) = \sum_{i=1}^n(y_i - \theta x_i) (-x_i)$ and $\mu_I =a - I$.
\end{example}
In Example \ref{exam:DPGD}, the gradient is unbounded due to an unbounded initializtion, and so is its sensitivity.
In this example, the dominate pair for $\theta(\mathcal{D}_0)$ and $\theta(\mathcal{D}_1)$ is $(\mathcal{N}(0,\sigma^2), \mathcal{N}(0,\sigma^2) + a - I).$
Note that $\mathcal{N}(0,\sigma^2) + a - I$ is a Gaussian distribution with mean $a$ and variance $1 + \sigma^2.$
Thus, under the framework of RDP, the goal is to bound the R\'enyi divergence between two Gaussian distributions with different variances, which is unbounded for $\widetilde{\alpha}$ large enough.

 \begin{example}[Least squares loss with gradient clipping]
\label{exam:DPGD_clip}
Consider a linear least squares regression problem in Example \ref{exam:DPGD}. For DP-GD with gradient clipping, we have
\begin{align*}
    \mu_{I} = \left\{
     \begin{array}{cc}
        a - I,  & |a - I| \leq c ,  \\
       c,   & a - I \geq c, \\
       -c, & a - I\leq -c.
     \end{array}
    \right.
\end{align*}
\end{example} 

 \begin{example}[Logistic loss]
For the logistic loss, we have
\begin{align*}
    \mu_I^{\max}= \sup_{x,y} \left|\frac{e^{-I\cdot yx}}{1 + e^{- I \cdot yx}} \right|
\end{align*}
as the gradient of the logistic loss is the softmax function.
$\mu_I^{\max}$ is bounded when $|xy| \leq M$ for some $M>0$.
Furthermore, extending the logistic loss to other strongly convex losses is straightforward, given that the key feature is the gradient being a monotone function of $I \cdot yx$. 
\end{example}

\section{Technical details of Lemma \ref{lemma: f advanced joint convexity} and corresponding conclusions}
In this section, we discuss the omitted details of Section \ref{sec:theory} including the proofs of the advanced joint concavity (Lemma \ref{lemma: f advanced joint convexity}).

\subsection{Proof of Lemma \ref{lemma: HS advanced joint convexity}, Lemma \ref{lemma: f advanced joint convexity}, and corresponding results}

In this section, we establish the proof of Lemma \ref{lemma: HS advanced joint convexity} and Lemma \ref{lemma: f advanced joint convexity}. Before delving into the proof, we revisit Proposition \ref{prop:advancedShuffling} that directly stem from the application of Lemma \ref{lemma: HS advanced joint convexity} and Lemma \ref{lemma: f advanced joint convexity}.
The proof of Proposition \ref{prop:advancedShuffling} is included as part of the proof of Theorem \ref{thm:shuffling} in Section \ref{sec:proof_prop_advanced_shuffling}.
Similar to the proof of Proposition \ref{prop:advancedShuffling}, for $P = (1- w) P_0 + w P_1$ and $Q = (1 - w) P_0 + w Q_1$ that appear in the analysis sub-sampling \cite{balle2018privacy}, we have the following proposition.

\begin{proposition}
    For $P = (1- w) P_0 + w P_1$ and $Q = (1 - w) P_0 + w Q_1$, we have
    \begin{align*}
        T(P,Q) \geq \mathcal{C}\left( (1 -w) \mathrm{Id} + w T(P_1,Q_1) \right).
    \end{align*}
\end{proposition}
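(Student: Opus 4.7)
The plan is to apply the advanced joint concavity (Lemma~\ref{lemma: f advanced joint convexity}) directly to the shared-component mixture structure of $P$ and $Q$. I would match the notation of that lemma by taking $P_1^{\mathrm{(lem)}} = P_0 = Q_1^{\mathrm{(lem)}}$ (the shared first component), together with $P_2^{\mathrm{(lem)}} = P_1$ and $Q_2^{\mathrm{(lem)}} = Q_1$. Under this identification, the four trade-off functions that the lemma naturally produces are $f_{1,1} = T(P_0, P_0) = \mathrm{Id}$, $f_{1,2} = T(P_0, Q_1)$, $f_{2,1} = T(P_1, P_0)$, and $f_{2,2} = T(P_1, Q_1)$. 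Only $\mathrm{Id}$ and $T(P_1, Q_1)$ appear in the target bound, so the main job is to eliminate the two cross terms $f_{1,2}$ and $f_{2,1}$.

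This elimination is achieved by the parameter choice $\gamma = 0$ and $\eta = 1$ in Lemma~\ref{lemma: f advanced joint convexity}, which is permissible whenever $0 < w < 1$ since then $0 \leq \gamma < w < \eta \leq 1$. The coefficients of $f_{1,2}^*$ and $f_{2,1}^*$ in the bound are $(1-w)\gamma$ and $w(1-\eta)$ respectively, both of which vanish. Simultaneously, the rescaling factors $(1-w)(\eta-\gamma)/(\eta-w)$ and $w(\eta-\gamma)/(w-\gamma)$ inside the definitions of $F_{1,1}$ and $F_{2,2}$ both collapse to $1$, so no distortion is introduced. Substituting into the lemma yields
\[
T(P, Q) \geq \mathcal{C}\!\left(\left((1-w)\mathrm{Id}^* + w\, T(P_1, Q_1)^*\right)^{*}\right).
\]

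The remaining task is to identify this conjugate-based bound with $\mathcal{C}((1-w)\mathrm{Id} + w\, T(P_1, Q_1))$. By the standard convex-duality identity $(h_1^* + h_2^*)^* = h_1 \,\square\, h_2$ (infimal convolution), the inner expression equals $\inf\{(1-w)\mathrm{Id}(a_0) + w\, T(P_1, Q_1)(a_1) : (1-w) a_0 + w a_1 = \alpha\}$, which lies pointwise below the direct sum $(1-w)\mathrm{Id}(\alpha) + w\, T(P_1, Q_1)(\alpha)$ (attained by the diagonal $a_0 = a_1 = \alpha$). The main obstacle is to verify that after $\mathcal{C}$-symmetrization the two expressions coincide: I expect this to follow from the piecewise description in Equation~\eqref{eq:double-conjugate}, since on the initial interval $[0, \bar{x}]$ the diagonal minimizer is optimal in the infimum (so the inf-convolution and the direct sum agree there), and outside this interval $\mathcal{C}$ is fully determined by behavior on $[0, \bar{x}]$ via gluing to the inverse. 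If this last step proves technically awkward, a direct hypothesis-testing derivation yields the same lower bound: for any rejection rule $\phi$, decompose the type I error as $\alpha = (1-w)\mathbb{E}_{P_0}[\phi] + w\mathbb{E}_{P_1}[\phi]$ and bound the type II error $\beta \geq (1-w)\mathrm{Id}(\mathbb{E}_{P_0}[\phi]) + w\, T(P_1, Q_1)(\mathbb{E}_{P_1}[\phi])$ using the definition of $T(P_1,Q_1)$, then optimize over $\phi$ and symmetrize to obtain $\mathcal{C}$.
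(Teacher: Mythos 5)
There is a genuine gap, and it lies exactly where you flag uncertainty. Setting $\gamma=0,\eta=1$ in Lemma~\ref{lemma: f advanced joint convexity} produces $\mathcal{C}\bigl(((1-w)\mathrm{Id}^*+w\,T(P_1,Q_1)^*)^*\bigr)$, whose inner function is the infimal convolution $h(\alpha)=\inf\{(1-w)\mathrm{Id}(a_0)+w\,T(P_1,Q_1)(a_1):(1-w)a_0+wa_1=\alpha\}$; this is \emph{strictly} below the direct sum $g=(1-w)\mathrm{Id}+w\,T(P_1,Q_1)$ on the very interval $[0,\bar{x}]$ where $\mathcal{C}$ just copies its argument, so $\mathcal{C}(h)\lneq\mathcal{C}(g)$ in general. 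Your claim that "the diagonal minimizer is optimal on $[0,\bar x]$" is false: at any $\alpha<\bar x_{f_0}$ one has $f_0'(\alpha)<-1=\mathrm{Id}'$, so the infimum strictly prefers $a_1>\alpha>a_0$, and already the unsymmetrized functions disagree. (Concretely, for $f_0=G_1$, $w=1/2$, $\alpha=0.1$ one gets $h(0.1)\approx 0.72$ while $g(0.1)\approx 0.76$, both equal to their $\mathcal{C}$-values there.) The fallback hypothesis-testing argument yields the same infimal convolution because you decouple $a_0=\mathbb{E}_{P_0}[\phi]$ and $a_1=\mathbb{E}_{P_1}[\phi]$ and optimize over both; it recovers the direct sum only if one uses the constraint that they come from the \emph{same} $\phi$, which forces $a_0=a_1$ in the subsampling regime $P_1=P_0$ tacitly intended by the proposition but not exploited in your write-up.

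The paper's actual proof (patterned on that of Proposition~\ref{prop:advancedShuffling}) does not go through Lemma~\ref{lemma: f advanced joint convexity} with fixed $\gamma,\eta$ at all. It applies Lemma~\ref{lemma:advanced_balle} (Balle et al.'s Theorem 2) at the hockey-stick level together with $H_\gamma(P_1\|P_0)=0$ to obtain $H_{e^{\epsilon'}}(P\|Q)\leq w\,H_{e^{\epsilon}}(P_1\|Q_1)$ under the \emph{affine} relation $e^{\epsilon'}=(1-w)+w e^{\epsilon}$, then converts via the primal--dual identity $\delta=1+f^*(-e^\epsilon)$ and an affine change of variable inside the conjugate. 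That affine shift, $z\mapsto wz-(1-w)$, is precisely what produces the $(1-w)\mathrm{Id}(x)$ term after conjugation, and it is not expressible by any fixed pair $\gamma<w<\eta$ in Lemma~\ref{lemma: f advanced joint convexity}, where $-e^{\epsilon_0},-e^{\epsilon_1}$ are constrained to be \emph{proportional} to $-e^{\epsilon'}$. To repair your argument, either run the hypothesis-testing route explicitly with $P_1=P_0$ (so $a_0=a_1$ and $\beta_\phi\geq g(\alpha_\phi)$ pointwise, giving the direct sum immediately), or mirror the paper's hockey-stick computation.
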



\begin{proof}[Proof of Lemma \ref{lemma: HS advanced joint convexity}]
We first invoke an important equality from \cite{balle2018privacy}.
    \begin{align} \label{eqn: hockey-stick equality}
        H_{\alpha} (P \| Q) := \sup_{E}\left\{ P(E) - \alpha Q(E)\right\} = \int \left[ p(z) - \alpha q(z) \right]_{+} dz.
    \end{align}
    According to \cite{barthe2013beyond}, we have $\mathcal{A}$ is $(\epsilon, \delta)$-differentially private if and only if $H_{e^{\epsilon}}(\mathcal{A}(D_0) \| \mathcal{A}(D_1)) \leq \delta$ for every neighboring $D_0$ and $D_1$. 
    We now recall the following two equations which are constraints on $\epsilon_0,\epsilon_1, \gamma,$ and $\eta$:
    \begin{align}
        \label{eqn: joint epsilon}
        e^{\epsilon'} = (1 - w)e^{\epsilon_0} + w e^{\epsilon_1}
    \end{align}
    and 
    \begin{align}
        \label{eqn: gamma and beta}
        w e^{\epsilon'} = (1 - w)\gamma e^{\epsilon_0} + w \eta e^{\epsilon_1}.
    \end{align}
    It is evident from Equations \eqref{eqn: joint epsilon} and \eqref{eqn: gamma and beta} that 
    \begin{align*}
        \exp(\epsilon_0) (1 - w) \gamma + \exp(\epsilon_1) w \eta = \exp(\epsilon') w,
    \end{align*}
    and 
    \begin{align*}
        \exp(\epsilon_0) (1 - w) (1 - \gamma) + \exp(\epsilon_1) w (1 - \eta) =\ & \exp(\epsilon') (1 - w).
    \end{align*}
    Thus, we have
    \begin{align*}
        &(1 - w) P_{1} + w P_{2} - e^{\epsilon'} \left( (1-w) {Q_1} + w {Q_2} \right)\\
        =\ & (1 - w) \left(P_{1} - e^{\epsilon_0} (1 - \gamma) {Q_1} - e^{\epsilon_0} \gamma {Q_2} \right) + w \left(P_2 - e^{\epsilon_1} (1 - \eta) {Q_1} - e^{\epsilon_1} \eta Q_2 \right).
    \end{align*}
    This completes the proof of proposition by equation \eqref{eqn: hockey-stick equality}. 
\end{proof}

\begin{proof}[Proof of Lemma \ref{lemma: f advanced joint convexity}]
 According to Lemma \ref{lemma: HS advanced joint convexity} and Proposition 2.12 in \cite{dong2019gaussian}, we aim to find a trade-off function $F$ such that
    \begin{align} \label{eqn: trade off function}
    \begin{split}
        F^{*}(- e^{\epsilon'}) =\ & 
        (1 - w)(1 - \gamma) f_{0,0}^*(- e^{\epsilon_0}) + w(1 - \eta) f_{1,0}^* \left(- e^{\epsilon_1} \right)\\
        &+ (1 - w) \gamma f_{0,1}^{*} (- e^{\epsilon_0}) + w \eta f_{1,1}^{*} \left(- e^{\epsilon_1} \right), 
    \end{split}
    \end{align}
    where $\epsilon_0, \epsilon_1, \gamma, \eta$ satisfy equations \eqref{eqn: joint epsilon} and \eqref{eqn: gamma and beta}. 
Let $y = -e^{\epsilon'} < -1$ and, for fixed $\gamma < w < \eta$, Equations \eqref{eqn: joint epsilon} and \eqref{eqn: gamma and beta} imply that 
\begin{align*}
    - e^{\epsilon_0} = \frac{y (\eta - w)}{(1 - w) (\eta - \gamma)}, \quad \hbox{ and } \quad     - e^{\epsilon_1} = \frac{(w - \gamma) y}{w (\eta - \gamma)}.
\end{align*}
Therefore, $F(x)$ is given by the double conjugate: 
\begin{align*}
    F(x) =\ & \sup_{y} xy - F^*(y)\\
    =\ & \sup_{y} xy - (1 - w)(1 - \gamma) f_{0,0}^*\left(\frac{y (\eta - w)}{(1 - w) (\eta - \gamma)} \right) - w(1 - \eta) f_{1,0}^* \left(\frac{(w - \gamma) y}{w (\eta - \gamma)}\right)\\
    &- (1 - w) \gamma f_{0,1}^{*} \left(\frac{y (\eta - w)}{(1 - w) (\eta - \gamma)} \right) - w \eta f_{1,1}^{*} \left(\frac{(w - \gamma) y}{w (\eta - \gamma)}\right).
\end{align*}
For $i = 0,1$, define
\begin{align*}
    F_{0, i}(x) = f_{0,i} \left(\frac{x (1 - w) (\eta - \gamma)}{(\eta - w)} \right),\quad \hbox{and} \quad
    F_{1, i}(x) := f_{1,i} \left(\frac{ x w (\eta - \gamma)}{(w - \gamma)}\right).
\end{align*}
Thus, we have 
\begin{align*}
    F_{0, i}^*(y) = f_{0,i}^*\left(\frac{y (\eta - w)}{(1 - w) (\eta - \gamma)} \right),\quad \hbox{and} \quad
    F_{1, i}^*(y) = f_{1,i}^{*} \left(\frac{(w - \gamma) y}{w (\eta - \gamma)}\right).
\end{align*}
Therefore, it holds
\begin{align*}
    F(x) =\ & \sup_{y}\left\{ xy - (1 - w)(1 - \gamma) F_{0,0}^*\left(y \right) - w(1 - \eta) F_{1,0}^* \left(y \right) - (1 - w) \gamma F_{0,1}^{*} \left(y \right) - w \eta F_{1,1}^{*} \left(y\right)\right\} \\
    =\ & \bigg((1 - w)(1 - \gamma) F_{0,0}^* + w(1 - \eta) F_{1,0}^* + (1 - w) \gamma F_{0,1}^{*} + w \eta F_{1,1}^{*}  \bigg)^{*} (x),
\end{align*}
for all possible $\gamma < w < \eta$. Similar results for $\eta < w < \gamma$ can be obtained by symmetry. 

When $\eta = \gamma = w$, we would like to show
    \begin{align*}
        &T((1 - w) P_0 + w P_1, (1-w) Q_0 + w Q_1) 
        \\
        \geq &(1 - w) T \left(P_0 \| (1 - w) Q_0 + w Q_1 \right) + w T \left(P_1 \| (1 - w) Q_0 + w Q_1 \right).
    \end{align*}
Rewrite $F_1 = T \left(P_0 \| (1 - w) Q_0 + w Q_1 \right)$ and $F_2 = T \left(P_1 \| (1 - w) Q_0 + w Q_1 \right)$.
Lemma \ref{lemma: HS advanced joint convexity} implies that 
    \begin{align*}
        F^*(- e^{\epsilon'}) = \inf_{e^{\epsilon_0}, e^{\epsilon_1}} \left\{ (1 - w) F_1^{*}( - e^{\epsilon_0}) + w F_2^{*} ( - e^{\epsilon_1}) | (1 - w) e^{\epsilon_0} + w e^{\epsilon_1} = e^{\epsilon'} \right\},
    \end{align*}
    where the constraint $(1 - w) e^{\epsilon_0} + w e^{\epsilon_1} = e^{\epsilon'}$ comes from equation \eqref{eqn: joint epsilon} and \eqref{eqn: gamma and beta}. 
    Thus, for any $x \in [0,1]$, we have
    \begin{align*}
        F(x) =\ & \sup_{- e^{\epsilon'}, \epsilon' > 0} - x e^{\epsilon'} - F^{*} (- e^{\epsilon'})\\
        =\ & \sup_{e^{\epsilon'}, \epsilon' > 0} - x e^{\epsilon'} - \inf_{e^{\epsilon_0}, e^{\epsilon_1}} \left\{ (1 - w) F_1^{*}( - e^{\epsilon_0}) + w F_2^{*} ( - e^{\epsilon_1}) | (1 - w) e^{\epsilon_0} + w e^{\epsilon_1} = e^{\epsilon'} \right\}\\
        =\ & \sup_{e^{\epsilon'}} \sup_{e^{\epsilon_0}, e^{\epsilon_1}:(1 - w) e^{\epsilon_0} + w e^{\epsilon_1} = e^{\epsilon'} } - x e^{\epsilon'} - \left\{ (1 - w) F_1^{*}( - e^{\epsilon_0}) + w F_2^{*} ( - e^{\epsilon_1}) \right\}.
    \end{align*}
    According to the properties of infimal convolution in convex analysis (cf., Exercise 12 of Chapter 3.3 in Page 57 of \cite{MR2184742}), we get
    \begin{align*}
         F(x) =\ & \sup_{e^{\epsilon_0}, e^{\epsilon_1}} - (1 - w) x e^{\epsilon_0} - w x e^{\epsilon_1} - (1 - w) F_1^{*}( - e^{\epsilon_0}) - w F_2^{*} ( - e^{\epsilon_1})\\
         =\ & \sup_{e^{\epsilon_0}} - (1 - w) x e^{\epsilon_0} - (1 - w) F_1^{*}( - e^{\epsilon_0}) + \sup_{e^{\epsilon_1}} - w x e^{\epsilon_1} - w F_2^{*} ( - e^{\epsilon_1})\\
         =\ & (1 - w) F_1(x)^{**} + w F_2 (x)^{**} = (1 - w) F_1(x) + w F_2 (x).
    \end{align*}    
    This completes the proof of this corollary. 
\end{proof}

\subsubsection{Proof of Proposition \ref{prop:advancedShuffling}}
\label{sec:proof_prop_advanced_shuffling}

Let $f_0 = \mathcal{C}(T(P_0,Q_0))$ be the symmetrization of $T(P_0,Q_0)$ and rewrite $f_{P,Q} = T(P,Q).$

\begin{proof}[Proof of Proposition \ref{prop:advancedShuffling}]
Since $P = (1-w)P_0 + w Q_0$ and $Q = (1 - w)Q_0 + w P_0$, according to 
Theorem 2 in \cite{balle2018privacy}, we have
\begin{align*}
    H_{e^{\epsilon'}}(P \| Q) &= H_{e^{\epsilon'}}((1-w)P_0 + w Q_0 \| (1 - w) Q_0 + w P_0)\\
    &= H_{e^{\epsilon'}}\left(2w \cdot \frac{ P_0 + Q_0}{2} + (1-2w)P_0 \bigg\| 2w \cdot \frac{ P_0 + Q_0}{2} + (1 - 2w) Q_0 \right)\\
    &\leq (1 - 2 w) H_{e^{\epsilon}} \left(P_0 \bigg\| (1 - \eta) \cdot \frac{ P_0 + Q_0}{2} + \eta Q_0 \right)\\
    &\leq (1 - 2 w) \left(\frac{1}{2} + \frac{\eta}{2} \right) H_{e^{\epsilon}} \left(P_0 \| Q_0 \right)
\end{align*}
with $e^{\epsilon'} = (1- 2 w) e^{\epsilon} + 2 w$ and $\eta = e^{\epsilon'}/e^{\epsilon}$. 
It is clear from the definition that $\eta \leq 1$. 
Therefore, 
\begin{align*}
    H_{e^{\epsilon'}}(P \| Q) \leq (1 - 2 w) H_{e^{\epsilon}} \left(P_0 \| Q_0 \right).
\end{align*}
Let $y = - e^{\epsilon'}$.
Proposition 2.12 in \cite{dong2019gaussian} implies that 
\begin{align*}
    1 + f_{P,Q}^*(y) \leq (1 - 2w) \left(1 + f_{0}^*\left(\frac{y + 2 w}{1 - 2 w} \right) \right)
\end{align*}
Therefore, 
\begin{align*}
    f_{P,Q}(x) =\ & \sup_{y} xy - f_{P,Q}^*(y)\\
    \geq\ & \bigg( - 2w_0 + (1 - 2w_0) f_{0}^*\left(\frac{y + 2 w}{1 - 2 w} \right) \bigg)^*
\end{align*}
By properties of convex conjugate, we have
\begin{align*}
    f_{P,Q}(x) \geq\ & 2w (1-x) + (1 - 2w) f_{0}^{**} (x)\\
    =\ & 2w (1-x) + (1 - 2w) f_{0} (x)
\end{align*}

According to Proposition F.2 in \cite{dong2019gaussian}, the shuffling model is $\mathcal{C}(f_{P,Q})$-DP.
\end{proof}

\section{Tightness of Lemma \ref{lemma:inequality}}
As we see from Proposition \ref{prop:shuffling}, Lemma \ref{lemma:inequality} holds with equality.
However, in general, Lemma \ref{lemma:inequality} is not tight (cf., Figure \ref{fig:compare_shuffling}).
From the technical proof of Proposition \ref{prop:shuffling}, we obtain that $\frac{p_{\bfw}}{q_\bfw} = \sum_{i=1}^m w_i p_i/q_i$, which motivates us to derive Proposition \ref{prop:iff-equality}.

\subsection{Proof of Proposition \ref{prop:iff-equality}}
    By Theorem 2.10 in \cite{dong2019gaussian}, we know that for distributions $P_{\bfw}, Q_{\bfw}$ and ${P_I}, {Q_I}$, it holds
\begin{align*}
    T(P_{\bfw},Q_{\bfw}) \geq T({P_I}, {Q_I}) \quad \hbox{iff} \quad (P_{\bfw},Q_{\bfw}) \succeq_{Blackwell} ({P_I}, {Q_I}).
\end{align*}
We define the Blackwell order as in, for example, \cite{blackwell1953equivalent, dong2019gaussian,raginsky2011shannon}. 
Precisely, if there are probability distributions $P$ and $Q$ on $Y$, as well as probability distributions $P'$ and $Q'$ on $Z$, and a randomized algorithm $\text{Proc}: Y \mapsto Z$ such that $\text{Proc}(P) = P', \text{Proc}(Q) = Q'$, then we write $(P,Q) \succeq_{Blackwell} (P', Q')$.


Let $F_0$ be the cumulative distribution function of the log-likelihood ratio $\log \frac{dP_{\bfw}}{d Q_{\bfw}}(X)$ for $X\sim P_{\bfw}$. $G_0$ is defined analogously by replacing $P_\bfw$ and $Q_\bfw$ with ${P_I}$ and ${Q_I}$, respectively.
Furthermore, we define the perfect log-likelihood function $\Tilde{F}_1(x)$ and $\Tilde{G}_1(x)$ to satisfy the following: 
\begin{align*}
    \Tilde{F}_1(x) = Q_{\bfw} \left(\log \left(\frac{d Q_{\bfw}}{d P_{\bfw}} \right) - E \leq x \right),
\end{align*}
and
\begin{align*}
     \Tilde{G}_1(x) = {Q_I} \left(\log \left(\frac{d {Q_I}}{d {P_I}} \right) - E \leq x \right),
\end{align*}
where $E$ is a random variable such that, under $Q_{\bfw}$, 
$E$ is independent of $\log \frac{d Q_{\bfw}}{d P_{\bfw}}$ and is distributed according to an exponential distribution with support $\mathbb{R}_+$ and cumulative distribution function $1 - e^{-x}$ for all $x \geq 0$. 
By Theorem 3 in \cite{mu2021blackwell}, we know
\begin{align*}
    \Tilde{F}_1(x) \geq \Tilde{G}_1(x),\qquad \text{for all $x \in \mathbb{R}$},
\end{align*}
if and only if
\begin{align*}
    (P_{\bfw},Q_{\bfw}) \succeq_{Blackwell} ({P_I}, {Q_I}).
\end{align*}
Therefore, equality in Lemma \ref{lemma:inequality} holds if and only if $\Tilde{F}_1(x) = \Tilde{G}_1(x)$ for all $x \in \mathbb{R}$.
The following equations \eqref{Tilde{F}_1} and \eqref{Tilde{G}_1} is appear in the proof of Lemma 1 in \cite{mu2021blackwell}. For the sake of thoroughness, we will include a summary of the proof later in this section for reference.
We have
\begin{align} \label{Tilde{F}_1}
    \Tilde{F}_{1}(x) = \int_{-x}^{\infty} F_0(v) e^{-v} dv,
\end{align}
and
\begin{align}
\label{Tilde{G}_1}
    \Tilde{G}_{1}(x) = \int_{-x}^{\infty} G_0(v) e^{-v} dv.
\end{align}

Since $F_0$ and $G_0$ are continuous, equality holds for all $x$ if and only if $F_0(v) e^{-v} = G_0(v) e^{-v}$ by fundamental theorem of calculus. 
We conclude that equality in Lemma \ref{lemma:inequality} holds if and only if 
\begin{align*}
    \frac{w_1 p_1 + w_2 p_2}{w_1 q_1 + w_2 q_2} \overset{\mathbb{P}}{=} w_1 \frac{p_1}{q_1} + w_2 \frac{p_2}{q_2}
\end{align*}
with respect to $P_{\bfw}$.

\begin{proof}[Proof of Equation \eqref{Tilde{F}_1} and \eqref{Tilde{G}_1}]
    We define 
    \begin{align*}
        F_1(v) = Q_{\bfw} \left(\log \frac{dQ_{\bfw}}{d  P_{\bfw}} \leq v \right)
    \end{align*}
    and $\Tilde{F}_1$ to be the convolution of the distribution $F_1$ with the distribution of $-E$, and thus can be written as
    \begin{align*}
        \Tilde{F}_1(x) =\ & \int_{\mathbb{R}} Q_{\bfw}(-E \leq x - u) dF_1(u)\\
        =\ & F_1(x) + e^x \int_{x}^{\infty} e^{-u} dF_1(u)\\
        =\ & \int_{-\infty}^{x} d F_1(u) + e^x \int_{x}^{\infty} e^{-u} dF_1(u).
    \end{align*}
    Moreover, we substitute that $d F_1(u) = -e^u d F_0(-u)$ into equations above. Then, it holds
    \begin{align*}
        \Tilde{F}_1(x) =\ & \int_{-\infty}^{x} - e^u d F_0(-u) + e^x \int_{x}^{\infty}- dF_0(-u)\\
        =\ & \int_{-x}^{\infty} e^{-u} d F_0(u) + e^x \int_{-\infty}^{-x} dF_0(u)\\
        =\ & \int_{-x}^{\infty} - e^u d F_0(u) + e^x F_0(- x).
    \end{align*}
We conclude equation \eqref{Tilde{F}_1} via    integral by part. Equation \eqref{Tilde{G}_1} can be proved similarly. 
\end{proof}

\subsection{Other examples where Lemma \ref{lemma:inequality} holds with equality}
According to Proposition \ref{prop:iff-equality}, we may find other examples in which Lemma \ref{lemma:inequality} holds with equality.
 Straightforward examples are that the support of $w_1 P_1 + w_2P_2$ and the support of $w_1Q_1 + w_2 Q_2$ are disjoint sets.

 \begin{example}
Consider $P_1 = \mathrm{Unif}([0,1])$, $Q_1 = \mathrm{Unif}([2,3])$, $P_2 = \mathrm{Unif}([-1,0])$, and $Q_2 = \mathrm{Unif}([3,4])$. It is easy to verify that the condition in Proposition \ref{prop:iff-equality} holds.
In fact, we have
\begin{align*}
    \frac{P_{\bfw}}{Q_{\bfw}}(X) = \frac{P_1}{Q_1}(X) = \frac{P_2}{Q_2}(X) = \infty
\end{align*}
as the support of $Q_1$ and $Q_2$ are disjoint with $[-1,1]$.
 \end{example}

\begin{example}
Another example where the equality holds in Lemma \ref{lemma:inequality} is that $P_{1}$ and $Q_{1}$ are two probability distributions supported on the $x$-axis, and $P_{2}$ and $Q_{2}$ are two probability distributions supported on the $y$-axis. 
If the point mass at $0$ are all $0$, then one can show that $P_i$ and $Q_i$ for $i \leq 2$ satisfy the equality condition in Lemma \ref{lemma:inequality}. 
To see this, let $X = (X_1, X_2) \in \mathbb{R}^2$ be a random variable with distribution $w_1 P_{1} + w_2 P_{2}$. Therefore, $X$ is supported on the axes. 
For any $t \geq 0$, the right hand side of Proposition \ref{prop:iff-equality} becomes
    \begin{align*}
        &\mathbb{P} \left(\frac{w_1 p_1 + w_2 p_2}{w_1 q_1 + w_2 q_2} (X) \leq t \right)\\ 
        =\ & w_1 \mathbb{P} \left(\frac{w_1 p_1 + w_2 p_2}{w_1 q_1 + w_2 q_2} (X) \leq t \bigg|  X \sim P_1 \right) + w_2 \mathbb{P} \left(\frac{w_1 p_1 + w_2 p_2}{w_1 q_1 + w_2 q_2} (X) \leq t \bigg|  X \sim P_2 \right) \\ 
        =\ & w_1 \mathbb{P} \left(\frac{p_1}{q_1} (X) \leq t \bigg|  X \sim P_1 \right) + w_2 \mathbb{P} \left(\frac{p_2}{q_2} (X) \leq t \bigg|  X \sim P_2 \right),
    \end{align*}
    which is exactly the right-hand side of Proposition \ref{prop:iff-equality}. 
\end{example}

\subsection{Comparisons between Lemma \ref{lemma:inequality} and Lemma \ref{lemma: f advanced joint convexity}}
In this section, we compare Lemma \ref{lemma:inequality} with Lemma \ref{lemma: f advanced joint convexity} using other examples besides that given by  Figure \ref{fig:Tightness}.
The first example is a simple case appears in sub-sampled Gaussian mechanisms.

\begin{figure}
    \centering
    \includegraphics[scale = 0.4]{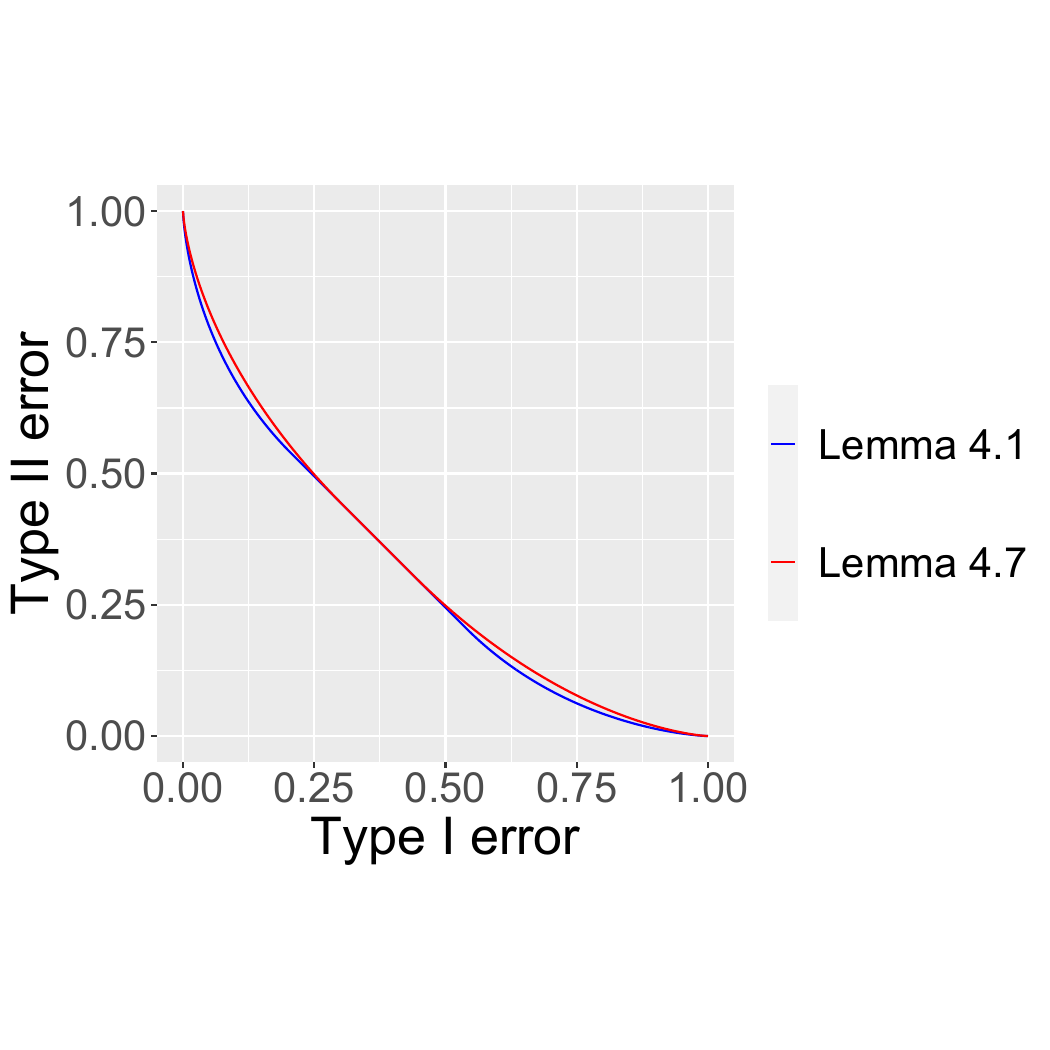}
    \caption{Example \ref{exam:sub-sample} with $\mu=1$ and $w = 1/3$.}
    \label{fig:sampling}
\end{figure}
\begin{example}
\label{exam:sub-sample}
    Let $P_0 = \mathcal{N}(0,1)$ and $Q_0 = \mathcal{N}(\mu,1)$.    
    Then, using Lemma \ref{lemma: f advanced joint convexity}, we have $T(P_0,w P_0 + (1 - w) Q_0) \geq \mathcal{C}(w \cdot \mathrm{Id} + (1 - w) T(P_0,Q_0)).$
    Lemma \ref{lemma:inequality} leads  to the following lower bound 
    \begin{align*}
    T(P_0,w P_0 + (1 - w) Q_0)(\alpha) \geq \left\{
     \begin{array}{lll}
       (1-w)\Phi( \Phi^{-1}\left(\frac{1 - \alpha}{1-w} \right) - \mu),   &  \alpha > 1 - (1 -w) \Phi(\frac{\mu}{2}), \\
        w + (1- w) \Phi\left(\Phi^{-1}\left(1 - \frac{\alpha}{1 - w}\right) - \mu\right) , & \alpha < (1 - w) (1 - \Phi(\frac{\mu}{2})),  \\
         \hbox{linear}, & \hbox{otherwise}.
     \end{array}
    \right.
    \end{align*}
As we see from Figure \ref{fig:sampling}, Lemma \ref{lemma: f advanced joint convexity} leads to a slightly tighter bound.
\end{example}

Another example is an extreme case where the mixture distributions are not distinguishable at all.

\begin{example}
Consider the case $(\frac{1}{2}P + \frac{1}{2} Q) \hbox {v.s.} (\frac{1}{2}Q + \frac{1}{2} P)$ where two distributions are not distinguishable at all. We have $T\left((\frac{1}{2}P + \frac{1}{2} Q), (\frac{1}{2}Q + \frac{1}{2} P) \right) = \mathrm{Id}$ which is a special case of Proposition \ref{prop:advancedShuffling}.
The advanced joint concavity Proposition \ref{prop:advancedShuffling} leads to a sharp lower bound $\mathrm{Id}$.
However, it is obvious that Lemma \ref{lemma:inequality} implies a loose bound when $P\neq Q$.
\end{example}

\end{document}